\documentclass[twoside,11pt]{article}

%

\usepackage{jmlr2e}
\usepackage{graphicx,subfigure}
\usepackage{epsfig} 
\usepackage{mathptmx} 
\usepackage{times} 
\usepackage{amsmath} 
\usepackage{amssymb}  
\usepackage{dsfont}
\usepackage{verbatim}
\newcommand{\B}[1]{{\bf #1}}
\renewcommand{\P}{{\mathbb P}}
\newcommand{\E}{{\mathbb E}}
\newcommand{\R}{{\mathbb R}}

\newcommand{\cH}{{\mathcal{H}}}

\newcommand{\Diag}{\mbox{diag}}
\newcommand{\norm}[1]{\Vert #1\Vert}

\newcommand{\Var}{\mbox{Var}}
\newcommand{\Trace}{\mbox{tr}}

\newcommand{\cU}{\mathcal{U}}
\newcommand{\cW}{\mathcal{W}}


\newcommand{\half}{\frac{1}{2}}
\newcommand{\one}{\mathbf{1}}
\newcommand{\Lam}{\Lambda}
\newcommand{\lam}{\lambda}
\newcommand{\sig}{\sigma}
\newcommand{\Sig}{\Sigma}
\newcommand{\sigt}{\sig^2}
\newcommand{\gam}{\gamma}
\newcommand{\btheta}{{\bar\theta}}
\newcommand{\vE}{{\E_v}}
\newcommand{\vVar}{{\Var_v}}




\ShortHeadings{Hyperparameter Group Lasso}{Aravkin and Burke and Chiuso and Pillonetto}
\firstpageno{1}

\begin{document}

\title{Convex vs nonconvex approaches for sparse estimation:\\ GLasso, Multiple Kernel Learning
and Hyperparameter GLasso}

\author{
        \name Aleksandr Aravkin \email saravkin@us.ibm.com\\
        \addr IBM T.J. Watson Research Center\\
       Yorktown Heights, NY, 10598
        \AND
        \name James V. Burke \email burke@math.washington.edu\\
        \addr Department of Mathematics\\
        University of Washington\\
        Seattle, WA
        \AND
        \name Alessandro Chiuso \email chiuso@dei.unipd.it\\
        \addr Department of Information Engineering\\
        University of Padova\\
        Padova, Italy
        \AND
        \name Gianluigi Pillonetto \email giapi@dei.unipd.it\\
        \addr Department of Information Engineering\\
        University of Padova\\
        Padova, Italy}
%
\editor{}

\maketitle

\begin{abstract}
The popular Lasso approach for sparse estimation can be derived via marginalization of a joint density associated with a particular stochastic model. A different marginalization of the same probabilistic model leads to a different non-convex estimator where hyperparameters are optimized. Extending these arguments to problems where groups of variables have to be estimated, we study a computational scheme for sparse estimation that differs from the Group Lasso. Although the underlying optimization problem defining this estimator is non-convex, an initialization strategy based on a univariate Bayesian forward selection scheme is presented. This also allows us to define an effective non-convex estimator where only one scalar variable is involved in the optimization process. Theoretical arguments, independent of the correctness of the priors entering the sparse model, are included to clarify the advantages of this non-convex technique in comparison with  other  convex estimators. Numerical experiments are also used to compare the performance of these approaches.
\end{abstract}

\begin{keywords}
Lasso; Group Lasso; Multiple Kernel Learning; Bayesian regularization; marginal likelihood
\end{keywords}


\section{Introduction}

We consider sparse estimation in a linear regression model where the explanatory
factors $\theta\in \R^m$ are naturally grouped so that $\theta$ is partitioned as
$\theta = [{\theta^{(1)}}^\top \quad {\theta^{(2)}}^\top \quad \ldots \quad {\theta^{(p)}}^\top]^\top$.
In this setting we assume that $\theta$ is group (or block) sparse in the sense that many
of the constituent vectors $\theta^{(i)}$ are zero or have a negligible influence on the
output $y\in\R^n$. In addition, we assume that the number of \emph{unknowns} $m$ is large,
possibly larger than the size of the available data $n$.
Interest in general sparsity estimation and optimization
has attracted the interest of many researchers in statistics, machine learning, and signal processing
with numerous applications in feature selection, compressed sensing, and selective shrinkage
\citep{Hastie90,Lasso1996,Donoho2006,CandesTao2007}. The motivation for our study of
the group sparsity problem comes from the ``dynamic Bayesian network'' scenario identification
problem as discussed in \citep{ChiusoPAuto2011,ChiusoPNIPS2010,ChiusoPCDC2010}.
In a dynamic network scenario
the ``explanatory variables'' are often the past histories of different input signals with the
``groups'' $\theta^{(i)}$ representing the impulse responses\footnote{An thus may, in principle,
be infinite dimensional.} describing the relationship between the $i$-th input and the output
$y$.
This application informs our view of the group sparsity problem as well as our measures of success for
a particular estimation procedure.

Several approaches have been put forward in the literature for joint estimation and
variable selection problems. We cite the well known Lasso \citep{Lasso1996}, Least Angle
Regression (LAR) \citep{LARS2004}, their ``group'' versions Group Lasso (GLasso) and Group Least
Angle Regression (GLAR) \citep{Yuan_JRSS_B_2006}, Multiple Kernel Learning (MKL)
\citep{Bach_MKL_2004,Evgeniou05,PAMI10}. 
Methods based on hierarchical Bayesian models have also
been considered such as Automatic Relevance Determination (ARD) \citep{McKayARD}, the Relevance
Vector Machine (RVM) \citep{Tipping2001}, and the exponential hyperprior in
\citep{ChiusoPNIPS2010,ChiusoPAuto2011}. The Bayesian approach considered in
\citep{ChiusoPNIPS2010,ChiusoPAuto2011} and further developed in this paper is intimately related
to \citep{McKayARD,Tipping2001}; in fact, the exponential hyperprior algorithm in
\citep{ChiusoPNIPS2010,ChiusoPAuto2011} is a penalized version of ARD.
A variational approach based on the golden standard spike and slab prior,
also called  two-groups prior \citep{EfronTwogroup}, has been also recently proposed in 
\citep{Titsias}.

An interesting series of
papers \citep{Wipf_IEEE_TSP_2007,Wipf_ARD_NIPS_2007,Wipf_IEEE_TIT_2011} provide a nice link
between penalized regression problems like Lasso, also called type-I methods, and Bayesian methods
(like RVM \citep{Tipping2001} and ARD \citep{McKayARD}) with hierarchical hyperpriors where the
``hyperparameters'' are estimated via maximizing the marginal likelihood and then inserted in the
Bayesian model following the Empirical Bayes paradigm \citep{Maritz:1989}; these latter methods
are also known as type-II methods \citep{BergerBook}. Note that this Empirical Bayes paradigm has
also been recently used in the context of System Identification
\citep{DNPAut2010,PillonettoCD_Auto2011,LjungIFAC2011}.

In \citep{Wipf_ARD_NIPS_2007,Wipf_IEEE_TIT_2011} it is argued that type-II methods have advantages
over type-I methods; some of these advantages are related to the fact that, under suitable
assumptions, the former can be written in the form of type-I with the addition of
a non-separable penalty term
(a function $g(x_1,..,x_n)$ is non-separable if it cannot be written as
$g(x_1,..,x_n)=\sum_{i=1}^{n} h(x_i)$). The analysis in \citep{Wipf_IEEE_TIT_2011} also suggests
that in the low noise regime the type-II approach results in a ``tighter'' approximation to the
$\ell_0$ norm. This is supported by experimental evidence showing that these Bayesian approaches
perform well in practice.  Our experience is that the approach based on the marginal likelihood is
particularly robust w.r.t. noise regardless of the ``correctness'' of the Bayesian prior.

Motivated by the nice performance of the exponential hyperprior approach introduced in the
dynamic network identification scenario \citep{ChiusoPNIPS2010,ChiusoPAuto2011}, we provide some
new insights clarifying the above issues. The main contributions are as follows:
\begin{enumerate}
\item[(i)] in the first part of the paper we discuss the relation among Lasso (and GLasso), the
Exponential Hyperprior (HGLasso algorithm hereafter, for reasons which will become clear later on)
and MKL by putting all these methods in a common Bayesian framework (similar to that discussed in
\citep{BayesianLasso}). Lasso/GLasso and MKL boil down to convex optimization problems, 
leading to identical estimators, while
HGLasso does not.
\item [(ii)] All these methods are then compared in terms of optimality (KKT)
conditions and tradeoffs between sparsity and shrinkage are studied illustrating the advantages of
HGLasso over GLasso (or, equivalently, MKL). Also the properties of Empirical Bayes estimators which form the basis of our computational scheme are studied in terms of their Mean Square Error properties;
this is first established in the simplest case of orthogonal regressors and then extended to more
general cases allowing for the regressors to be  realizations from, possibly correlated, stochastic
processes. This,  of course, is of paramount importance for the system identification scenario
studied in \citep{ChiusoPNIPS2010,ChiusoPAuto2011}. 

Our analysis avoids assumptions on the
correctness of the priors entering the stochastic model and clarifies why HGLasso is likely to
provide more sparse and accurate estimates in comparison with the other two convex estimators.
As a byproduct, our study also clarifies the asymptotic properties of ARD.

\item[(iii)] Since HGLasso requires solving non-convex, and possibly high-dimensional, optimization
problems we introduce a version of our computational scheme which can be used as an
initialization for the full non-convex search requiring optimization with respect to only
one scalar variable representing a common scale factor for the hyperparameters.
Such Bayesian schemes with a hyperprior having a common scale factor, or more
generally group problems in which each group is described by one hyperparameter, can be seen as
instances of ``Stein estimators'' \citep{Stein1961,EfronMorris1973,SteinAS1981} and have close
connections to the non-negative garrote estimator \citep{Breiman1995}. The initialization we
propose is based on a selection scheme which departs from classical Bayesian variable selection
algorithms \citep{GeorgeJASA1993,GeorgreFoster_Biometrika_2000,ScottBerger_AS2010}. These
latter methods are based on the introduction of binary (Bernoulli) latent variables. Instead our
strategy involves a ``forward selection'' type of procedure which may be seen as an instance of
the ``screening'' type of approach for variable selection discussed in
\citep{ForwardSelectionJASA2009}; note however that while classical forward selection procedures
work in ``parameter space'' our forward selection is performed in hyperparameter space through the
marginal posterior (i.e. once the parameters $\theta$ are integrated out); in the asymptotic regime this procedure is equivalent to performing forward selection using BIC as a criterion. This ``finite data'' Bayesian flavor seems to be a key
feature which makes the procedure remarkably robust as the experimental results confirm. Note that backward and forward-backward versions of this procedure have also been tested with no notable differences. 
\item[(iv)] Extensive numerical experiments involving artificial and real data
are included which confirm the superiority of
HGLasso.\\
\end{enumerate}

The paper is  organized as follows. In Section \ref{LassoandALTERNATIVE} we
introduce the Lasso approach in a Bayesian framework, as well as
another estimator, namely HLasso, that requires the optimization of
hyperparameters. Section \ref{GLassoHGLasso} extends the arguments
to a group version of the sparse estimation problem introducing
GLasso and the group version of HLasso which we call HGLasso. In
Section \ref{MKLandHGLasso} the relationship between HGLasso and MKL is discussed, reviewing 
the equivalence between GLasso and MKL. 
Section \ref{SparsvsSh} clarifies the advantages
of HGLasso over GLasso and MKL on a simple example. In Section \ref{sec:MSE} the Mean Squared
Error properties of the Empirical Bayes estimators are studied, including their asymptotic
behavior. In Section \ref{Impl} we discuss the implementation of our computational scheme, also
deriving a version of HGLasso that requires the optimization of an objective only with respect to
one scalar variable. Section \ref{sec:sim} reports numerical experiments involving artificial and
real data, also comparing the new approach with the \emph{adaptive Lasso} 
described in \citep{AdaptiveLasso}.  Some conclusions end the paper.

\section{Lasso and HLasso}\label{LassoandALTERNATIVE}

\begin{figure}
\begin{center}
\includegraphics[width=0.7\linewidth,angle=0]{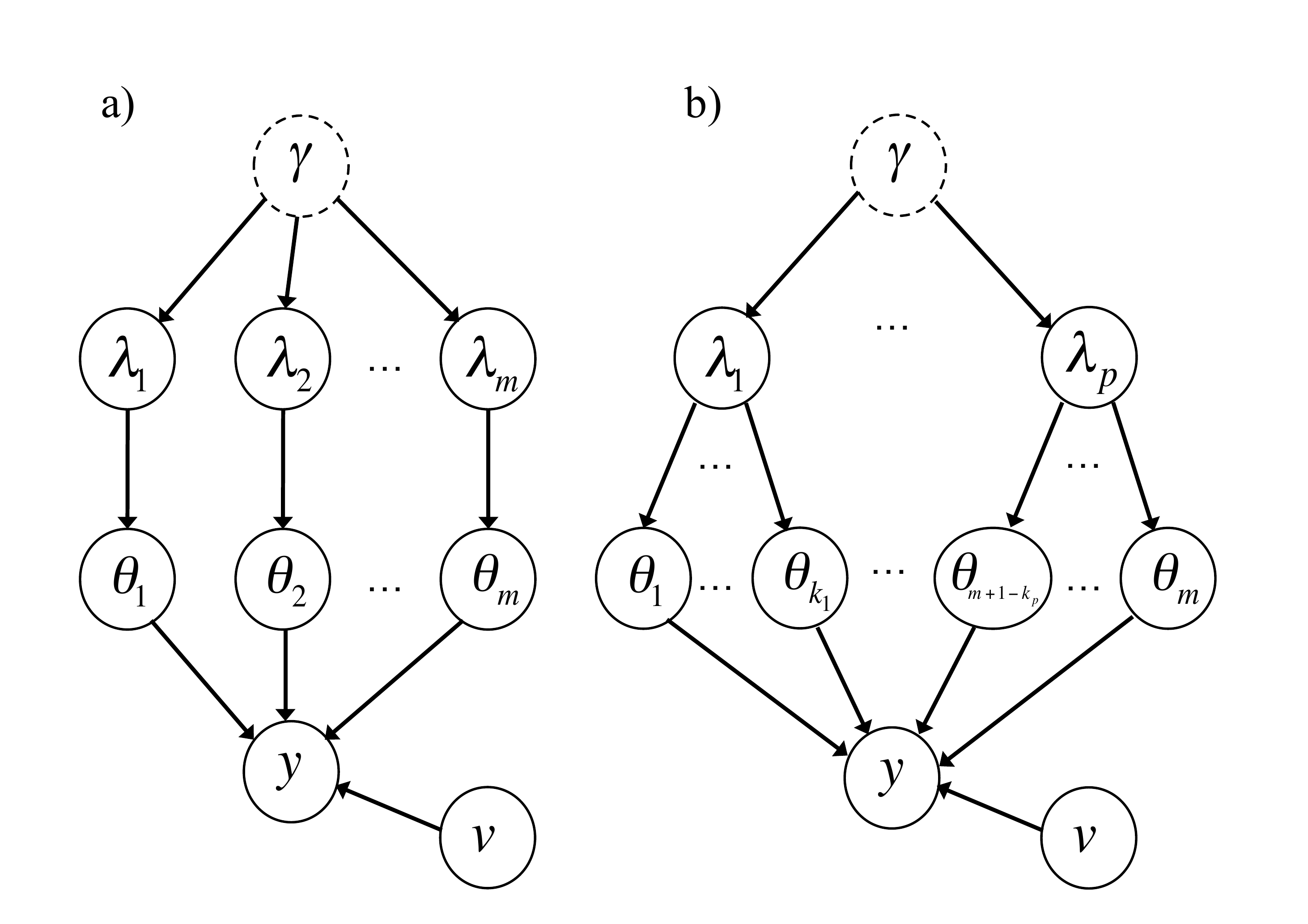}
\caption{Bayesian networks describing the stochastic model for
sparse estimation (a) and group sparse estimation (b)}
\label{BN}
\end{center}
\end{figure}

Let $\theta=[\theta_1 \quad \theta_2 \ldots \theta_m]^\top$
be an unknown parameter vector while
$y \in \R^n$ denotes the vector containing some noisy data. In particular,
our measurements model is
\begin{equation}\label{Measmod}
y=G \theta +v
\end{equation}
where $G \in \R^{n \times m}$ and $v$ is the vector whose components are
white noise of known variance $\sigma^2$.

\subsection{The Lasso}

Under the assumption that $\theta$ is sparse, i.e. many of its components are equal to zero or
have a negligible influence on $y$, a popular approach to reconstruct the parameter vector is the
Lasso \citep{Lasso1996}.
The Lasso estimate of $\theta$ is given by
\begin{equation}\label{Lassoobj} \hat{\theta}_L = \arg \min_{\theta}
\frac{(y-G\theta)^\top(y-G\theta)}{2 \sigma^2} + \gamma_L \sum_{i=1}^{m} |\theta_i|
\end{equation}
where $\gamma_L \in \R_+$ is the regularization parameter.
A key feature of the Lasso is that the estimate $\hat{\theta}_L$ is the solution to
a convex optimization problem.

As in \citep{BayesianLasso}, we describe a derivation of the Lasso through the
marginalization of a suitable probability density function.
This hierarchical representation is useful for establishing a connection
with the variety of estimators considered in this paper.
The Bayesian model we consider
is depicted in Fig. \ref{BN}(a). Nodes and arrows are either dotted or solid depending on being
representative of, respectively, deterministic or stochastic quantities/relationships. Here,
$\lambda$ denotes a vector whose components $\{\lambda_i\}_{i=1}^{m}$ are independent and identically distributed exponential
random variables with probability density
\begin{equation}\label{p_lambda}
p_{\gamma} (\lambda_i) = \gamma e^{-\gamma \lambda_i} \chi(\lambda_i) ,
\end{equation}
where
$\gamma$ is a positive scalar while $\chi(t)=1$ if $t\geq 0$, 0 otherwise. In addition
\begin{equation}\label{p_theta}
\theta_i | \lambda_i \sim {\mathcal N}(0,\lambda_i) \qquad\mbox{and}\qquad v \sim
{\mathcal N}(0, \sigma^2 I_n) ,
\end{equation}
where ${\mathcal N}(\mu,\Sigma)$ is the Gaussian
density of mean $\mu$ and covariance $\Sigma$ while $I_n$ is the $n \times n$ identity matrix.
We have the following result from Section 2 in \citep{BayesianLasso}.

\begin{theorem} \label{Lassomarg}
Given the Bayesian network in Fig. \ref{BN}(a), let
\begin{equation}\label{AlternativeTeta}
\hat{\theta} = \arg \max_{\theta \in \R^{m}}
\int_{\R_{+}^{m}} p(\theta,\lambda | y) d\lambda,
\end{equation}
Then $\hat{\theta}=\hat{\theta}_L $ provided that
$\gamma_L=\sqrt{ 2 \gamma}$.
\end{theorem}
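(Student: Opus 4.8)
The plan is to compute the marginal $\int_{\R_+^m} p(\theta,\lambda\mid y)\,d\lambda$ in closed form and recognize that maximizing it over $\theta$ reproduces the Lasso objective \eqref{Lassoobj}. First I would exploit the conditional independence structure encoded in Fig.~\ref{BN}(a): since $y$ depends on $\lambda$ only through $\theta$, Bayes' rule gives $p(\theta,\lambda\mid y)\propto p(y\mid\theta)\,p(\theta\mid\lambda)\,p_\gamma(\lambda)$, where the normalizing factor $p(y)$ does not depend on $\theta$ and is therefore irrelevant for the $\arg\max$. Because $p(y\mid\theta)$ carries no $\lambda$-dependence, it factors out of the integral, and by the independence of the $\lambda_i$ and of the $\theta_i\mid\lambda_i$ in \eqref{p_theta} the remaining integral splits as a product over $i$:
\[
\int_{\R_+^m} p(\theta,\lambda\mid y)\,d\lambda \;\propto\; \exp\!\left(-\frac{(y-G\theta)^\top(y-G\theta)}{2\sigma^2}\right)\prod_{i=1}^m \int_0^\infty \frac{1}{\sqrt{2\pi\lambda_i}}\,e^{-\theta_i^2/(2\lambda_i)}\,\gamma e^{-\gamma\lambda_i}\,d\lambda_i .
\]

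The core step is the evaluation of the scalar integral inside the product, which is the classical scale-mixture-of-normals representation of the Laplace density. Using the standard identity $\int_0^\infty \lambda^{-1/2} e^{-a/\lambda-b\lambda}\,d\lambda=\sqrt{\pi/b}\,e^{-2\sqrt{ab}}$ for $a,b>0$, with $a=\theta_i^2/2$ and $b=\gamma$, the integral reduces to $\tfrac{\sqrt{2\gamma}}{2}\,e^{-\sqrt{2\gamma}\,|\theta_i|}$. This is the only genuine computation and is the step I expect to be the main (if modest) obstacle; alternatively it can be checked by the change of variable reducing it to a known form, or by differentiating under the integral sign with respect to a parameter.

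Finally I would collect terms: up to a $\theta$-independent positive constant,
\[
\int_{\R_+^m} p(\theta,\lambda\mid y)\,d\lambda \;\propto\; \exp\!\left(-\frac{(y-G\theta)^\top(y-G\theta)}{2\sigma^2}-\sqrt{2\gamma}\sum_{i=1}^m |\theta_i|\right).
\]
Since $\exp$ and $-\log$ are monotone, maximizing the left-hand side over $\theta$ as in \eqref{AlternativeTeta} is equivalent to minimizing $\frac{(y-G\theta)^\top(y-G\theta)}{2\sigma^2}+\sqrt{2\gamma}\sum_{i=1}^m|\theta_i|$, which is exactly \eqref{Lassoobj} with $\gamma_L=\sqrt{2\gamma}$; hence $\hat\theta=\hat\theta_L$. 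I would close by noting that the problem is well posed, invoking the existence of a minimizer of the convex Lasso objective \citep{Lasso1996} (and, correspondingly, that the improper prior issue does not arise because the integrated density is a genuine, if unnormalized, function of $\theta$).
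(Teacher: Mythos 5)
Your proposal is correct: the paper itself gives no proof of Theorem \ref{Lassomarg}, citing it from Section 2 of \citep{BayesianLasso}, and your argument is exactly the standard derivation used there — factor the posterior as $p(y\mid\theta)\,p(\theta\mid\lambda)\,p_\gamma(\lambda)$, integrate each $\lambda_i$ via the Gaussian scale-mixture identity to obtain the Laplace density $\tfrac{\sqrt{2\gamma}}{2}e^{-\sqrt{2\gamma}|\theta_i|}$, and identify the negative log-posterior with the Lasso objective at $\gamma_L=\sqrt{2\gamma}$. Both the integral identity (with $a=\theta_i^2/2$, $b=\gamma$) and the final bookkeeping check out, so there is nothing to add beyond noting that your closing remark on well-posedness is not needed for the equivalence of the two $\arg\max$/$\arg\min$ problems.
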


\subsection{The HLasso}\label{HLasso}

Theorem \ref{Lassomarg} inspires the definition of an estimator
obtained by marginalizing with respect to $\theta$ instead of $\lambda$ and then
maximizing the resulting marginal density $p(\lambda | y)$ with respect to $\lambda$
obtaining an estimate $\hat\lambda$ for $\lambda$.
Having $\hat\lambda$, we use an empirical Bayes approach and set
$\hat\theta_{HL}:=\E[\theta | y, \hat{\lambda}]$ (the minimum variance estimate of
$\theta$ given $y$ and $\lambda=\hat\lambda$).
We call $\hat\theta_{HL}$ the Hyperparameter Lasso (HLasso).
This estimator is given in the next theorem
which uses the fact that $\theta$ conditional on $\lambda$ is Gaussian,
so that the marginal density of $\lambda$ is available
in closed form. The proof flows from the following observations:
\begin{eqnarray}\label{joint pdf 2}
p(\theta,\lambda | y)&\propto& |\Lam|^{-1/2}\exp[-\half y^\top\Sig_y(\lam)^{-1}y]
\\ && \bullet
\exp[-\half (\theta -\theta_{HL}(\lam))^\top(\Lam^{-1}+\sig^{-2}G^\top G)(\theta
-\theta_{HL}(\lam))]\exp[-\gam\one^\top\lam],\nonumber
\end{eqnarray}
where
\begin{equation}\label{Sigma y}
\Lam=\Diag(\lam),\qquad
\Sig_y(\lam):=(\sig^2I+G\Lam G^\top),
\end{equation}
and
\begin{equation}\label{HLtheta}
\theta_{HL}(\lam):=\E[\theta | y, {\lambda}]
=(\sigt\Lam^{-1}+G^\top G)^{-1}G^\top y=\Lam G^\top\Sig_y(\lam)^{-1}y
\end{equation}
Notice that the equivalence of the two expressions for the $\theta_{HL}(\lam)$ follows from
the matrix inversion formula
\begin{equation}\label{mif}
\Sig_y(\lam)^{-1}=\sig^{-2}\left[I-G(\sigt\Lam^{-1}+G^TG)^{-1}G^\top\right].
\end{equation}
Also note that 
(\ref{HLtheta}) assures us that $\theta_{HL}(\lam)$
well defined even when some of the components of $\lam$ are zero.
The matrix  \(\Sig_y(\lam)\) plays a fundamental role in
much of the analysis of this paper. The Law of Iterated Expectation
tells us that \(\Sig_y(\lam)\) is simply the second moment of $y$ given $\lam$, indeed,
\begin{eqnarray}
\E[yy^\top\, |\, \lam]&=&\E[\E[yy^\top\, |\, \theta]\, |\, \lam]\nonumber\\
&=&\E[\Var[y\, |\, \theta]+\E[y\, |\, \theta]\E[y\, |\, \theta]^\top \, |\, \lam]\nonumber\\
&=&\E[\sig^2I+G\theta\theta^\top G^\top\, |\, \lam]\nonumber\\
&=&\sig^2I+G\E[\theta\theta^\top\, |\, \lam]G^\top\nonumber\\
&=&\Sig_y(\lam).\label{2nd moment of y given lam}
\end{eqnarray}

\begin{theorem} \label{LassotheALTERNATIVE}
Given the Bayesian network in Fig. \ref{BN}(a), let
\begin{equation}\label{Alternativelambda}
\hat{\lambda} = \arg \max_{\lambda \in \R_+^{m}}  \int_{\R^{m}}
p(\theta,\lambda|y) d\theta .
\end{equation}
Then
\begin{equation}\label{Alternativelambda2}
\hat{\lambda} = \arg \min_{\lambda \in \R_+^{m}}  \frac{1}{2} \log
\det(\Sigma_y(\lam)) + \frac{1}{2} y^\top (\Sigma_y(\lam))^{-1} y + \gamma
\sum_{i=1}^{m}  \lambda_i,
\end{equation}
and, given $\lambda=\hat{\lambda}$, the HLasso estimate of $\theta$ is given by
\begin{equation}\label{Alternativeteta}
\hat{\theta}_{HL} := \E[\theta | y, \hat{\lambda}]  .
\end{equation}
\end{theorem}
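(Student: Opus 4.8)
The plan is to push the integral over $\theta$ through the factorization (\ref{joint pdf 2}) of $p(\theta,\lam|y)$ and then read off the result. Recall that (\ref{joint pdf 2}) itself follows from $p(\theta,\lam|y)\propto p(y|\theta)\,p(\theta|\lam)\,p(\lam)$ by completing the square in $\theta$, the residual constant being identified as $-\half\,y^\top\Sig_y(\lam)^{-1}y$ via the matrix inversion formula (\ref{mif}); crucially the proportionality constant there does not depend on $\theta$ or $\lam$. In (\ref{joint pdf 2}) the only $\theta$-dependent factor is
\[
\exp\!\big[-\half(\theta-\theta_{HL}(\lam))^\top(\Lam^{-1}+\sig^{-2}G^\top G)(\theta-\theta_{HL}(\lam))\big],
\]
an unnormalized Gaussian in $\theta$ with precision $\Lam^{-1}+\sig^{-2}G^\top G$, so integrating it over $\R^m$ contributes the $\theta$-free factor $(2\pi)^{m/2}\det(\Lam^{-1}+\sig^{-2}G^\top G)^{-1/2}$. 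Hence
\[
\int_{\R^m}p(\theta,\lam|y)\,d\theta \;\propto\; |\Lam|^{-1/2}\,\det(\Lam^{-1}+\sig^{-2}G^\top G)^{-1/2}\,\exp\!\big[-\half\,y^\top\Sig_y(\lam)^{-1}y-\gam\one^\top\lam\big],
\]
with a constant independent of $\lam$.

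The next step is to collapse the two determinant factors. Writing $\det(\Lam^{-1}+\sig^{-2}G^\top G)=|\Lam|^{-1}\det(I_m+\sig^{-2}\Lam G^\top G)$, applying the identity $\det(I_m+AB)=\det(I_n+BA)$ with $A=\sig^{-2}\Lam G^\top$ and $B=G$, and using $\Sig_y(\lam)=\sig^2I_n+G\Lam G^\top$, one obtains $|\Lam|\,\det(\Lam^{-1}+\sig^{-2}G^\top G)=\det(I_n+\sig^{-2}G\Lam G^\top)=\sig^{-2n}\det(\Sig_y(\lam))$, so the prefactor above equals $\sig^{n}\det(\Sig_y(\lam))^{-1/2}$ with $\sig^{n}$ a $\lam$-independent constant. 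Therefore $\int p(\theta,\lam|y)\,d\theta$ is, up to a $\lam$-independent constant, equal to $\det(\Sig_y(\lam))^{-1/2}\exp[-\half\,y^\top\Sig_y(\lam)^{-1}y-\gam\one^\top\lam]$, and maximizing this over $\lam\in\R_+^m$ is the same as minimizing its negative logarithm, which is exactly the objective in (\ref{Alternativelambda2}). This establishes the characterization of $\hat\lambda$; the second assertion, $\hat\theta_{HL}:=\E[\theta|y,\hat\lambda]$, is just the empirical Bayes definition of the estimator, and by (\ref{HLtheta}) it has the closed form $\hat\theta_{HL}=\hat\Lam G^\top\Sig_y(\hat\lambda)^{-1}y$ with $\hat\Lam=\Diag(\hat\lambda)$.

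I expect the only real subtlety to lie at the boundary of $\R_+^m$: when some $\lam_i=0$ the prior $p(\theta_i|\lam_i)$ collapses to a point mass, $\Lam^{-1}$ is undefined, and the intermediate identities (the Gaussian normalization and $\det(\Lam^{-1}+\sig^{-2}G^\top G)=|\Lam|^{-1}\det(I_m+\sig^{-2}\Lam G^\top G)$) are not literally meaningful. The remedy is to carry out the integration and the determinant algebra on the open orthant $\R_{++}^m$, where $\Lam^{-1}+\sig^{-2}G^\top G$ is positive definite, obtaining the clean identity $|\Lam|^{-1/2}\det(\Lam^{-1}+\sig^{-2}G^\top G)^{-1/2}=\sig^{n}\det(\Sig_y(\lam))^{-1/2}$, and then to observe that the right-hand side of the displayed proportionality — equivalently the objective in (\ref{Alternativelambda2}) — is continuous on all of $\R_+^m$ because $\Sig_y(\lam)$ stays positive definite there (the same fact that underlies the remark after (\ref{HLtheta}) that $\theta_{HL}(\lam)$ remains well defined). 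Hence (\ref{Alternativelambda2}) correctly describes the marginal objective on the whole feasible set and the argmax is unchanged. All remaining steps are routine Gaussian integration and determinant manipulation.
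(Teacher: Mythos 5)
Your proposal is correct and follows essentially the same route the paper sketches before the theorem: it integrates the $\theta$-Gaussian factor in the factorization (\ref{joint pdf 2}), collapses $|\Lam|^{-1/2}\det(\Lam^{-1}+\sig^{-2}G^\top G)^{-1/2}$ into $\det(\Sig_y(\lam))^{-1/2}$ via Sylvester's identity, and takes the negative logarithm to obtain (\ref{Alternativelambda2}), with (\ref{Alternativeteta}) being the empirical Bayes definition together with (\ref{HLtheta}). Your extra remark on handling the boundary $\lam_i=0$ by working on the open orthant and invoking continuity of the objective (since $\Sig_y(\lam)\succeq\sig^2 I$) is a sound tightening of a point the paper leaves implicit.
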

\begin{flushright}
$\blacksquare$
\end{flushright}

The objective in (\ref{Alternativelambda}) 
depends on $m$ variables as in the Lasso case,
however the optimization problem is no-longer convex since the function
$\log\det(\Sigma_y(\lam))$ is a concave function of $\lam$ as it is the composition
of the concave function $\log\det(\Sigma)$ and the affine function $\Sigma_y(\lam)$.

It is worth observing that the estimator obtained from \eqref{Alternativelambda} and
\eqref{Alternativeteta} is a form of ``Sparse Bayesian Learning'' having close resemblance with
ARD \citep{McKayARD} and RVM \citep{Tipping2001}, see also \citep{Wipf_ARD_NIPS_2007}. In fact ARD
is obtained by setting $\gamma$ in \eqref{Alternativelambda2} to zero while the Gamma prior used
in RVM (see eq. (6) in \citep{Tipping2001}) seems to play a symmetric role, favoring large values
of $\lambda_i$'s. Note that the Gamma prior in equation (6) of \citep{Tipping2001} becomes flat as
$a\rightarrow $ and $b\rightarrow 0$, similarly to \eqref{p_lambda} as $\gamma \rightarrow 0$. A
similar discussion applies also to the Group version of this estimator to be introduced in Section
\ref{HGLasso}.

In Sections \ref{SparsvsSh} and \ref{sec:MSE} we show that the parameter $\gamma$ plays a
fundamental role in enforcing sparsity. In addition,
we establish an interesting interpretation in terms of the Mean Squared Error properties of the resulting estimators as $\gamma\rightarrow 0$. Note also that $\gamma$ plays a fundamental role in model selection consistency
(see Remark \ref{rem:lambda_zero}). 

%

\section{GLasso and HGLasso}\label{GLassoHGLasso}

We now consider a situation where the explanatory factors $G$
used to predict $y$ are grouped. Think of
$\theta$ as being partitioned into $p$ sub-vectors $\theta^{(i)}$, $i=1,\dots,p$, so that
\begin{eqnarray}\label{Factorteta}
\theta = [{\theta^{(1)}}^\top \quad {\theta^{(2)}}^\top \quad \ldots \quad
{\theta^{(p)}}^\top]^\top.
\end{eqnarray}
For $i=1,\dots,p$, assume that the sub-vector $\theta^{(i)}$ has dimension $k_i$
so that $m = \sum_{i=1}^p k_i$.
Next, conformally partition the matrix $G=[G^{(1)}, \dots, G^{(p)}]$
to obtain the measurement model
\begin{equation}\label{GroupMeasmod}
y=G\theta+v=\sum_{i=1}^{p}  G^{(i)} \theta^{(i)} +v.
\end{equation}
In what follows, we assume that $\theta$ is \emph{block sparse} in
the sense that many of the blocks $\theta^{(i)}$
are null, i.e. with all of their components equal to zero,
or have a negligible effect on $y$.

\subsection{The GLasso}

A leading approach for
the block sparsity problem
is the Group Lasso (GLasso) \citep{Yuan_JRSS_B_2006}.
The Group Lasso
determines the estimate of $\theta$ as
\begin{equation}\label{GLasso}
\hat{\theta}_{GL} = \arg \min_{\theta \in \R^{m}}
\frac{(y-G\theta)^\top(y-G\theta)}{2\sigma^2} + \gamma_{GL}
\sum_{i=1}^{p} \|\theta^{(i)}  \|\, ,
\end{equation}
where $\| \cdot \|$ denotes the classical Euclidean norm.
Notice that the representation (\ref{GLasso}) assumes that the $\theta^{(i)}$ are i.i.d. with
\[
p(\theta^{(i)}\,\, |\gamma_{GL})\propto \mathrm{exp}\left[-\gamma_{GL}\norm{\theta^{(i)}}\right].
\]
It is easy to see that, as in the Lasso case, the objective is convex.



\subsection{The HGLasso}\label{HGLasso}

An alternative approach to the block sparsity problem is
discussed in \citep{ChiusoPNIPS2010}. This approach
relies on the group version
of the model in Fig. 1(a) illustrated
in Fig. 1(b).
In the network, $\lambda$ is now a $p$-dimensional vector
with $i-th$ component given by $\lambda_i\in \R_+$.
In addition, conditional on $\lambda$,
each block $\theta^{(i)}$ of the vector $\theta$
is zero-mean Gaussian with covariance $\lambda_i I_{k_i}$, $i=1,..,p$, i.e.
\begin{equation}\label{p_lambdaA}
\theta^{(i)} | \lambda_i \sim N(0,{\lambda}_i I_{k_i}) .
\end{equation}
As for the HLasso, the proposed estimator
first optimizes the marginal density of $\lambda$, and then
again using an empirical Bayes approach,
the minimum variance estimate of $\theta$
is computed with $\lambda$ taken as known and set to its estimate.
We call this scheme Hyperparameter Group Lasso
(HGLasso). It is described in the following
theorem.

\begin{theorem}\label{GLassotheALTERNATIVE}
Consider the Bayesian network  in Fig. 1 (b) with measurement model given by
(\ref{GroupMeasmod}), (\ref{p_lambdaA}), and (\ref{p_lambda}), and  define
\begin{equation}\label{AlternativeGlambda_int}
\hat{\lambda} = \arg \max_{\lambda \in \R_+^{p}}  \int_{\R^{m}}
p(\theta,\lambda | y) d\theta .
\end{equation}
Then, $\hat{\lambda} $ is given by
\begin{equation}\label{AlternativeGlambda}
\arg \min_{{\lambda} \in \R_+^{p}}  \frac{1}{2} \log
\det(\Sigma_y(\lam)) + \frac{1}{2} y^\top \Sigma_y^{-1}(\lam) y + \gamma
\sum_{i=1}^{p} {\lambda}_i ,
\end{equation}
where
\begin{equation}\label{H Lam}
\Sigma_y(\lam) := G \Lambda G^\top+\sigma^2 I,  \qquad  \Lambda :=\mbox{blockdiag}(\{ {\lambda}_i I_{k_i} \}).
\end{equation}
In addition, the HGLasso estimate of $\theta$, denoted $\hat\theta_{HGL}$, is given by
setting $\lambda = \hat{\lambda}$ in the function
\begin{equation}\label{AlternativeGteta}
{\theta}_{HGL}(\lam) := \E[\theta | y, \lambda]  =
\Lambda G^\top  (\Sigma_y({\lambda}))^{-1} y.
\end{equation}
\end{theorem}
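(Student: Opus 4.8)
The plan is to mirror the derivation of Theorem~\ref{LassotheALTERNATIVE}, replacing the diagonal covariance $\Diag(\lam)$ by the block-diagonal covariance $\Lambda=\mathrm{blockdiag}(\{\lambda_i I_{k_i}\})$ and the penalty $\gamma\sum_{i=1}^m\lambda_i$ by $\gamma\sum_{i=1}^p\lambda_i$ coming from the $p$ independent exponential hyperpriors. First I would write the joint density $p(\theta,\lambda\,|\,y)\propto p(y\,|\,\theta)\,p(\theta\,|\,\lambda)\,p_\gamma(\lambda)$, using the measurement model (\ref{GroupMeasmod}) for the first factor, the Gaussian block priors (\ref{p_lambdaA}) for the second, and the product of exponentials (\ref{p_lambda}) for the third. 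Completing the square in $\theta$ exactly as in (\ref{joint pdf 2}) (now with $\Lambda$ block-diagonal rather than diagonal) gives
\begin{equation*}
p(\theta,\lambda\,|\,y)\propto |\Lambda|^{-1/2}\exp\!\left[-\tfrac12 y^\top\Sigma_y(\lam)^{-1}y\right]\exp\!\left[-\tfrac12(\theta-\theta_{HGL}(\lam))^\top(\Lambda^{-1}+\sigma^{-2}G^\top G)(\theta-\theta_{HGL}(\lam))\right]\exp\!\left[-\gamma\one^\top\lam\right],
\end{equation*}
with $\theta_{HGL}(\lam)=(\sigma^2\Lambda^{-1}+G^\top G)^{-1}G^\top y=\Lambda G^\top\Sigma_y(\lam)^{-1}y$ and $\Sigma_y(\lam)$ as in (\ref{H Lam}); the two forms of $\theta_{HGL}$ agree by the matrix inversion lemma (\ref{mif}), and the second-moment identity (\ref{2nd moment of y given lam}) carries over verbatim since only $\E[\theta\theta^\top\,|\,\lam]=\Lambda$ changes.

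Next I would integrate over $\theta\in\R^m$. The only $\theta$-dependent factor is the middle Gaussian kernel, whose integral is $(2\pi)^{m/2}\det(\Lambda^{-1}+\sigma^{-2}G^\top G)^{-1/2}$. Collecting the $\lambda$-dependent determinants, $|\Lambda|^{-1/2}\det(\Lambda^{-1}+\sigma^{-2}G^\top G)^{-1/2}$, and applying the determinant identity $\det(\Lambda)\det(\Lambda^{-1}+\sigma^{-2}G^\top G)=\det(I+\sigma^{-2}\Lambda G^\top G)=\sigma^{-2n}\det(\sigma^2 I+G\Lambda G^\top)=\sigma^{-2n}\det(\Sigma_y(\lam))$, one obtains $\int_{\R^m}p(\theta,\lambda\,|\,y)\,d\theta\propto \det(\Sigma_y(\lam))^{-1/2}\exp[-\tfrac12 y^\top\Sigma_y(\lam)^{-1}y-\gamma\one^\top\lam]$. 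Taking $-\log$ and maximizing turns (\ref{AlternativeGlambda_int}) into the minimization (\ref{AlternativeGlambda}); the constant factors and the $-2n\log\sigma$ term are independent of $\lambda$ and drop out. The HGLasso formula (\ref{AlternativeGteta}) is then just the conditional mean of a Gaussian, $\E[\theta\,|\,y,\lambda]=\theta_{HGL}(\lam)=\Lambda G^\top\Sigma_y(\lam)^{-1}y$, which was already identified while completing the square.

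The routine bookkeeping — completing the square, the Gaussian integral, the determinant manipulation — is standard; the only place requiring a little care is the handling of possibly singular $\Lambda$ (some $\lambda_i=0$, i.e. some groups excluded), since then $\Lambda^{-1}$ does not exist. As the text already notes after (\ref{mif}), one resolves this by working throughout with the well-defined forms $\Sigma_y(\lam)=\sigma^2 I+G\Lambda G^\top$ and $\theta_{HGL}(\lam)=\Lambda G^\top\Sigma_y(\lam)^{-1}y$, or equivalently by a limiting argument $\lambda_i\downarrow 0$, noting that the objective in (\ref{AlternativeGlambda}) extends continuously to the boundary of $\R_+^p$. I expect this boundary/degeneracy point to be the only genuine subtlety; the rest follows the HLasso proof line for line with $m$ replaced by the appropriate block structure.
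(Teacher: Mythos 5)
Your proposal is correct and follows essentially the same route as the paper, which simply notes that the derivation is virtually identical to that of Theorem \ref{LassotheALTERNATIVE}: complete the square in $\theta$ as in (\ref{joint pdf 2}) with $\Lambda$ block-diagonal, integrate out $\theta$, and use the determinant identity to express the marginal of $\lambda$ through $\det(\Sigma_y(\lam))$, with the conditional mean giving (\ref{AlternativeGteta}). Your extra remark on handling $\lambda_i=0$ via the forms $\Sigma_y(\lam)$ and $\Lambda G^\top\Sigma_y(\lam)^{-1}y$ matches the paper's own comment after (\ref{mif}).
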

\begin{flushright}
$\blacksquare$
\end{flushright}

The derivation of this estimate is virtually identical to the derivation of the estimate given
in Theorem \ref{LassotheALTERNATIVE}. 
For this reason, we slightly abuse our notation
by not introducing a new notation for the key affine matrix mapping $\Sig_y(\lam)$.
Just as in the HLasso case, the objective in  (\ref{AlternativeGlambda})
is not convex in $\lam$.
However, now the optimization is performed
in the lower dimensional space $\R^p$, rather than in $\R^m$ where the GLasso
objective is optimized.

Let the vector $\mu$ denote the dual vector for
the constraint $\lambda \geq 0$. Then the Lagrangian
for the problem (\ref{AlternativeGlambda}) is given by
\begin{equation}
\label{LagrangianHGL}
\begin{array}{lll}
L(\lambda, \mu) :=\frac{1}{2} \log\det(\Sigma_y(\lambda))
+
\frac{1}{2} y^\top \Sigma_y(\lambda)^{-1} y
+
\gamma \B{1}^\top\lambda
- \mu^\top\lambda .
\end{array}
\end{equation}
Using the fact that
\begin{eqnarray*}
\partial_{\lambda_i}L(\lambda,\mu)&=&
\frac{1}{2} \Trace\left(G^{(i)\top}\Sigma_y(\lambda)^{-1}G^{(i)}\right) \\
&-& \frac{1}{2} y^\top\Sigma_y(\lambda)^{-1}G^{(i)}G^{(i)\top}\Sigma_y(\lambda)^{-1}y+\gamma\ -\mu_i,
\end{eqnarray*}
we obtain
the following KKT conditions
for (\ref{AlternativeGlambda}).

\begin{proposition}\label{KKTofHGL}
The necessary conditions for $\lambda$
to be a solution of (\ref{AlternativeGlambda}) are
\begin{equation}\label{KKTHGL}
\begin{array}{l}
 \Sigma_y = \sigma^2 I + \sum_{i = 1}^p \lambda_i G^{(i)}G^{(i)\top} \\
 W\Sigma_y = I \\
 \Trace\left(G^{(i)\top}WG^{(i)}\right)
-
\|G^{(i)\top}Wy\|_2^2
+
2\gamma -2\mu_i = 0, \quad i = 1, \dots, p \\
 \mu_i\lambda_i = 0, \quad i = 1, \dots, p \\
 0 \le \mu,\ \lambda \mbox{ and } 0 \preceq W, \Sigma_y .
\end{array}
\end{equation}
\end{proposition}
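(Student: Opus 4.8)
The plan is to treat (\ref{AlternativeGlambda}) as the minimization of a smooth function over the polyhedral set $\{\lam:\lam\ge 0\}$ and to write down the first‑order (KKT) necessary conditions; the only genuine computation is the gradient of the objective. First I would record that the objective
\[
f(\lam):=\half\log\det(\Sig_y(\lam))+\half\,y^\top\Sig_y(\lam)^{-1}y+\gamma\,\one^\top\lam
\]
is finite and $C^\infty$ on a neighbourhood of the feasible region: since $\Sig_y(\lam)=\sigt I+\sum_{i=1}^p\lam_i G^{(i)}G^{(i)\top}\succeq\sigt I\succ 0$ for every $\lam\ge 0$ (indeed for all $\lam$ in a neighbourhood of any feasible point, because $\sigt I$ dominates), both $\log\det(\Sig_y(\lam))$ and $\Sig_y(\lam)^{-1}$ depend smoothly on $\lam$. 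Because the feasible set is cut out by the affine inequalities $\lam_i\ge 0$, the linearity constraint qualification holds at every feasible point, so stationarity of the Lagrangian (\ref{LagrangianHGL}), complementary slackness, and primal/dual feasibility are \emph{necessary} for a minimizer. No assumption on $G$ or $y$ is needed, and no claim of sufficiency is made (the problem is non‑convex).

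Next I would differentiate $f$. Using the standard identities $\partial_{\lam_i}\log\det A(\lam)=\Trace\!\big(A(\lam)^{-1}\partial_{\lam_i}A(\lam)\big)$ and $\partial_{\lam_i}A(\lam)^{-1}=-A(\lam)^{-1}\big(\partial_{\lam_i}A(\lam)\big)A(\lam)^{-1}$, together with $\partial_{\lam_i}\Sig_y(\lam)=G^{(i)}G^{(i)\top}$, the cyclic property of the trace, and the symmetry of $\Sig_y(\lam)^{-1}$ (which gives $y^\top\Sig_y(\lam)^{-1}G^{(i)}G^{(i)\top}\Sig_y(\lam)^{-1}y=\|G^{(i)\top}\Sig_y(\lam)^{-1}y\|_2^2$), one obtains
\[
\partial_{\lam_i}f(\lam)=\half\Trace\!\big(G^{(i)\top}\Sig_y(\lam)^{-1}G^{(i)}\big)-\half\big\|G^{(i)\top}\Sig_y(\lam)^{-1}y\big\|_2^2+\gamma .
\]
Adding the term $-\mu_i$ coming from $-\mu^\top\lam$ in (\ref{LagrangianHGL}) reproduces exactly the displayed formula for $\partial_{\lam_i}L(\lam,\mu)$ stated just before Proposition \ref{KKTofHGL}.

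Finally I would assemble the KKT system and match it line by line with (\ref{KKTHGL}). Writing $W:=\Sig_y(\lam)^{-1}$, the defining relations $\Sig_y=\sigt I+\sum_i\lam_i G^{(i)}G^{(i)\top}$ and $W\Sig_y=I$ are lines one and two; stationarity $\partial_{\lam_i}L(\lam,\mu)=0$, multiplied by $2$, is line three; complementary slackness for $\lam_i\ge 0$ with multiplier $\mu_i$ gives $\mu_i\lam_i=0$; and primal feasibility $\lam\ge 0$, dual feasibility $\mu\ge 0$, together with the positive definiteness of $\Sig_y$ already noted (hence of $W$), give the last line. I do not expect any real obstacle: the one point worth a sentence is the justification that a constraint qualification holds so that KKT is necessary, which is immediate from the affine form of the constraints; everything else is the matrix‑calculus evaluation of $\nabla f$.
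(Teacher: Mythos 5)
Your proposal is correct and follows essentially the same route as the paper, which likewise writes the Lagrangian (\ref{LagrangianHGL}), computes $\partial_{\lambda_i}L(\lambda,\mu)$ via the standard $\log\det$ and inverse-derivative identities, and reads off (\ref{KKTHGL}) as stationarity plus complementary slackness and feasibility with $W=\Sigma_y(\lambda)^{-1}$. Your explicit remark that the affine constraints $\lambda\ge 0$ guarantee a constraint qualification is a small, welcome addition the paper leaves implicit.
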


It is interesting to observe that, by (\ref{2nd moment of y given lam}), one has
\[
\E\left[\theta_{HGL}(\lambda)
\theta_{HGL}(\lambda)^\top\, |\, \lam\right]=
\Lambda G^\top  \Sigma_y({\lambda})^{-1}\E[ yy^\top\, |\, \lam]
\Sigma_y({\lambda})^{-1}G\Lam=\Lambda G^\top  \Sigma_y({\lambda})^{-1}G\Lam,
\]
and so
\begin{equation}\label{VarTheta}
\E\left[\left.\theta_{HGL}^{(i)}(\lambda)\left(
\theta_{HGL}^{(i)}(\lambda)\right)^\top\, \right|\, \lam\right]=
\lambda_i^2\left(G^{(i)\top}WG^{(i)}\right),\quad i=1,\dots,p.
\end{equation}
In addition,
$$
\|\theta_{HGL}^{(i)}(\lambda)\|^2=\lambda_i^2\|G^{(i)\top}Wy\|_2^2,\quad i=1,\dots,p.
$$
Equation
\eqref{KKTHGL} indicates that when tuning $\lambda$ there should be a link between
the ``norm'' of the
actual estimator
$\|\hat\theta^{(i)}(\lambda)\|^2$ to its a priori second moments \eqref{VarTheta}.
In particular, when
no regularization is imposed on $\lambda$ (i.e. $\gamma=0$) and the nonnegativity constraint is
not active, i.e. $\mu_i=0$, one finds that the optimal value of $\lambda_i$ makes the norm of
the estimator equal to (the trace of) its a priori matrix of second moments.

\subsection{GLasso does not derive from marginalization of the posterior}

Differently from the Lasso case, when the block size is larger than 1, GLasso does not derive
from marginalization of the Bayesian model depicted in Fig. \ref{BN}(b). 
To see this, consider the problem of integrating out $\lambda$ from the
joint density of $\theta$ and $\lambda$ described by the model in Fig. \ref{BN}(b). The
result is the product of multivariate Laplace densities. In particular, if $B^{(i)}(\cdot)$ is
the modified Bessel function of the second kind and order $k_i/2-1$, then, following
\citep{Eltoft2006}, we obtain
\begin{equation}\label{MultLap}
\int_{\lambda \in \R_+^{p}}
p(\theta,\lambda) d\lambda = \frac{(2\gamma)^p}{(2 \pi) ^{m/2}} \prod_{i=1}^p (2\gamma))^{2-k_i/4}
\frac{B^{(i)}(2\gamma\sqrt{\theta^{(i)\top}
\theta^{(i)}})}{(\theta^{(i)\top}\theta^{(i)})^{k_i/4-2} },
\end{equation}
whereas the prior
density underlying the GLasso must satisfy
\begin{equation}\label{GLassoprior}
p(\theta)
\propto \exp(-\gamma_{GL} \sum_{i=1}^p \norm{\theta^{(i)}}).
\end{equation}
One can show that, for $k_i>1$ with $\theta^{(i)}$ tending to zero the prior density on
$\theta^{(i)}$
used in the GLasso remains bounded, while the marginal of the density used for  HGLasso in (\ref{MultLap})
tends to $\infty$.

\section{Relationship with Multiple Kernel Learning}
\label{MKLandHGLasso}


Multiple Kernel Learning (MKL) can be used for the
block sparsity problem
\citep{Bach_MKL_2004,Evgeniou05,DinuzzoTL,Bach_Consistency_GLassoMKL}.
To introduce this approach consider
the measurements model
\begin{equation}\label{GroupMeasmod_f}
y= f+v = \sum_{i=1}^{p} f^{(i)} +v\ ,
\end{equation}
where $\nu$ is as specified in (\ref{p_theta}).
In the MKL framework, $f$ 
represents the sampled version of a scalar function
assumed to belong to a (generally infinite-dimensional)
reproducing kernel Hilbert space (RKHS) \cite{Wahba1990}.
For our purposes, we consider a simplified scenario where
the domain of the functions in the RKHS is the finite set $[1,\ldots,n]$.
In this way, $f$ represents the entire function and
$y$ is the noisy version of $f$ sampled over its whole domain.
In addition, we assume that $f$ belongs to the RKHS, denoted $\cH_K$, having kernel
defined by the matrix
\begin{equation}\label{Klambda}
K(\lambda) = \sum_{i=1}^{p} \lambda_i K^{(i)},
\end{equation}
where it is further assumed that
each of the functions $f^{(i)}$ is an element
of a RKHS, denoted $\cH^{{(i)}}$, having kernel $\lambda_i K^{(i)}$ with associated
norm denoted by $\| f^{(i)} \|_{(i)}$.

According to the MKL approach,
the estimates of the unknown functions $f^{(i)}$ are obtained \emph{jointly}
with those of the scale factors $\lambda_i$
by solving the following inequality constrained  problem:
\begin{eqnarray}\nonumber
&& (\{\hat{f}^{(i)}\} , \hat{\lambda}) = \displaystyle\mathop{\arg \min}_{\{f^{(i)}\},\lambda \in \R_+^p}
\frac{(y-f)^\top(y-f)}{\sigma^2} + \sum_{i=1}^{p}\| f^{(i)} \|_{{(i)}}^2 \\ \label{MKL}
&& \qquad \qquad \qquad \quad \mbox{s.t.}  \quad \sum_{i=1}^p \lambda_i \leq  M\ ,
\end{eqnarray}
where $M$ plays the role of a regularization parameter.
Hence, the ``scale factors''
contained in $\lambda\in \R^+_p$ are optimization variables,
thought of as ``tuning knobs'' adjusting the kernel
$K(\lambda)$ to better suit the measured data.
Using the extended version of the representer theorem,
e.g. see  \citep{DinuzzoTL,Evgeniou05}, the solution is
\begin{equation}\label{sol}
\hat{f}^{(i)} = \hat{\lambda}_i K^{(i)}  \hat{c}, \qquad i=1,\ldots,p,
\end{equation}
where
\begin{eqnarray}
\nonumber && \{\hat{c},\hat{\lambda}\} =\displaystyle\mathop{\arg \min}_{c \in \R^{n},\lambda \in \R_p^+}
\frac{(y-K(\lambda) c)^\top(y-K (\lambda) c)}{\sigma^2}
+ c^{\top} K(\lambda) c \\ \label{MKL2}
&& \qquad \qquad  \quad \mbox{s.t.}  \quad \sum_{i=1}^p \lambda_i \leq  M.
\end{eqnarray}
It can be shown that every local solution of the above optimization problem is also
a global solution, see \citep{DinuzzoTL} for details.\\

For our purposes, it is useful to define $\phi$
as the Gaussian vector with independent components of unit variance such that
\begin{equation}\label{Eqphi}
\theta_i = \sqrt{\lambda_i}\,\, \phi_i .
\end{equation}
We partition
$\phi$ conformally with $\theta$, i.e.
\begin{eqnarray}\label{Factorphi}
\phi = \left[{\phi^{(1)}}^\top \quad {\phi^{(2)}}^\top \quad \ldots \quad {\phi^{(p)}}^\top\right]^\top .
\end{eqnarray}
Then, the following connection with the
Bayesian model in Fig. \ref{BN}(b) holds.

\begin{theorem} \label{MKLBayes}
Consider the joint density of $\phi$ and $\lambda$
conditional on $y$ induced by the Bayesian network in Fig. \ref{BN}(b).
Set $K^{(i)}=G^{(i)}G^{(i)\top},\ i=1,\dots,p$.
Then, there exists a value of $\gamma$ (function of $M$) such that
the maximum a posteriori estimate of $\lambda$ for this value of
$\gam$ (obtained optimizing the joint density
of $\phi$ and $\lambda$)
is the $\hat{\lambda}$ from (\ref{MKL2}). In addition, for this value of $\gam$ one has
\begin{eqnarray}  \label{redMarg}
&& \hat{\lambda}=\arg \min_{\lambda \in \R_+^{p}}
\frac{y^\top(K(\lambda)+\sigma^2 I)^{-1}y}{2} + \gamma \sum_{i=1}^p \lambda_i
\end{eqnarray}
and the $\hat{c}$ in (\ref{MKL2}) is given by
\begin{equation}\label{optim_c}
\hat{c}(\hat{\lambda})=(K(\hat{\lambda})+\sigma^2 I)^{-1} y.
\end{equation}
Again, for this value of $\gam$,
the maximum a posteriori estimates of the blocks of $\phi$ are
\begin{equation}\label{optPhi}
\hat{\phi}^{(i)} = \sqrt{\lambda_i}  G^{(i)\top} \hat{c}\ .
\end{equation}
Finally, one has
\begin{equation}\label{MKL=GLasso}
\hat{\theta}_{GL}^{(i)}  = \sqrt{\lambda_i}  \hat{\phi}^{(i)} ,
\end{equation}
where 
$\hat{\theta}_{GL}$ is the GLasso estimate (\ref{GLasso})
for a suitable value of $\gamma_{GL}$.
\end{theorem}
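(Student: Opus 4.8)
The plan is to reduce the entire statement to two elementary partial minimizations plus the fact that an infimum over a product set may be computed in either order. First I would substitute $\theta_i=\sqrt{\lambda_i}\,\phi_i$ in \eqref{GroupMeasmod}, obtaining $y=\widetilde G(\lambda)\phi+v$ with $\widetilde G(\lambda):=[\sqrt{\lambda_1}\,G^{(1)},\dots,\sqrt{\lambda_p}\,G^{(p)}]$, so that $\widetilde G(\lambda)\widetilde G(\lambda)^\top=\sum_i\lambda_i G^{(i)}G^{(i)\top}=K(\lambda)$ under the hypothesis $K^{(i)}=G^{(i)}G^{(i)\top}$. Since the blocks $\phi^{(i)}$ are i.i.d.\ $N(0,I_{k_i})$ \emph{independently} of $\lambda$, this change of variables carries no Jacobian, and the joint density of Fig.~\ref{BN}(b) reads $p(\phi,\lambda\,|\,y)\propto\exp\!\big[-\tfrac{1}{2\sigma^2}\|y-\widetilde G(\lambda)\phi\|^2-\tfrac12\|\phi\|^2-\gamma\,\one^\top\lambda\big]$ on $\R^m\times\R_+^p$; equivalently the joint MAP estimate of $(\phi,\lambda)$ minimizes
\[
J(\phi,\lambda)\ :=\ \frac{1}{2\sigma^2}\|y-\widetilde G(\lambda)\phi\|^2+\frac12\|\phi\|^2+\gamma\,\one^\top\lambda .
\]
I would emphasize at this point that using $\theta$ in place of $\phi$ would introduce the extra factor $\prod_i\lambda_i^{-k_i/2}$ --- precisely the non-convex HGLasso term --- which is why the $\phi$-parametrization is the one relevant to MKL/GLasso.

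Next I would profile $\phi$ out of $J$. For fixed $\lambda$, $J(\cdot,\lambda)$ is a ridge problem whose minimizer is, by the matrix inversion lemma, $\hat\phi(\lambda)=\widetilde G(\lambda)^\top(K(\lambda)+\sigma^2 I)^{-1}y$; its $i$-th block is $\sqrt{\lambda_i}\,G^{(i)\top}\hat c(\lambda)$ with $\hat c(\lambda)=(K(\lambda)+\sigma^2 I)^{-1}y$, which is \eqref{optim_c} and \eqref{optPhi}. A short calculation (using $K(\lambda)+\sigma^2 I=\Sigma_y(\lambda)$) gives $\min_\phi J(\phi,\lambda)=\tfrac12 y^\top(K(\lambda)+\sigma^2 I)^{-1}y+\gamma\,\one^\top\lambda$, so $\hat\lambda=\arg\min_\lambda\min_\phi J$ is exactly the minimizer in \eqref{redMarg}. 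To match \eqref{MKL2}, I would profile $c$ out of \eqref{MKL2} in the same way, reducing it to $\min\{\,y^\top(K(\lambda)+\sigma^2 I)^{-1}y:\lambda\ge0,\ \one^\top\lambda\le M\,\}$; since $\lambda\mapsto y^\top(K(\lambda)+\sigma^2 I)^{-1}y$ is convex (it is the convex map $A\mapsto y^\top A^{-1}y$ on the positive definite cone precomposed with the affine map $\lambda\mapsto K(\lambda)+\sigma^2 I$), Lagrangian duality applies, and taking $\gamma=\gamma(M)$ equal to one half the optimal multiplier of the constraint $\one^\top\lambda\le M$ makes the two $\lambda$-problems share a solution --- this is the asserted dependence of $\gamma$ on $M$.

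For the last identity \eqref{MKL=GLasso} I would introduce the joint objective
\[
Q(\theta,\lambda)\ :=\ \frac{\|y-G\theta\|^2}{2\sigma^2}+\frac12\sum_{i=1}^p\frac{\|\theta^{(i)}\|^2}{\lambda_i}+\gamma\,\one^\top\lambda
\]
on $\R^m\times\R_+^p$, with the conventions $\|0\|^2/0=0$ and $\|\theta^{(i)}\|^2/0=+\infty$ for $\theta^{(i)}\ne0$, and compute its two partial minima. Minimizing over $\lambda$, the scalar identity $\min_{\lambda_i\ge0}\big(\tfrac{\|\theta^{(i)}\|^2}{2\lambda_i}+\gamma\lambda_i\big)=\sqrt{2\gamma}\,\|\theta^{(i)}\|$ (minimizer $\lambda_i=\|\theta^{(i)}\|/\sqrt{2\gamma}$) gives $\min_\lambda Q(\theta,\lambda)=\tfrac{\|y-G\theta\|^2}{2\sigma^2}+\sqrt{2\gamma}\sum_i\|\theta^{(i)}\|$, i.e.\ the GLasso objective with $\gamma_{GL}=\sqrt{2\gamma}$. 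Minimizing over $\theta$, $Q(\cdot,\lambda)$ is a quadratic whose minimizer is $\Lambda G^\top\Sigma_y(\lambda)^{-1}y$ (matrix inversion lemma again), whose $i$-th block equals $\lambda_i G^{(i)\top}(K(\lambda)+\sigma^2 I)^{-1}y=\sqrt{\lambda_i}\,\hat\phi^{(i)}$ by the previous step, and whose optimal value is $\tfrac12 y^\top\Sigma_y(\lambda)^{-1}y+\gamma\,\one^\top\lambda$, i.e.\ the objective in \eqref{redMarg}. I would then conclude by swapping infima: with $\hat\lambda$ as above and $\hat\theta:=\arg\min_\theta Q(\theta,\hat\lambda)$ one has $Q(\hat\theta,\hat\lambda)=\min_\theta Q(\theta,\hat\lambda)=\min_\lambda\min_\theta Q=\min_{\theta,\lambda}Q=\min_\theta\min_\lambda Q$, hence $\min_\lambda Q(\hat\theta,\lambda)=\min_\theta\min_\lambda Q$, i.e.\ $\hat\theta$ minimizes $\theta\mapsto\min_\lambda Q(\theta,\lambda)$ and is therefore a GLasso estimate for $\gamma_{GL}=\sqrt{2\gamma}$; its $i$-th block is $\sqrt{\hat\lambda_i}\,\hat\phi^{(i)}$, which is \eqref{MKL=GLasso}. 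Existence of the minima invoked is routine under mild conditions on $G$, and they are unique when the relevant objectives are strictly convex.

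The step I expect to be the main obstacle is not the algebra but two matters of rigor: first, the passage between the hard-constrained MKL problem \eqref{MKL2} and its penalized form \eqref{redMarg}, which requires the convexity of $\lambda\mapsto y^\top(K(\lambda)+\sigma^2 I)^{-1}y$ together with a constraint qualification (e.g.\ Slater at $\lambda=0$) so that a single $\gamma=\gamma(M)$ works and the correspondence is genuinely two-sided; and second, keeping the book-keeping between the $(\phi,\lambda)$ and $(\theta,\lambda)$ coordinates completely straight, since it is exactly the absence of a Jacobian in the $\phi$-coordinates that collapses the Bayesian MAP onto the \emph{convex} GLasso rather than onto HGLasso --- the conceptual crux the write-up must make transparent.
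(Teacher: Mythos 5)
Your proposal is correct and follows essentially the same route as the paper's own proof: write the joint MAP objective in the $(\phi,\lambda)$ coordinates, profile out $\phi$ to obtain (\ref{redMarg}), (\ref{optim_c}) and (\ref{optPhi}), link the constrained MKL problem (\ref{MKL2}) to the penalized problem through a Lagrange multiplier $\gamma(M)$, and obtain (\ref{MKL=GLasso}) by rewriting the same objective in the $(\theta,\lambda)$ coordinates and minimizing first over $\lambda$, which yields the GLasso objective with $\gamma_{GL}=\sqrt{2\gamma}$. The only differences are matters of detail rather than of approach: you justify the constrained/penalized correspondence via convexity of $\lambda\mapsto y^\top(K(\lambda)+\sigma^2 I)^{-1}y$ and duality (explicitly halving the multiplier to absorb the factor of two between the MKL and MAP objectives) and you spell out the exchange of partial minimizations for the final identity, whereas the paper argues via coercivity, the multiplier attached to a local solution of (\ref{MKL2}) together with its cited local-equals-global property, and leaves the last reformulation terse.
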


We supply the KKT conditions
(\ref{redMarg}) in the
following proposition.

\begin{proposition}\label{KKTMKL}
The necessary and sufficient conditions for $\lambda$
to be a solution of (\ref{redMarg}) are
\begin{equation}\label{FullKKT}
\begin{array}{l}
 \Sigma_y = K(\lambda)+\sigma^2 I \\
 W\Sigma_y = I \\
 - \|G^{(i)\top}Wy\|_2^2
+
2\gamma -2\mu_i = 0, \quad i = 1, \dots, p \\
 \mu_i\lambda_i = 0, \quad i = 1, \dots, p \\
 0 \le \mu,\ \lambda \mbox{ and } 0 \preceq W, \Sigma_y .
\end{array}
\end{equation}
\end{proposition}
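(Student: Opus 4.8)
The plan is to treat \eqref{redMarg} as a smooth convex program over the polyhedron $\R_+^{p}$ and read off its KKT system. Write $J(\lambda):=\tfrac12 y^\top \Sigma_y(\lambda)^{-1}y+\gamma\,\one^\top\lambda$ with $\Sigma_y(\lambda)=\sigma^2 I+\sum_{i=1}^{p}\lambda_i G^{(i)}G^{(i)\top}$, using $K^{(i)}=G^{(i)}G^{(i)\top}$ and $K(\lambda)=\sum_i\lambda_i K^{(i)}$ as in Theorem~\ref{MKLBayes}. The first step is the gradient: since $\partial_{\lambda_i}\Sigma_y(\lambda)=G^{(i)}G^{(i)\top}$ and $\partial_{\lambda_i}\Sigma_y^{-1}=-\Sigma_y^{-1}\big(\partial_{\lambda_i}\Sigma_y\big)\Sigma_y^{-1}$, one obtains
\[
\partial_{\lambda_i}J(\lambda)=-\tfrac12\, y^\top\Sigma_y^{-1}G^{(i)}G^{(i)\top}\Sigma_y^{-1}y+\gamma
=-\tfrac12\,\|G^{(i)\top}Wy\|_2^2+\gamma,
\]
where $W:=\Sigma_y(\lambda)^{-1}$. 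Introducing a multiplier $\mu\ge 0$ for $\lambda\ge 0$, the Lagrangian is $L(\lambda,\mu)=J(\lambda)-\mu^\top\lambda$; stationarity $\partial_{\lambda_i}L=0$ (multiplied by $2$), complementary slackness $\mu_i\lambda_i=0$, and primal/dual feasibility $\lambda,\mu\ge 0$ give exactly the displayed conditions. The relations $\Sigma_y=K(\lambda)+\sigma^2 I$ and $W\Sigma_y=I$ are just notation, and $\Sigma_y\succ0$, $W\succ0$ are automatic because $\sigma^2>0$.

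The only substantive point is to upgrade ``necessary'' to ``necessary and sufficient'', in contrast with Proposition~\ref{KKTofHGL}, where the term $\tfrac12\log\det\Sigma_y(\lambda)$ is concave in $\lambda$ and the problem nonconvex. Here no such term appears: the map $\lambda\mapsto y^\top\Sigma_y(\lambda)^{-1}y$ is convex on $\R_+^{p}$, since $\Sigma_y(\cdot)$ is affine and positive definite there and $A\mapsto y^\top A^{-1}y$ is convex on the positive definite cone, e.g.\ via the variational representation $y^\top A^{-1}y=\sup_{z}\{2z^\top y-z^\top A z\}$, a supremum of functions affine in $A$ and hence in $\lambda$. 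Adding the affine term $\gamma\,\one^\top\lambda$ and minimizing over the polyhedron $\R_+^{p}$ keeps the problem convex; since the constraints are affine, no further constraint qualification is needed, and the KKT conditions are therefore both necessary and sufficient.

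The main (mild) obstacle is bookkeeping in the gradient computation --- in particular recognizing $y^\top\Sigma_y^{-1}G^{(i)}G^{(i)\top}\Sigma_y^{-1}y=\|G^{(i)\top}Wy\|_2^2$ --- together with the observation that the absence of the $\log\det$ term is precisely what makes \eqref{redMarg} convex. Everything else follows the same template as Proposition~\ref{KKTofHGL}, with the contribution $\tfrac12\Trace\big(G^{(i)\top}WG^{(i)}\big)$ (which there came from differentiating $\log\det\Sigma_y$) now absent.
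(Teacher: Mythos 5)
Your proposal is correct and follows essentially the same route the paper takes: the paper leaves Proposition~\ref{KKTMKL} without a separate proof precisely because it is the Lagrangian/KKT computation of Proposition~\ref{KKTofHGL} with the $\tfrac12\log\det\Sigma_y$ term (and hence the $\tfrac12\Trace(G^{(i)\top}WG^{(i)})$ contribution) removed, and with sufficiency coming from the convexity of \eqref{redMarg}, which the paper asserts by noting that the $\log\det$ term is what makes \eqref{AlternativeGlambda} non-convex. Your only addition is to make that convexity explicit via the variational representation $y^\top A^{-1}y=\sup_z\{2z^\top y-z^\top Az\}$, which is a correct and welcome tightening of a step the paper only states.
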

\begin{flushright}
$\blacksquare$
\end{flushright}

\subsection{Concluding remarks of the section}

Eq. \ref{MKL=GLasso} in Theorem \ref{MKLBayes} states the equivalence 
between MKL and GLasso. It is a particular instance of the relationship between
regularization on kernel weights and block-norm based regularization, see
Theorem 1 in \cite{Tomioka}.
In the next sections, such connections will help in 
understanding the differences between GLasso and HGLasso
by comparing the KKT conditions derived in Propositions \ref{KKTofHGL} and \ref{KKTMKL}.\\
Notice also that the GLasso estimate provides 
the maximum a posteriori (MAP) estimate of $\phi$ but not that of $\theta$.
In fact, $\sqrt{\lambda_i}  \hat{\phi}^{(i)}$
is not the MAP estimate of $\theta^{(i)}$. In this regard, it is not difficult
to see that, according to the model in Fig. \ref{BN}(b), the joint density of $\theta$ and $\lambda$
given $y$ is not bounded above in a neighborhood of the origin.
Hence, the MAP estimator of $\theta$ would always return
an estimate equal to zero.
One can however conclude from Theorem \ref{MKLBayes} that MKL (GLasso) 
arises from the same Bayesian model
as the HGLasso considering $\phi$ and $\lambda$ as unknown variables. 
The difference is that the MKL estimate of
$\lambda$ is obtained by maximizing a joint
rather than a marginal density. 
It is worth comparing the
expression for the MKL estimator in (\ref{redMarg}) with the expression for the HGLasso
estimator given in (\ref{AlternativeGlambda}).
Under the assumptions stated in Theorem \ref{MKLBayes},
$\Sigma_y(\lambda)=K(\lambda)+\sigma^2 I$. Hence,
the objectives in  (\ref{redMarg}) and (\ref{AlternativeGlambda})
differ only in
the term $ \frac{1}{2} \log \det(\Sigma_y)$ appearing
in the HGLasso objective (\ref{AlternativeGlambda}).
Notice also that this is the component that makes
problem (\ref{AlternativeGlambda}) non-convex.
On the other hand, it is also the term that forces
the HGLasso to favor sparser solutions than the
MKL since it makes the marginal density of $\lambda$
more concentrated around zero.

\section{Sparsity vs. Shrinkage:
A simple experiment} \label{SparsvsSh}

It is well known that the $\ell_1$ penalty in Lasso tends to induce an excessive shrinkage of ``large'' coefficient in order to obtain sparsity. Several variations have been proposed in the literature in order to overcome this problem, including the so called \emph{Smoothly-Clipped-Absolute-Deviation} (SCAD) estimator in \citep{FanLiSCAD_JASA2001} and re-weighted versions of $\ell_1$ like the \emph{adaptive Lasso} \citep{AdaptiveLasso}. We now study the tradeoffs between sparsity and shrinking for HLasso/HGLasso.
By way of introduction to the more general analysis in the next section,
we first compare the sparsity conditions for HGLasso and MKL (or, equivalently, GLasso)
in a simple, yet instructive, two group example.  In this example, it is straightforward to show that
HGLasso guarantees a  more favorable tradeoff between sparsity and shrinkage,
in the sense that it induces greater sparsity with the same
shrinkage (or, equivalently, for a given level of sparsity it
guarantees less shrinkage).

Consider
two groups of dimension $1$, i.e.
\begin{equation}\label{ExampleKKT}
y = G^{(1)} \theta^{(1)} + G^{(2)} \theta^{(2)} + v \quad y\in \R^2,\ \theta_1,\ \theta_2\in \R,
\end{equation}
where  $G^{(1)}=[1 \;\; \delta]^\top$, $G^{(2)}=[0 \;\; 1]^\top$, $v \sim {\cal N}(0,\sigma^2)$.
Assume $\theta^{(1)}=0$, $\theta^{(2)}=1$.
Our goal is to understand how the hyperparameter $\gamma$ influences sparsity and the estimates of
$\theta^{(1)}$ and $\theta^{(2)}$ using HGLasso and MKL.
In particular, we would like to determine which values of $\gamma$ guarantee that $\hat \theta^{(1)}=0$ and how the estimator  $\hat\theta^{(2)}$ varies with $\gamma$.
These questions can be answered by using the KKT conditions obtained
 in Propositions \ref{KKTofHGL} and \ref{KKTMKL}.

Let $y:=[y_1 \; y_2]^\top$ and recall that  $K^{(i)}:=G^{(i)} \left(G^{(i)}\right)^\top$.
By (\ref{KKTHGL}),
the necessary conditions for $\hat \lambda_1=0$
and $\hat\lambda_2\ge 0$ to be the hyperparameter estimators for the HGLasso estimator
(for fixed $\gamma$) are
 \begin{equation}\label{KKT_example_HGLasso}
 \begin{array}{l}
2\gamma_{HGL} \geq
\left[\frac{y_1}{\sig^2}+\frac{\delta y_2}{\sig^2+\hat\lam^{HGL}_2}\right]^2-
\left[\frac{1}{\sig^2}+\frac{\delta }{\sig^2+\hat\lam^{HGL}_2}\right]\quad\mbox{and}
\\ \\
\hat\lambda^{HGL}_2 =  {\max}
\left\{\frac{-1+\sqrt{1+8\gam_{HGL} y_2^2}}{4\gam_{HGL}}-\sig^2,\ 0\right\}.
\end{array}
 \end{equation}
 Similarly, by (\ref{FullKKT}), the same conditions for MKL read as
  \begin{equation}\label{KKT_example_MKL}
 \begin{array}{l}
2\gamma_{MKL} \geq
\left[\frac{y_1}{\sig^2}+\frac{\delta y_2}{\sig^2+\hat\lam^{MKL}_2}\right]^2\quad\mbox{and}\\ \\
\hat\lambda^{MKL}_2  =
{\max}\left\{\frac{|y_2|}{\sqrt{2\gam_{MKL}}}-\sig^2,0\right\}.
\end{array}
 \end{equation}
 Note that it is always the case that the lower bound for $\gam_{MKL}$ is strictly
 greater than the lower bound for $\gam_{HGL}$ and that
 $\hat\lambda^{HGL}_2\le\hat\lambda^{MKL}_2$ when $\gam_{HGL}=\gam_{MKL}$,
 where the inequality is strict whenever
 $\hat\lambda^{MKL}_2>0$.
 The corresponding estimators for $\theta^{(1)}$ and $\theta^{(2)}$ are
  \begin{equation}\label{ThetaHat}
 \begin{array}{c}
\hat \theta^{(1)}_{HGL}= \hat\theta^{(1)}_{MKL} = 0  \\ \\
\hat \theta^{(2)}_{HGL} = \frac{\hat\lambda^{HGL}_2y_2}{\sig^2+\hat\lambda^{HGL}_2}
\qquad\mbox{and}\qquad
\hat \theta_{MKL}^{(2)} = \frac{\hat\lambda^{MKL}_2y_2}{\sig^2+\hat\lambda^{MKL}_2}\ .
\end{array}
 \end{equation}
 Hence, $|\hat \theta^{(2)}_{HGL}|<|\hat \theta^{(2)}_{MKL}|$ whenever $y_2\ne 0$ and
 $\hat\lambda^{MKL}_2>0$. However,
it is clear that the lower bounds on $\gam$ in (\ref{KKT_example_HGLasso}) and (\ref{KKT_example_MKL})
indicate that $\gamma_{MKL}$ needs to be larger than $\gamma_{HGL}$
in order to set $\hat\lambda_1^{MKL}=0$ (and hence $\hat\theta_{MKL}^{(1)}=0$). 
Of course, having a larger $\gamma$ tends to yield smaller $\hat\lambda_2$ and hence more shrinking on $\hat\theta^{(2)}$. This is illustrated in figure \ref{Fig_HGLassovsMKL} where we report the estimators $\hat\theta_{HGL}^{(2)}$ (solid) and $\hat\theta^{(2)}_{MKL}$ (dotted) for
 $\sigma^2=0.005$, $\delta=0.5$. The estimators are arbitrarily set to zero for the values of $\gamma$ which do not yield $\hat\theta^{(1)}=0$. In particular from \eqref{KKT_example_HGLasso} and \eqref{KKT_example_MKL} we find that HGLasso sets $\hat\theta_{HGL}^{(1)}=0$ for $\gamma_{HGL} >5$ while
 MKL sets $\hat\theta_{MKL}^{(1)}=0$ for $\gamma_{MKL}>20$. 
 In addition, it is clear that MKL  tends to yield greater shrinkage on $\hat\theta_{MKL}^{(2)}$ (recall that $\theta^{(2)}=1$).

 \begin{figure}
  \begin{center}
    \includegraphics[width=.6\linewidth,angle=0]{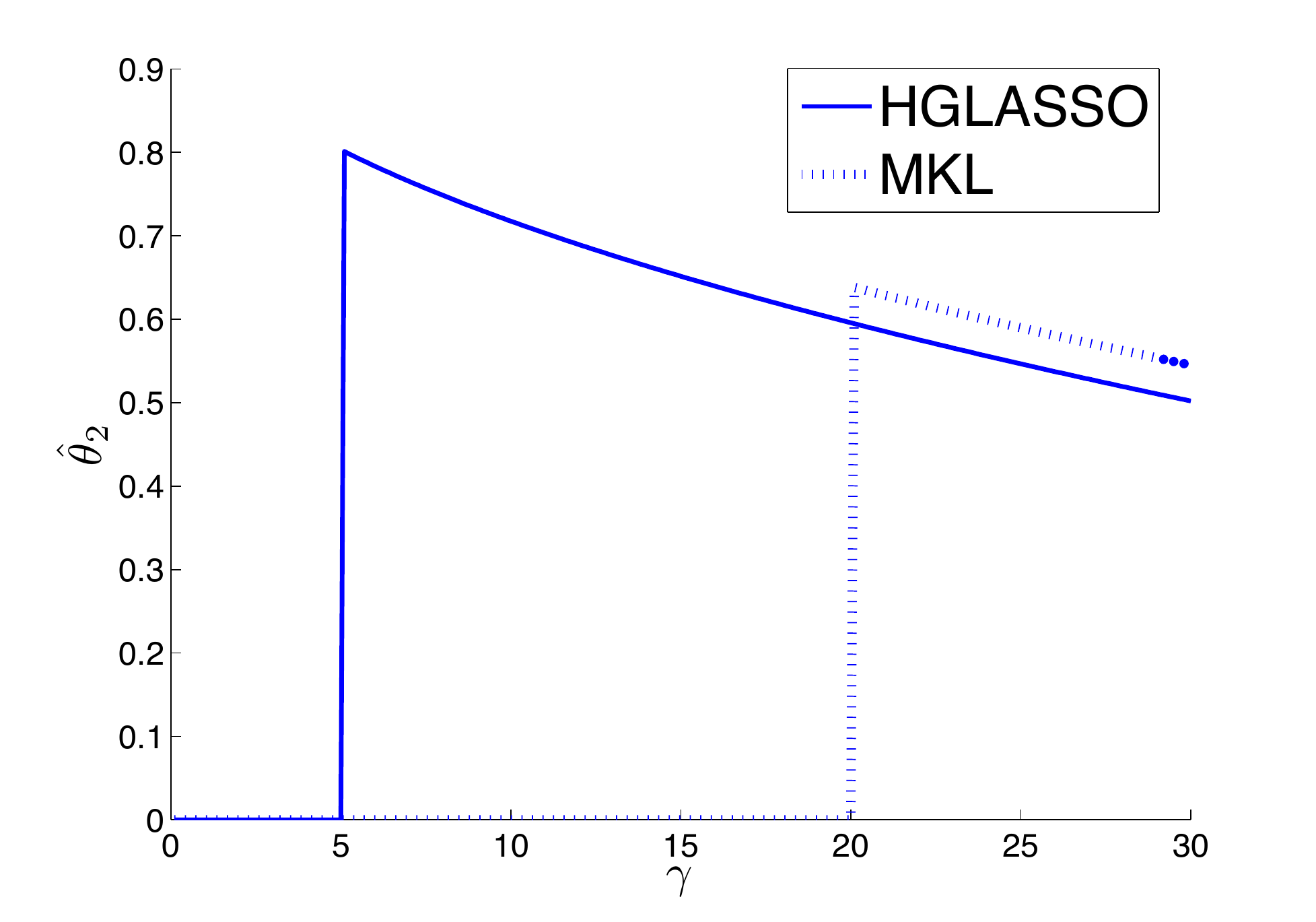}
\caption{Estimators $\hat\theta^{(2)}$ as a function of $\gamma$. The curves are plotted only for the values of $\gamma$ which yield also $\hat\theta^{(1)}=0$ (different for HGLasso ($\gamma_{HGL}>5$) and MKL ($\gamma_{MKL}>20$)).}
    \label{Fig_HGLassovsMKL}
  \end{center}
\end{figure}

\section{Mean Squared Error properties of Empirical Bayes Estimators}\label{sec:MSE}

In this Section we evaluate the performance of an estimator $\hat \theta$ using its Mean Squared Error (MSE) i.e. its expected quadratic loss
$$
\Trace\left[\E \left[\left.\left(\hat \theta-\btheta\right)\left(\hat \theta-\btheta\right)^\top\,\right|\,
\lambda,\theta=\btheta\right]\right],
$$
where $\btheta$ is the ``true'' but unknown value of $\theta$.
When we speak about ``Bayes estimators'' we think of estimators of the form
$\hat \theta(\lambda):= \E\left[\theta\, |\, y,\lambda\right]$ computed using the probabilistic
model Fig. \ref{BN} with $\gamma$ fixed.

\subsection{Properties using ``orthogonal'' regressors}

We first derive the MSE formulas
under the  simplifying assumption of ``orthogonal'' regressors $(G^\top G =n I)$ and
show that the Empirical Bayes estimator converges to an ``optimal'' estimator in terms of its MSE.
This fact has close connections to the so called ``Stein'' estimators
\citep{Stein1961}, \citep{SteinAS1981}, \citep{EfronMorris1973}.
The same optimality properties are attained,
asymptotically, when the columns of $G$ are realizations of uncorrelated processes
having the same variance. This is of interest in
the system identification scenario considered in \citep{ChiusoPCDC2010,ChiusoPNIPS2010,ChiusoPAuto2011}
since it arises when one performs identification with i.i.d. white noises as inputs.
We then consider the more general case of correlated regressors
(see Section \ref{sec:MSE2}) and show that essentially the same holds for a weighted
version of the MSE.

In this section, it is convenient
to introduce the following notation:
\[
\vE[\,\cdot\,]:=\E[\,\cdot\,|\,\lam,\,\theta=\btheta]\quad\mbox{and}\quad
\vVar[\,\cdot\,]:=\E[\,\cdot\,|\,\lam,\,\theta=\btheta] .
\]
We now report an expression for the MSE of the Bayes estimators
$\hat \theta(\lambda):= \E\left[\theta\,|\,y,\lambda\right]$ (proof follows from
standard calculations and is therefore omitted). 

\begin{proposition}\label{prop_MSE}
Consider the model \eqref{GroupMeasmod}
under the probabilistic model
described in Fig. \ref{BN}(b).
The Mean Squared Error of the Bayes estimator
 $\hat \theta(\lambda):= \E\left[\theta|y,\lambda\right]$ given $\lam$ and $\theta=\btheta$
 is 
 \begin{eqnarray}
 \nonumber
 MSE(\lambda)&=&\!\!\!\Trace\left[\vE\left[(\hat \theta(\lambda)-\theta)
 (\hat \theta(\lambda)-\theta)^\top\right]\right]
 \\ \label{MSE}
  &=&\!\!\! {\Trace}\left[\sigma^2 \left(G^\top G + \sigma^2 \Lambda^{-1}\right)^{-1}\left(G^\top G + \sigma^2\Lambda^{-1} \btheta\btheta^\top\Lambda^{-1}  \right)\left(G^\top G + \sigma^2 \Lambda^{-1}\right)^{-1}\right].
 \end{eqnarray}
\end{proposition}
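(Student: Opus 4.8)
The plan is to exploit the closed form $\hat\theta(\lam) = (G^\top G + \sigt\Lam^{-1})^{-1}G^\top y$ from \eqref{AlternativeGteta} together with the fact that the estimator is affine in $y$. Write $A := (G^\top G + \sigt\Lam^{-1})^{-1}$, which is symmetric, and substitute the data equation $y = G\btheta + v$ (recall $\theta=\btheta$ is fixed, so the only randomness is $v$) to obtain $\hat\theta(\lam) = AG^\top G\btheta + AG^\top v$. The one algebraic simplification worth isolating is the identity
\[
AG^\top G - I = A\bigl(G^\top G - A^{-1}\bigr) = A\bigl(G^\top G - G^\top G - \sigt\Lam^{-1}\bigr) = -\sigt A\Lam^{-1},
\]
so that the estimation error splits into a deterministic (bias) part and a zero-mean stochastic part,
\[
\hat\theta(\lam) - \btheta = -\sigt A\Lam^{-1}\btheta + AG^\top v .
\]

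Next I would form the outer product $(\hat\theta(\lam)-\btheta)(\hat\theta(\lam)-\btheta)^\top$ and apply $\vE[\,\cdot\,]$. Since $\vE[v]=0$, the two cross terms vanish, leaving the bias-squared contribution $\sigma^4 A\Lam^{-1}\btheta\btheta^\top\Lam^{-1}A$ plus the variance contribution $AG^\top\vE[vv^\top]GA = \sigt AG^\top G A$, where I use $\vE[vv^\top] = \sigt I$. Collecting these and pulling out the symmetric factor $A$ on both sides gives
\[
\vE\bigl[(\hat\theta(\lam)-\btheta)(\hat\theta(\lam)-\btheta)^\top\bigr]
= \sigt A\bigl(G^\top G + \sigt\Lam^{-1}\btheta\btheta^\top\Lam^{-1}\bigr)A .
\]
Taking the trace and substituting back $A = (G^\top G + \sigt\Lam^{-1})^{-1}$ yields \eqref{MSE} exactly.

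There is essentially no genuine obstacle: this is a routine second-moment computation, which is why the main text reports the formula without proof. The only two points requiring a moment's care are (i) recognizing the identity $AG^\top G - I = -\sigt A\Lam^{-1}$, since this is precisely what produces the $\Lam^{-1}$ factors flanking $\btheta\btheta^\top$ in the final expression, and (ii) the implicit assumption that $\Lam$ is nonsingular so $\Lam^{-1}$ makes sense; the boundary case $\lam_i=0$, in which the corresponding block of $\hat\theta(\lam)$ is identically zero, can be treated separately by a limiting argument or by working directly with the equivalent form $\hat\theta(\lam) = \Lam G^\top\Sig_y(\lam)^{-1}y$ of \eqref{AlternativeGteta}.
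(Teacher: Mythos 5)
Your computation is correct: the bias--variance decomposition via the identity $AG^\top G - I = -\sigma^2 A\Lambda^{-1}$ is exactly the ``standard calculation'' the paper invokes when it omits the proof of Proposition \ref{prop_MSE}, and your final expression matches \eqref{MSE}. The remark on handling $\lambda_i=0$ via the form $\hat\theta(\lambda)=\Lambda G^\top\Sigma_y(\lambda)^{-1}y$ is a sensible extra precaution, but otherwise there is nothing to add.
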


We can now minimize the expression for $MSE(\lambda)$ given in (\ref{MSE}) with respect to
$\lam$ to obtain the optimal minimum mean squared error estimator. In the case where
$G^\top G=nI$ this computation is straightforward and is recorded in the following proposition.

\begin{corollary}\label{prop_min_MSE}
Assume that $G^\top G = nI$ in Proposition \ref{prop_MSE}.
Then
MSE($\lam$) is globally minimized by choosing
\begin{equation}\label{lambda_opt}
\lambda_i=\lambda^{opt}_i :=\frac{\|\btheta^{(i)}\|^2}{k_i},\quad i=1,\dots,p.
\end{equation}
\end{corollary}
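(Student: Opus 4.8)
The plan is to use the orthogonality hypothesis $G^\top G = nI$ to turn the trace in \eqref{MSE} into a sum of $p$ scalar functions, one per group, and then minimize each of them separately over $\lambda_i\ge 0$.

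First I would note that under $G^\top G = nI$ the matrix $G^\top G + \sigt\Lam^{-1}$ is blockdiagonal with $i$-th block $\left(n + \sigt/\lambda_i\right)I_{k_i}$, so $(G^\top G + \sigt\Lam^{-1})^{-1}$ is blockdiagonal with $i$-th block $\tfrac{\lambda_i}{n\lambda_i + \sigt}I_{k_i}$. Then I would split the trace in \eqref{MSE} according to the two summands of the central factor $G^\top G + \sigt\Lam^{-1}\btheta\btheta^\top\Lam^{-1}$. The part coming from $G^\top G = nI$ contributes $\sigt n\sum_{i=1}^p k_i\lambda_i^2/(n\lambda_i+\sigt)^2$, since the trace of a scalar multiple of $I_{k_i}$ just scales by $k_i$. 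For the part coming from $\sigt\Lam^{-1}\btheta\btheta^\top\Lam^{-1}$, I would set $M := (G^\top G + \sigt\Lam^{-1})^{-1}\Lam^{-1}$, which is a \emph{symmetric} blockdiagonal matrix with $i$-th block $\tfrac{1}{n\lambda_i+\sigt}I_{k_i}$; using symmetry, $\Trace[M\btheta\btheta^\top M] = \btheta^\top M^2\btheta = \sum_{i=1}^p \norm{\btheta^{(i)}}^2/(n\lambda_i+\sigt)^2$, so this part contributes $\sigma^4$ times that sum. Adding the two pieces gives the separable expression
\[
MSE(\lambda) = \sum_{i=1}^p g_i(\lambda_i), \qquad g_i(\lambda_i) := \frac{\sigt n k_i\lambda_i^2 + \sigma^4\norm{\btheta^{(i)}}^2}{(n\lambda_i + \sigt)^2}.
\]

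Finally I would minimize each $g_i$ over $\lambda_i\ge 0$. A direct differentiation gives $g_i'(\lambda_i) = 2n\sigma^4\bigl(k_i\lambda_i - \norm{\btheta^{(i)}}^2\bigr)/(n\lambda_i+\sigt)^3$, whose sign on $[0,\infty)$ is exactly that of $k_i\lambda_i - \norm{\btheta^{(i)}}^2$. Hence $g_i$ strictly decreases and then strictly increases, with unique global minimizer $\lambda_i = \norm{\btheta^{(i)}}^2/k_i = \lambda_i^{opt}$; in the degenerate case $\btheta^{(i)}=0$ the function $g_i$ is nondecreasing and is minimized at $\lambda_i = 0$, again matching \eqref{lambda_opt}. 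Since the coordinates decouple in $MSE(\lambda)$, the joint minimizer over $\R_+^p$ is $(\lambda_1^{opt},\dots,\lambda_p^{opt})$, which is the claim.

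There is no deep obstacle here — the corollary is a computation — so the only points demanding care are the bookkeeping in reducing the trace to scalar-per-block contributions (in particular the identity $\Trace[M\btheta\btheta^\top M] = \btheta^\top M^2\btheta$, which relies on $M$ being symmetric), and verifying that the stationary point is a \emph{global} minimizer over the constraint set $\R_+$ rather than merely local; the sign analysis of $g_i'$ disposes of the latter immediately.
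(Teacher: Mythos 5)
Your proof is correct and is exactly the ``straightforward'' computation the paper alludes to but does not write out: under $G^\top G = nI$ the trace in \eqref{MSE} separates into per-block rational functions of the $\lambda_i$, and the sign analysis of each derivative identifies $\lambda_i^{opt}=\norm{\btheta^{(i)}}^2/k_i$ (with $\lambda_i=0$ in the degenerate case $\btheta^{(i)}=0$) as the global minimizer in \eqref{lambda_opt}. No gaps; the blockwise reduction, the bias--variance split of the central factor, and the global (not merely stationary) minimality are all handled correctly.
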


Next consider the Maximum a Posteriori estimator of $\lambda$ again
under the simplifying assumption $G^\top G = n I$.
Note that, under the noninformative prior ($\gamma=0$), this Maximum a Posteriori  
estimator reduces to the standard Maximum (marginal) Likelihood approach to estimating the prior distribution of $\theta$.
Consequently, we continue to call the resulting procedure
Empirical Bayes (a.k.a. Type-II Maximum Likelihood, \citep{BergerBook}).

\begin{proposition}\label{Prop_lambda_HGLasso}
Consider model \eqref{GroupMeasmod}  under the probabilistic model
described in Fig. \ref{BN}(b), and assume that $G^\top G = n I$.
Then the estimator of $\lambda_i$ obtained by maximizing the marginal posterior ${\bf p}(\lambda|y) $,
\begin{equation}\label{LambdaEB}
\{\hat\lambda_1(\gamma),...,\hat\lambda_p(\gamma)\}:=
\arg \max_{{\lambda} \in \R_+^{p}} {\bf p}(\lambda|y) =
\arg \max_{{\lambda} \in \R_+^{p}}\int {\bf p}(y,\theta|\lambda){\bf p}_\gamma(\lambda)\, d\theta,
\end{equation}
is given by
\begin{equation}\label{LambdaEB_explicit_gamma}
\hat \lambda_i(\gamma) = {\rm max}\left(0,
\frac{1}{4\gam}\left[\sqrt{k_i^2+8\gam \norm{\hat\theta^{(i)}_{LS}}^2}-
\left(k_i+\frac{4\sig^2\gam}{n}\right)\right]\right)\ ,
\end{equation}
where
$$
\hat\theta_{LS}^{(i)}=\frac{1}{n}\left(G^{(i)}\right)^\top y
$$
is the Least Squares estimator of the $i-$th block $\theta^{(i)}$.
As $\gamma \rightarrow 0$ ($\gamma=0$ corresponds to an improper flat prior) the expression \eqref{LambdaEB_explicit_gamma} yields:
\begin{equation}\label{LambdaEB_explicit}
\mathop{\rm lim}_{\gamma\rightarrow 0} \hat \lambda_i(\gamma) = {\rm max}\left(0,\frac{\|\hat\theta_{LS}^{(i)}\|^2}{k_i} - \frac{\sigma^2}{n}\right)\ .
\end{equation}
In addition,
the probability $\P[\hat\lambda_i(\gamma) =0\,|\,\theta=\btheta]$ of setting $\hat\lambda_i=0$ is given by
\begin{equation}\label{eqP0}
\P [\hat\lambda_i(\gamma) =0\,|\,\theta=\btheta] =
\P \left[\chi^2\left(k_i,\|\btheta^{(i)}\|^2\frac{n}{\sigma^2}\right)
\leq \left(k_i+2\gamma\frac{\sigma^2}{n}\right)\right]\ ,
\end{equation}
where $\chi^2(d,\mu)$ denotes a noncentral $\chi^2$ random variable with $d$ degrees of freedom and noncentrality parameter $\mu$.
\end{proposition}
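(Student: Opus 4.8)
The plan is to start from Theorem~\ref{GLassotheALTERNATIVE}. Integrating the joint posterior over $\theta$ gives the marginal posterior, $\int_{\R^{m}}p(\theta,\lambda\,|\,y)\,d\theta={\bf p}(\lambda\,|\,y)$, so the two characterizations of $\hat\lambda$ in \eqref{LambdaEB} and \eqref{AlternativeGlambda} differ only by the $\lambda$-free factor ${\bf p}(y)$, and hence $\hat\lambda$ minimizes the objective in \eqref{AlternativeGlambda}. The key observation is that the orthogonality $G^\top G=nI$ makes that objective \emph{separable}, so the $p$-dimensional problem splits into $p$ independent scalar minimizations. Concretely, by the Sylvester determinant identity and the block-diagonal structure of $\Lambda$, $\det\Sigma_y(\lambda)=\sigma^{2n}\det(I_m+\sigma^{-2}G^\top G\Lambda)=\sigma^{2n}\prod_{i=1}^{p}(1+n\lambda_i/\sigma^2)^{k_i}$; and from the inversion formula \eqref{mif}, using that $G^\top G=nI$ forces $G^{(i)\top}G^{(j)}=n\delta_{ij}I_{k_i}$, one gets $y^\top\Sigma_y(\lambda)^{-1}y=\sigma^{-2}\|y\|^2-\sum_{i=1}^{p}\frac{n^2\lambda_i}{\sigma^2(\sigma^2+n\lambda_i)}\|\hat\theta^{(i)}_{LS}\|^2$ with $\hat\theta^{(i)}_{LS}=\frac1n G^{(i)\top}y$. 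Dropping the $\lambda$-free terms and using $\frac{n^2\lambda_i}{\sigma^2(\sigma^2+n\lambda_i)}=\frac{n}{\sigma^2}-\frac{n}{\sigma^2+n\lambda_i}$, the objective of \eqref{AlternativeGlambda} becomes $\sum_{i=1}^{p}j_i(\lambda_i)$ with $j_i(t):=\frac{k_i}{2}\log(1+nt/\sigma^2)+\frac{n\|\hat\theta^{(i)}_{LS}\|^2}{2(\sigma^2+nt)}+\gamma t$, so $\hat\lambda_i$ is the minimizer of $j_i$ over $[0,\infty)$.

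The heart of the argument, and the step I expect to require the most care, is this scalar minimization, because $j_i$ is \emph{not} convex: its logarithmic term is concave while the second term is convex. I would compute $j_i'(t)=\frac{k_i n}{2(\sigma^2+nt)}-\frac{n^2\|\hat\theta^{(i)}_{LS}\|^2}{2(\sigma^2+nt)^2}+\gamma$ and note that, after the substitution $u:=\sigma^2+nt\ge\sigma^2$, multiplying by $u^2>0$ shows that the sign of $j_i'$ coincides with that of the upward-opening parabola $q(u):=\gamma u^2+\frac{k_i n}{2}u-\frac{n^2}{2}\|\hat\theta^{(i)}_{LS}\|^2$. Since $q(0)\le 0$ and $q(u)\to+\infty$, $q$ has a unique nonnegative root $u^\ast=\frac{n}{4\gamma}\big(\sqrt{k_i^2+8\gamma\|\hat\theta^{(i)}_{LS}\|^2}-k_i\big)$, is negative for $u<u^\ast$ and positive for $u>u^\ast$. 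Hence on $[0,\infty)$ the function $j_i$ is either nondecreasing (when $u^\ast\le\sigma^2$) or strictly decreasing and then increasing with its unique interior stationary point at $t=(u^\ast-\sigma^2)/n$ (when $u^\ast>\sigma^2$); in either case its minimizer is $\max\{0,(u^\ast-\sigma^2)/n\}$, which rearranges precisely to \eqref{LambdaEB_explicit_gamma}. The limit \eqref{LambdaEB_explicit} then follows from the expansion $\sqrt{k_i^2+8\gamma a}=k_i+4\gamma a/k_i+o(\gamma)$ as $\gamma\to 0$ together with continuity of $t\mapsto\max\{0,t\}$.

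Finally, for the probability formula I would observe that $\hat\lambda_i(\gamma)=0$ exactly when $u^\ast\le\sigma^2$, i.e. $\sqrt{k_i^2+8\gamma\|\hat\theta^{(i)}_{LS}\|^2}\le k_i+4\gamma\sigma^2/n$; squaring both positive sides and cancelling reduces this to the equivalent event $\frac{n}{\sigma^2}\|\hat\theta^{(i)}_{LS}\|^2\le k_i+2\gamma\sigma^2/n$. It then remains only to identify the law of $\frac{n}{\sigma^2}\|\hat\theta^{(i)}_{LS}\|^2$ conditional on $\theta=\btheta$: since $y=G\btheta+v$ with $v\sim\mathcal{N}(0,\sigma^2 I)$ and $G^{(i)\top}G^{(j)}=n\delta_{ij}I_{k_i}$, we get $\hat\theta^{(i)}_{LS}=\btheta^{(i)}+\frac1n G^{(i)\top}v\sim\mathcal{N}\big(\btheta^{(i)},\frac{\sigma^2}{n}I_{k_i}\big)$, so $\frac{\sqrt n}{\sigma}\hat\theta^{(i)}_{LS}$ is a vector of $k_i$ independent unit-variance Gaussians with mean $\frac{\sqrt n}{\sigma}\btheta^{(i)}$, whence $\frac{n}{\sigma^2}\|\hat\theta^{(i)}_{LS}\|^2\sim\chi^2\big(k_i,\frac{n}{\sigma^2}\|\btheta^{(i)}\|^2\big)$. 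Inserting this into the event above yields \eqref{eqP0} and completes the proof. \hfill$\blacksquare$
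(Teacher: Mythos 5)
Your proposal is correct, and at its computational core it coincides with the paper's proof: under $G^\top G=nI$ your stationarity condition $j_i'(\lambda_i)=0$ is exactly the equation the paper obtains by inserting $\Trace\bigl(G^{(i)\top}\Sigma_y^{-1}G^{(i)}\bigr)=\frac{nk_i}{n\lambda_i+\sigma^2}$ and $\|G^{(i)\top}\Sigma_y^{-1}y\|_2^2=\bigl(\frac{n}{n\lambda_i+\sigma^2}\bigr)^2\|\hat\theta_{LS}^{(i)}\|^2$ into the KKT conditions \eqref{KKTHGL} with $\mu_i=0$, and both arguments finish the probability claim \eqref{eqP0} identically, via $\hat\theta_{LS}^{(i)}\sim{\cal N}(\btheta^{(i)},\frac{\sigma^2}{n}I_{k_i})$ and the noncentral $\chi^2$ law of $\frac{n}{\sigma^2}\|\hat\theta_{LS}^{(i)}\|^2$. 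Where you differ is the route to that equation and what you prove about its root. The paper works top-down from the necessary conditions of Proposition \ref{KKTofHGL}, picks the unique nonnegative root of the resulting quadratic, and saturates at zero otherwise; since the objective \eqref{AlternativeGlambda} is non-convex, KKT conditions are only necessary, so strictly speaking the paper identifies a stationary point and leaves the global-optimality of the saturated formula implicit. You instead derive the separable closed form of the objective explicitly (via the Sylvester identity for $\log\det\Sigma_y$ and \eqref{mif} for the quadratic form), reduce to $p$ independent scalar problems, and carry out a complete sign analysis of $j_i'$ through the parabola $q(u)$, showing each $j_i$ is decreasing then increasing on $[0,\infty)$. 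This buys a genuine certification that \eqref{LambdaEB_explicit_gamma} is the global minimizer per block (not merely a KKT point), at the cost of a somewhat longer derivation; the paper's KKT route is shorter and reuses machinery already set up for comparing HGLasso with MKL, which is why it is the natural choice in context. Your treatment of the $\gamma\to 0$ limit and of the zero event via $u^\ast\le\sigma^2$ reproduces, after squaring, exactly the paper's threshold condition, so no gap remains.
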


Note that the expression of $\hat\lambda_i(\gamma)$ in Proposition \ref{Prop_lambda_HGLasso}
has the form of a  ``saturation''. In particular,  for $\gamma=0$, we have
\begin{equation}\label{unsaturated}
\hat\lambda_i(0) = {\rm max}(0,\hat\lambda_i^*), \quad
\mbox{where}\quad \hat\lambda_i^*:= \frac{\|\hat\theta_{LS}^{(i)}\|^2}{k_i} - \frac{\sigma^2}{n}\ .
\end{equation}
The following proposition shows that the ``unsaturated'' estimator $\hat\lambda_i^*$ is an
unbiased and consistent estimator of $\lambda_i^{opt}$ which minimizes the
Mean Squared Error while $\hat\lambda_i(0)$
is only asymptotically unbiased and consistent.

\begin{corollary}\label{Prop_unbiased}
Under the assumption $G^\top G = nI$, the estimator of
$\hat\lambda^*:=\{\lambda_1^*,..,\lambda_p^*\}$ in \eqref{unsaturated} is an unbiased and mean square consistent estimator of
$\lambda^{opt}$ which minimizes the Mean Squared Error,
while $\hat\lambda(0):=\{\lambda_1(0),..,\lambda_p(0)\}$
is asymptotically unbiased and consistent, i.e.:
\begin{equation}\label{unbiased}
\E [\hat \lambda^*_i\,|\,\theta=\btheta ] =
\lambda^{opt}_i \quad
\mathop{\rm lim}_{n\rightarrow \infty} \E [\hat \lambda_i(0) \,|\,\theta=\btheta ]= \lambda^{opt}_i
\end{equation}
and
\begin{equation}\label{consistent}
\mathop{\rm lim}_{n\rightarrow \infty} \hat \lambda^*_i \mathop{=}^{m.s.}\lambda^{opt}_i \quad \mathop{\rm lim}_{n\rightarrow \infty} \hat \lambda_i(0) \mathop{=}^{m.s.} \lambda^{opt}_i
\end{equation}
where $\displaystyle{\mathop{=}^{m.s.}}$ denotes convergence in mean square.
\end{corollary}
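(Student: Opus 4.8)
The plan is to reduce the whole statement to the exact sampling distribution of the block least squares estimator $\hat\theta^{(i)}_{LS}=\tfrac1n(G^{(i)})^\top y$ conditional on $\theta=\btheta$, and then read off the four assertions. First I would use the orthogonality hypothesis $G^\top G=nI$, which in block form says $(G^{(i)})^\top G^{(j)}=n\,\delta_{ij}I_{k_i}$. Plugging $y=G\btheta+v$ with $v\sim\mathcal N(0,\sig^2 I)$ into the definition of $\hat\theta^{(i)}_{LS}$ collapses the cross terms and leaves $\hat\theta^{(i)}_{LS}=\btheta^{(i)}+\tfrac1n(G^{(i)})^\top v$, so that conditional on $\theta=\btheta$ we have $\hat\theta^{(i)}_{LS}\sim\mathcal N\big(\btheta^{(i)},\tfrac{\sig^2}{n}I_{k_i}\big)$. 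Hence $\tfrac{n}{\sig^2}\norm{\hat\theta^{(i)}_{LS}}^2$ is noncentral $\chi^2$ with $k_i$ degrees of freedom and noncentrality parameter $\tfrac{n}{\sig^2}\norm{\btheta^{(i)}}^2$ --- consistent with the probability already recorded in Proposition~\ref{Prop_lambda_HGLasso} --- whose mean is $k_i+\tfrac{n}{\sig^2}\norm{\btheta^{(i)}}^2$ and whose variance is $2\big(k_i+\tfrac{2n}{\sig^2}\norm{\btheta^{(i)}}^2\big)$.

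From the first moment, $\vE[\norm{\hat\theta^{(i)}_{LS}}^2]=\tfrac{\sig^2 k_i}{n}+\norm{\btheta^{(i)}}^2$, so $\vE[\hat\lambda^*_i]=\tfrac1{k_i}\vE[\norm{\hat\theta^{(i)}_{LS}}^2]-\tfrac{\sig^2}{n}=\tfrac{\norm{\btheta^{(i)}}^2}{k_i}=\lambda^{opt}_i$ by Corollary~\ref{prop_min_MSE}; this is the exact unbiasedness claimed for $\hat\lambda^*_i$. From the second moment, $\vVar[\hat\lambda^*_i]=\tfrac1{k_i^2}\vVar[\norm{\hat\theta^{(i)}_{LS}}^2]=\tfrac{2\sig^4}{n^2 k_i}+\tfrac{4\sig^2\norm{\btheta^{(i)}}^2}{n k_i^2}=O(1/n)$, which tends to zero; since by unbiasedness $\vE[(\hat\lambda^*_i-\lambda^{opt}_i)^2]=\vVar[\hat\lambda^*_i]$, this is mean square consistency of $\hat\lambda^*_i$.

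For the saturated estimator $\hat\lambda_i(0)=\max(0,\hat\lambda^*_i)$, the key remark is that projection onto $[0,\infty)$ is nonexpansive, so $|\max(0,x)-c|\le|x-c|$ for every $c\ge0$; applying this with $c=\lambda^{opt}_i=\norm{\btheta^{(i)}}^2/k_i\ge0$ gives $|\hat\lambda_i(0)-\lambda^{opt}_i|\le|\hat\lambda^*_i-\lambda^{opt}_i|$ pointwise, whence $\vE[(\hat\lambda_i(0)-\lambda^{opt}_i)^2]\le\vE[(\hat\lambda^*_i-\lambda^{opt}_i)^2]\to0$, the desired mean square consistency. Asymptotic unbiasedness of $\hat\lambda_i(0)$ then follows from Cauchy--Schwarz, $|\vE[\hat\lambda_i(0)]-\lambda^{opt}_i|\le\big(\vE[(\hat\lambda_i(0)-\lambda^{opt}_i)^2]\big)^{1/2}\to0$; alternatively one can sandwich $\lambda^{opt}_i=\vE[\hat\lambda^*_i]\le\vE[\hat\lambda_i(0)]\le\vE[\hat\lambda^*_i]+\tfrac{\sig^2}{n}$, using that $\hat\lambda_i(0)\ge\hat\lambda^*_i$ and that the gap $\hat\lambda_i(0)-\hat\lambda^*_i$ vanishes except on the event $\{\hat\lambda^*_i<0\}$, where (since $\norm{\hat\theta^{(i)}_{LS}}^2\ge0$) it is at most $\sig^2/n$.

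I do not expect a genuine obstacle: the argument is essentially bookkeeping. The only points requiring care are keeping the block structure imposed by $G^\top G=nI$ straight --- so that $\hat\theta^{(i)}_{LS}$ really is centered at $\btheta^{(i)}$ with the small covariance $\tfrac{\sig^2}{n}I_{k_i}$ --- and invoking the standard mean and variance of the noncentral $\chi^2$. If one wishes to avoid the latter, the explicit Gaussian density of $\hat\theta^{(i)}_{LS}$ together with the formula for $\P[\hat\lambda_i(0)=0\mid\theta=\btheta]$ from Proposition~\ref{Prop_lambda_HGLasso} at $\gamma=0$ permits a direct, if longer, computation of $\vE[\hat\lambda_i(0)]$ and $\vE[\hat\lambda_i(0)^2]$.
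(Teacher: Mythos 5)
Your proposal is correct. The first half coincides with the paper's own argument: both identify the exact sampling distribution of $\hat\theta^{(i)}_{LS}$ under $G^\top G=nI$, pass to the noncentral $\chi^2$ law of $\tfrac{n}{\sigma^2}\|\hat\theta^{(i)}_{LS}\|^2$, and read off $\E[\hat\lambda^*_i\,|\,\theta=\btheta]=\lambda^{opt}_i$ and $\Var[\hat\lambda^*_i\,|\,\theta=\btheta]=\tfrac{2\sigma^4}{k_in^2}+\tfrac{4\|\btheta^{(i)}\|^2\sigma^2}{k_i^2 n}\to 0$, which are exactly the paper's \eqref{EB_mean_var}. Where you diverge is in the treatment of the saturated estimator $\hat\lambda_i(0)=\max(0,\hat\lambda^*_i)$. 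The paper first proves asymptotic unbiasedness by splitting $\E[\hat\lambda_i(0)]$ into $\E[\hat\lambda^*_i]$ plus an integral over the event $\{\|\hat\theta^{(i)}_{LS}\|^2\le k_i\sigma^2/n\}$, bounded by $\tfrac{\sigma^2}{n}$ times a vanishing probability, and then obtains mean square consistency from the claim that ``saturation reduces variance'' ($\Var[\hat\lambda_i(0)]\le\Var[\hat\lambda^*_i]$) combined with the bias result. You instead invoke nonexpansiveness of the projection onto $[0,\infty)$, $|\max(0,x)-c|\le|x-c|$ for $c\ge 0$, applied at $c=\lambda^{opt}_i$, which bounds the MSE of $\hat\lambda_i(0)$ directly by that of $\hat\lambda^*_i$ and gives mean square consistency in one line; asymptotic unbiasedness then follows by Cauchy--Schwarz, reversing the paper's logical order. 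Your alternative sandwich bound ($\E[\hat\lambda^*_i]\le\E[\hat\lambda_i(0)]\le\E[\hat\lambda^*_i]+\sigma^2/n$) is essentially the paper's integral estimate in disguise. Your route is slightly cleaner in that it bypasses the variance-domination claim, which the paper asserts without proof (it does hold, since $x\mapsto\max(0,x)$ is $1$-Lipschitz, but your projection argument makes such a justification unnecessary).
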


\begin{remark}\label{rem:lambda_zero}
Note that if $\btheta^{(i)}=0$ the optimal value  $\lambda_i^{opt}$ is zero.
Hence \eqref{consistent} shows that asymptotically $\hat\lambda_i(0)$ converges to zero.
However, in this case,
 it is easy to see from \eqref{eqP0} that
$$
\mathop{\rm lim}_{n\rightarrow\infty}\P[\hat \lambda_i(0) = 0\,|\,\theta=\btheta ] <1.
$$
There is in fact no contradiction between these two statements because one can easily show that
for all $\epsilon >0$,
$$\P[\hat \lambda_i(0) \in [0,\epsilon)\,|\,\theta=\btheta ] \mathop{\longrightarrow}^{n\rightarrow\infty} 1.$$
In order to guarantee that
$\mathop{\rm lim}_{n\rightarrow\infty}\P[\hat \lambda_i(\gamma) = 0\,|\,\theta=\btheta ] =1$
one must chose
$\gamma=\gamma_n$ so that $2\frac{\sigma^2}{n}\gamma_n \rightarrow \infty$,
so that $\gamma_n$ grows faster than $n$.
This is in line with the well known requirements for Lasso to be model selection consistent.
In fact, Theorem \ref{Lassomarg} shows that the link between   $\gamma$  and the regularization parameter $\gamma_L$ for Lasso is given by $\gamma_L = \sqrt{2\gamma}$. The condition
$n^{-1}\gamma_n \rightarrow \infty$ translates into $n^{-1/2} \gamma_{Ln} \rightarrow \infty$,
 a well known condition for Lasso to be model selection consistent \citep{Model_Selection_Lasso,Bach_Consistency_GLassoMKL}.
\end{remark}

The results obtained so far suggest that the Empirical Bayes resulting from HGLasso has desirable properties with respect to the MSE of the estimators.
One wonders whether the same favorable properties are inherited by MKL or, equivalently, by GLasso.
The next proposition shows that this is not the case.
In fact, for $\btheta^{(i)}\neq 0$, MKL does not yield consistent estimators for $\lambda_i^{opt}$;
in addition, for $\theta^{(i)}=0$, the probability of setting $\hat\lambda_i(\gamma)$ to zero
(see equation \eqref{eqP0MKL}) is much smaller than that obtained using HGLasso (see equation \eqref{eqP0}); this is also illustrated in Figure \ref{fig:HGLvsMKL} (top).
Also note that, as illustrated in Figure \ref{fig:HGLvsMKL} (bottom),
when the ``true'' $\theta$ is equal to zero, MKL tends to give much larger values of $\hat\lambda$
than those given by HGLasso.
This results in larger values of $\|\hat\theta\|$ (see Figure \ref{fig:HGLvsMKL}).

\begin{proposition}\label{Prop_lambda_MKL}
Consider model \eqref{GroupMeasmod} under the probabilistic model
described in Fig. \ref{BN}(b),  and assume $G^\top G = n I$.
Then the estimator of $\lambda_i$ obtained by maximizing the joint
posterior ${\bf p}(\lambda,\phi|y) $ (see equations \eqref{Eqphi} and \eqref{Factorphi}),
\begin{equation}\label{LambdaEBMKL}
\{\hat\lambda(\gamma),...,\hat\lambda_p(\gamma)\}
:= \arg \max_{{\lambda} \in \R_+^{p}, {\phi} \in \R_+^{m}} {\bf p}(\lambda,\phi|y),
\end{equation}
is given by
\begin{equation}\label{LambdaEB_explicit_gammaMKL}
\hat \lambda_i(\gamma) = {\rm max}
\left(0,\frac{\|\hat\theta_{LS}^{(i)}\|}{\sqrt{2\gamma}} - \frac{\sigma^2}{n}\right),
\end{equation}
where
$$
\hat\theta_{LS}^{(i)}=\frac{1}{n}\left(G^{(i)}\right)^\top y
$$
is the Least Squares estimator of the $i-$th block $\theta^{(i)}$ for $i=1,\dots,p$.
For $n \rightarrow \infty$ the estimator $\hat \lambda_i(\gamma)$ satisfies
\begin{equation}\label{not_consistent_MKL}
 \mathop{\rm lim}_{n\rightarrow\infty}
 \hat \lambda_i(\gamma)  \mathop{=}^{m.s.} \frac{\|\btheta^{(i)}\|}{\sqrt{2\gamma}}\ .
 \end{equation}
In addition,
the probability $\P[\hat\lambda_i(\gamma) =0\,|\,\theta=\btheta ]$
of setting $\hat\lambda_i(\gamma)=0$ is given by
\begin{equation}\label{eqP0MKL}
\P_\theta[\hat\lambda_i(\gamma) =0\,|\,\theta=\btheta ] =
\P \left[\chi^2\left(k_i,\|\btheta^{(i)}\|^2\frac{n}{\sigma^2}\right)\leq 2\gamma\frac{\sigma^2}{n}\right]\ .
\end{equation}
\end{proposition}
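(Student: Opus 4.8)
The plan is to reduce the joint-posterior maximization to the reduced marginal problem already identified in Theorem~\ref{MKLBayes} and then exploit the orthogonality hypothesis to split it into $p$ decoupled scalar minimizations. By \eqref{redMarg}, the $\lambda$-component of the maximizer of $\mathbf{p}(\lambda,\phi\mid y)$ solves
\[
\hat\lambda=\arg\min_{\lambda\in\R_+^{p}}\ \tfrac12\,y^\top\bigl(K(\lambda)+\sigma^2 I\bigr)^{-1}y+\gamma\sum_{i=1}^{p}\lambda_i,\qquad K(\lambda)=\sum_{i=1}^{p}\lambda_i\,G^{(i)}G^{(i)\top}.
\]
So the first thing I would do is diagonalize $K(\lambda)+\sigma^2 I$. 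Since $G^\top G=nI$ forces $G^{(i)\top}G^{(j)}=n\,\delta_{ij}I_{k_i}$, the matrices $P_i:=\tfrac1n G^{(i)}G^{(i)\top}$ are mutually orthogonal projections and, writing $P_0:=I-\sum_i P_i$, one gets $K(\lambda)+\sigma^2 I=\sigma^2 P_0+\sum_i(n\lambda_i+\sigma^2)P_i$ and $\bigl(K(\lambda)+\sigma^2 I\bigr)^{-1}=\sigma^{-2}P_0+\sum_i(n\lambda_i+\sigma^2)^{-1}P_i$. Substituting and using $\|P_i y\|^2=\tfrac1n\|G^{(i)\top}y\|^2=n\|\hat\theta_{LS}^{(i)}\|^2$, the objective becomes, up to the $\lambda$-free term $\tfrac1{2\sigma^2}\|P_0 y\|^2$, the separable sum $\sum_{i=1}^p\bigl(\tfrac{n\|\hat\theta_{LS}^{(i)}\|^2}{2(n\lambda_i+\sigma^2)}+\gamma\lambda_i\bigr)$.

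Next I would solve each scalar subproblem $\min_{\lambda_i\ge0}\tfrac{n\|\hat\theta_{LS}^{(i)}\|^2}{2(n\lambda_i+\sigma^2)}+\gamma\lambda_i$. The summand is strictly convex in $\lambda_i$ with increasing derivative; the unconstrained stationarity condition is $(n\lambda_i+\sigma^2)^2=n^2\|\hat\theta_{LS}^{(i)}\|^2/(2\gamma)$, i.e.\ $\lambda_i=\|\hat\theta_{LS}^{(i)}\|/\sqrt{2\gamma}-\sigma^2/n$, so the constrained minimizer is the positive part of this quantity, which is exactly \eqref{LambdaEB_explicit_gammaMKL}.

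For the stochastic statements I would first record that, under $\theta=\btheta$, $\hat\theta_{LS}^{(i)}=\tfrac1n G^{(i)\top}(G\btheta+v)=\btheta^{(i)}+\tfrac1n G^{(i)\top}v\sim\mathcal N\!\bigl(\btheta^{(i)},\tfrac{\sigma^2}{n}I_{k_i}\bigr)$, again using $G^\top G=nI$. For \eqref{not_consistent_MKL}, since $x\mapsto\max(0,x)$ is $1$-Lipschitz and $\|\btheta^{(i)}\|/\sqrt{2\gamma}=\max(0,\|\btheta^{(i)}\|/\sqrt{2\gamma})$, one has $\bigl|\hat\lambda_i(\gamma)-\|\btheta^{(i)}\|/\sqrt{2\gamma}\bigr|\le(2\gamma)^{-1/2}\bigl|\|\hat\theta_{LS}^{(i)}\|-\|\btheta^{(i)}\|\bigr|+\sigma^2/n$; squaring, taking expectations, and using the reverse triangle inequality $\E\bigl[(\|\hat\theta_{LS}^{(i)}\|-\|\btheta^{(i)}\|)^2\bigr]\le\E\|\hat\theta_{LS}^{(i)}-\btheta^{(i)}\|^2=k_i\sigma^2/n\to0$ gives mean-square convergence. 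For \eqref{eqP0MKL}, by \eqref{LambdaEB_explicit_gammaMKL} the event $\{\hat\lambda_i(\gamma)=0\}$ equals $\{\|\hat\theta_{LS}^{(i)}\|^2\le 2\gamma\sigma^4/n^2\}$; multiplying by $n/\sigma^2$ and recognizing that $\tfrac{n}{\sigma^2}\|\hat\theta_{LS}^{(i)}\|^2\sim\chi^2\!\bigl(k_i,\tfrac{n}{\sigma^2}\|\btheta^{(i)}\|^2\bigr)$ yields the stated probability.

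I do not expect any serious obstacle: the only step that needs care is the diagonalization, namely verifying that block-orthogonality simultaneously diagonalizes $K(\lambda)+\sigma^2 I$ so that the reduced objective genuinely decouples into $p$ one-dimensional problems; everything after that is elementary one-variable calculus together with the standard noncentral-$\chi^2$ law of $\|\hat\theta_{LS}^{(i)}\|^2$.
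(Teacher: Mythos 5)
Your proposal is correct and follows essentially the same route as the paper: both pass to the reduced objective \eqref{redMarg} via Theorem \ref{MKLBayes}, exploit $G^\top G=nI$ so that the optimality condition for each $\lambda_i$ reduces to $(n\lambda_i+\sigma^2)^2=n^2\|\hat\theta_{LS}^{(i)}\|^2/(2\gamma)$ (hence \eqref{LambdaEB_explicit_gammaMKL}), and obtain \eqref{eqP0MKL} from the noncentral $\chi^2$ law of $\tfrac{n}{\sigma^2}\|\hat\theta_{LS}^{(i)}\|^2$. The differences are only presentational: you make the per-block decoupling explicit through the projection diagonalization and scalar convexity rather than inserting the orthogonality simplification into the KKT conditions of Proposition \ref{KKTMKL}, and you spell out the mean-square limit \eqref{not_consistent_MKL} with a Lipschitz/reverse-triangle bound where the paper merely refers to the argument of Proposition \ref{Prop_unbiased}.
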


Note that the limit of the MKL estimators $\hat\lambda_i(\gamma)$ as $n\rightarrow\infty$ depends on
$\gamma$.
Therefore, using MKL (GLASSO), one cannot hope to get consistent estimators of $\lambda_i^{opt}$.
Indeed, for $\|\btheta^{(i)}\|^2\neq 0$, consistency of $\hat\lambda_i(\gamma)$
requires $\gamma \rightarrow \frac{k^2_i}{2\|\btheta^{(i)}\|^2}$, which is a circular requirement.

\begin{figure*}
  \begin{center}
  \begin{tabular}{cc}
    \includegraphics[width=0.5\linewidth,angle=0]{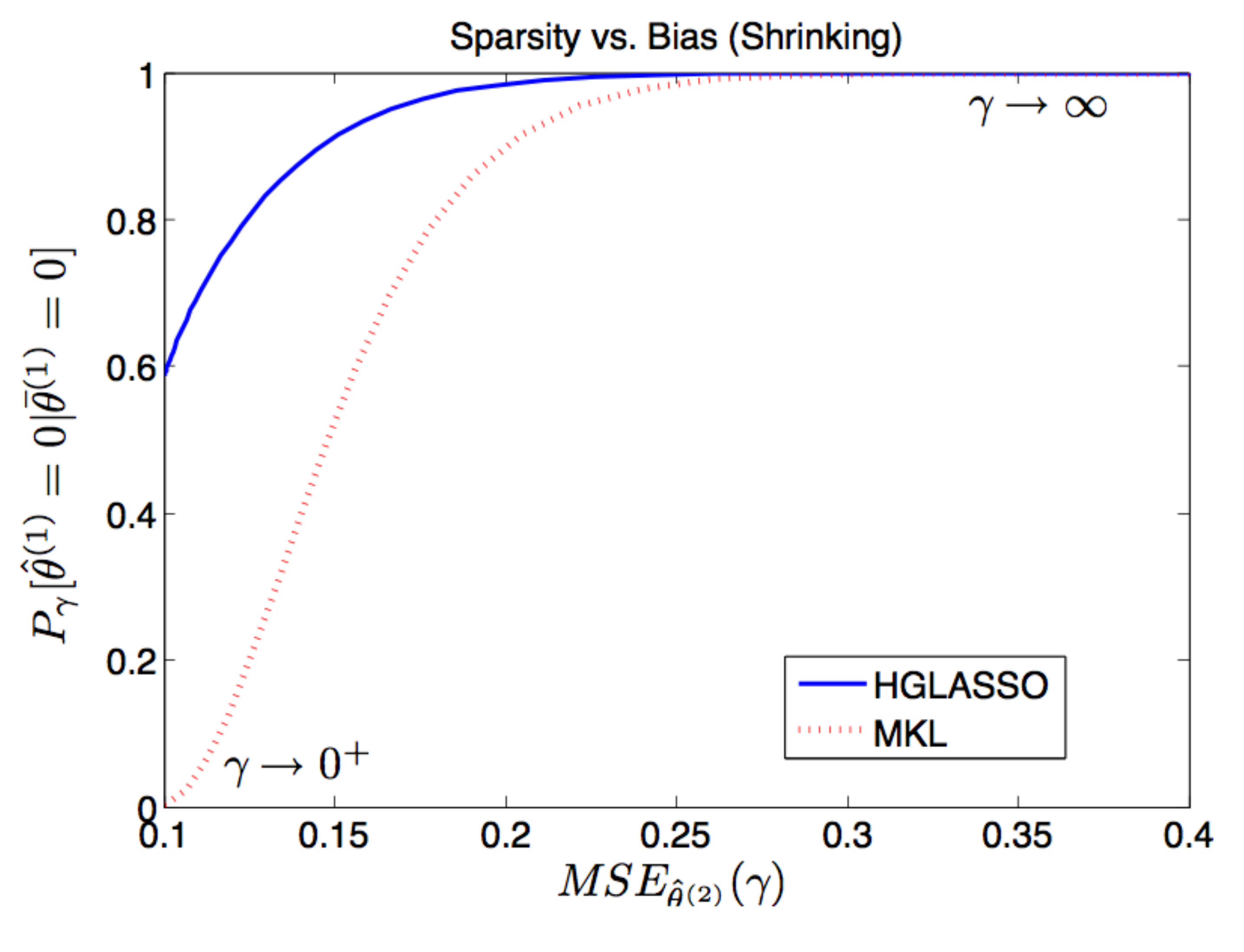}&
    \includegraphics[width=0.5\linewidth,angle=0]{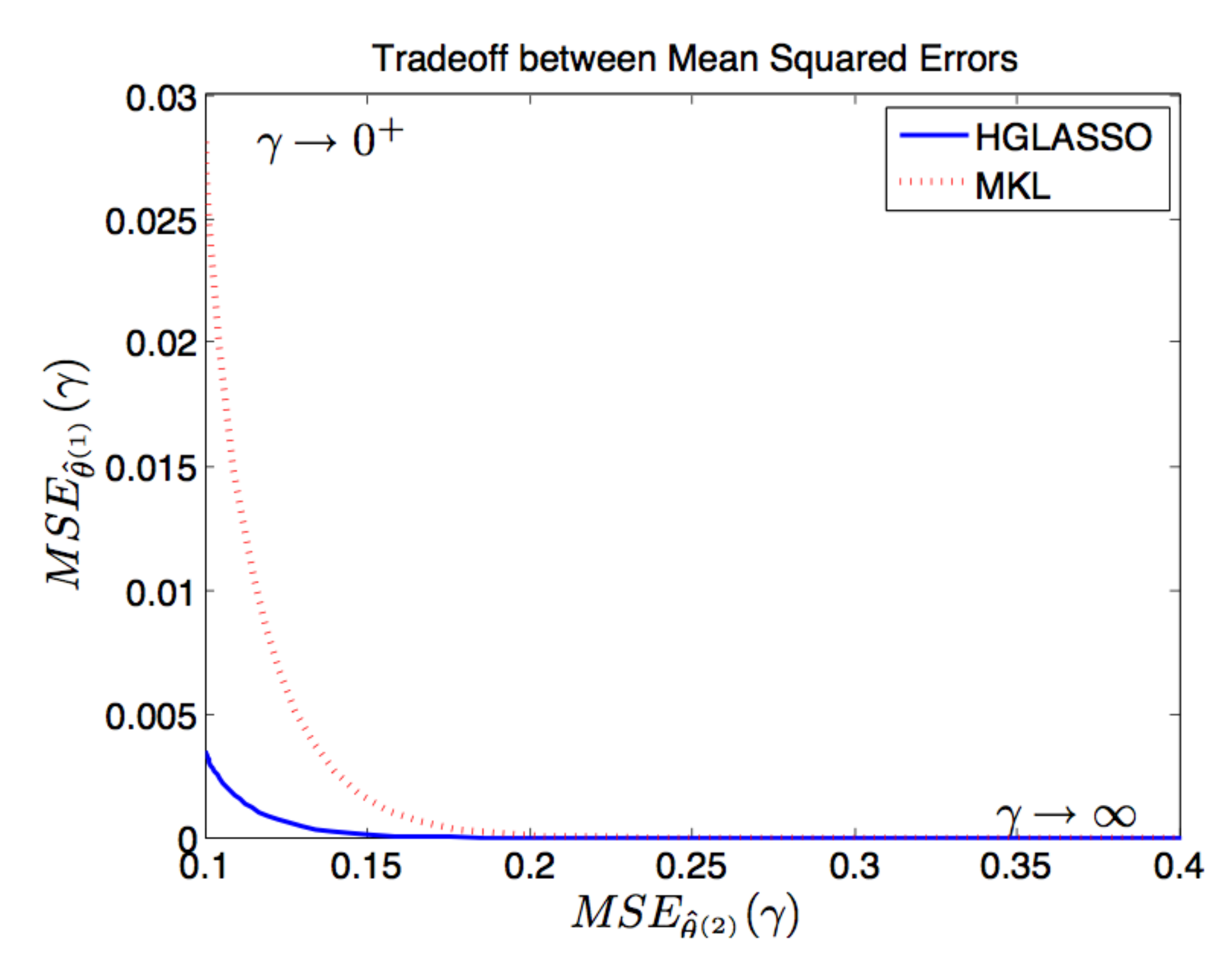}
    \end{tabular}
\caption{This plot has been generated assuming that there are two  blocks ($p=2$) of dimension $k_1=k_2=10$ with $\bar\theta^{(1)}=0$ and all the components of the true $\bar\theta^{(2)} \in \R^{10}$ set to one. The matrix $G$ equal to the identity, so that the output dimension ($y\in \R^n$) is $n=20$; the noise variance equal to $0.1$.
Left: probability of setting $\hat\theta^{(1)}$ to zero vs Mean Squared Error in $\hat\theta^{(2)}$. Curves are parametrized  in $\gamma \in [0,+\infty)$. Right:  Mean Squared Error in $\hat\theta^{(1)}$ vs Mean Squared Error in $\hat\theta^{(2)}$. Curves are parametrized  in $\gamma \in [0,+\infty)$.
}
    \label{fig:HGLvsMKL}
  \end{center}
\end{figure*}

\subsection{Asymptotic properties using general regressors}\label{sec:MSE2}

In this subsection, we replace the deterministic matrix $G$ with $G_n(\omega)$,
where $G_n(\omega)$ represents an $n \times m$ matrix defined on the complete
probability space $(\Omega, \mathcal{B},\P)$ with $\omega$ a generic element
of $\Omega$ and  $\mathcal{B}$ the sigma field of Borel regular measures. In particular, the rows of $G_n$ are independent\footnote{The independence assumption can be removed and replaced by mixing conditions.} realizations from a zero-mean random vector with positive definite covariance $\Psi$. 
We will also assume that
the (mild) assumptions for the convergence in probability of $G_n^ \top G_n/n$ to $\Psi$,
as $n$ goes to $\infty$, are satisfied, see e.g. \citep{Loeve}.\\
As in the previous part of this section,
$\lambda$ and $\theta$ are seen as parameters, and the ``true''
value of $\theta$ is $\btheta$.
Hence, all the randomness present in the next formulas
comes only from $G_n$ and the measurement noise.
Below, the dependence of $\Sigma_y(\lam)$ on $G_n$, and hence of $n$, 
is omitted to simplify the notation. Furthermore, $ \longrightarrow_p$
denotes convergence in probability.\\

\begin{theorem}
\label{MainConvergenceTheorem}
For known $\gamma$ and conditional on $\theta=\btheta$, define 
\begin{equation}\label{lambda^n}
\hat{\lambda}^n =  \arg \min_{{\lambda} \in \mathcal{C} \bigcap \R_+^{p}}  \frac{1}{2} \log
\det(\Sigma_y(\lam)) + \frac{1}{2} y^\top \Sigma_y^{-1}(\lam) y + \gamma
\sum_{i=1}^{p} {\lambda}_i ,
\end{equation}
where $\mathcal{C}$ is any $p$-dimensional ball with radius larger than 
$\max_i \frac{\| \btheta^{(i)}\|^2}{k_i}$.\\
Then, we have  
\begin{eqnarray}\label{conv1}
& \hat \lambda_{i}^n \longrightarrow_p \frac{-k_i + \sqrt{k_i^2 + 8\gamma\|\btheta^{(i)}\|^2}}{4\gamma} \quad & \mbox{if \quad $\gamma>0$ \quad and \quad $\|\theta^{(i)}\| > 0$,}
\\ \label{conv2}
& \hat \lambda_{i}^n \longrightarrow_p  \frac{\|\btheta^{(i)}\|^2}{k_i} \quad & \mbox{if \quad  $\gamma=0$ \quad and \quad $\|\theta^{(i)}\| > 0$, and} \\ \label{conv3}
& \hat \lambda_{i}^n \longrightarrow_p  0 \quad & \mbox{if \quad  $\gamma\geq 0$ \quad and \quad $\|\theta^{(i)}\| = 0$.} 
\end{eqnarray}
\end{theorem}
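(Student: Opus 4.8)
The plan is to read \eqref{lambda^n} as an $M$-estimation problem: show that a suitably recentered version of the objective converges, uniformly on compacta of the open orthant, to an explicit limiting criterion $\Phi$, identify $\arg\min\Phi$, and then patch in a separate boundary analysis for the blocks with $\btheta^{(i)}=0$. Throughout write $H_n:=\tfrac1n G_n^\top G_n$ and $w_n:=\tfrac1n G_n^\top y$. The first step is purely probabilistic: by the weak law of large numbers for independent rows (under the assumptions recalled in the theorem, cf.\ \citep{Loeve}), $H_n\longrightarrow_p\Psi$, $\tfrac1n G_n^\top v\longrightarrow_p 0$ and $\tfrac1n\|v\|^2\longrightarrow_p\sigma^2$; consequently $w_n=H_n\btheta+\tfrac1n G_n^\top v\longrightarrow_p\Psi\btheta$, and on the event $\{H_n\succ0\}$ (probability $\to1$) $H_n^{-1}w_n\longrightarrow_p\btheta$ and $\tfrac1n\|y\|^2\longrightarrow_p\btheta^\top\Psi\btheta+\sigma^2$. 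All the randomness in what follows comes only from $G_n$ and $v$.

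Second, I reduce the criterion. Using $\det(\sigma^2 I_n+G\Lambda G^\top)=(\sigma^2)^n\det(I_m+\sigma^{-2}\Lambda^{1/2}G^\top G\Lambda^{1/2})$ together with \eqref{mif}, the objective in \eqref{lambda^n} equals $\tfrac n2\log\sigma^2+\tfrac1{2\sigma^2}\|y\|^2+\tilde F_n(\lambda)$, where the first two terms are independent of $\lambda$ and $\tilde F_n$ is expressed purely through $n,\sigma^2,\gamma,\Lambda,H_n,w_n$. On any region $\{\lambda\in\mathcal C\cap\R_+^p:\ \lambda_i\ge\delta\ \forall i\}$, $\delta>0$, the matrix $\sigma^{-2}n\Lambda^{1/2}H_n\Lambda^{1/2}$ has smallest eigenvalue tending to $\infty$ uniformly, so the Neumann expansions of $(I+P)^{-1}$ and of $\log\det(I+P)$ give, uniformly on that region,
\[
\tilde F_n(\lambda)=c_n+\sum_{i=1}^p\Bigl(\tfrac{k_i}{2}\log\lambda_i+\tfrac{\|(H_n^{-1}w_n)^{(i)}\|^2}{2\lambda_i}+\gamma\lambda_i\Bigr)+o_p(1),
\]
with $c_n$ independent of $\lambda$; since $(H_n^{-1}w_n)^{(i)}\longrightarrow_p\btheta^{(i)}$, the recentered criterion converges uniformly on each such region to $\Phi(\lambda):=\sum_i\bigl(\tfrac{k_i}{2}\log\lambda_i+\tfrac{\|\btheta^{(i)}\|^2}{2\lambda_i}+\gamma\lambda_i\bigr)$. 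The block $\phi_i$ of $\Phi$ with $\|\btheta^{(i)}\|>0$ is coercive on $(0,\infty)$ with unique minimizer solving $2\gamma t^2+k_i t-\|\btheta^{(i)}\|^2=0$, i.e.\ $t=\tfrac{-k_i+\sqrt{k_i^2+8\gamma\|\btheta^{(i)}\|^2}}{4\gamma}$ when $\gamma>0$ and $t=\|\btheta^{(i)}\|^2/k_i$ when $\gamma=0$ (these are \eqref{conv1}, \eqref{conv2}); since that value is $\le\|\btheta^{(i)}\|^2/k_i$, it lies in the interior of $\mathcal C$ by the radius hypothesis, so the ball constraint is inactive in the limit. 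When $\|\btheta^{(i)}\|=0$, $\phi_i$ is strictly increasing on $(0,\infty)$, so $\Phi$ is minimized at the boundary point with $i$-th coordinate $0$, giving \eqref{conv3}.

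It remains to (a) show that with probability tending to one the blocks $\hat\lambda_j^n$ with $\btheta^{(j)}\neq0$ stay in a fixed compact subset of $(0,\infty)$, and (b) show $\hat\lambda_i^n\longrightarrow_p0$ whenever $\btheta^{(i)}=0$; given these, uniform convergence of the recentered criterion on the relevant interior compacta plus the usual argmin-continuity argument yields the three claimed limits. For (a), the upper bound is built into $\mathcal C$, and to rule out $\hat\lambda_j^n\to0$ I use the stationarity condition of Proposition \ref{KKTofHGL} at an interior coordinate, $\Trace(G^{(j)\top}WG^{(j)})-\|G^{(j)\top}Wy\|^2+2\gamma=0$ (with the obvious inequality replacing it if $\hat\lambda_j^n=0$), together with the exact identities $G^{(j)\top}Wy=\lambda_j^{-1}\theta_{HGL}^{(j)}(\lambda)$ and $\Trace(G^{(j)\top}WG^{(j)})=O_p\!\bigl(\min(n,\lambda_j^{-1})\bigr)$: a case split on whether $n\hat\lambda_j^n\to\infty$ or $n\hat\lambda_j^n=O(1)$ produces a contradiction in each case — in the first, since then $\theta_{HGL}^{(j)}\longrightarrow_p\btheta^{(j)}$, the equation forces $\hat\lambda_j^n$ toward the positive root of $2\gamma t^2+k_j t-\|\btheta^{(j)}\|^2=0$, which is $>0$; in the second, $\|G^{(j)\top}Wy\|^2$ is of order $n^2$ (group $j$'s signal component orthogonal to the other ranges is then essentially unfitted) and dominates the order-$n$ trace term. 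For (b), I use the exact one-block decomposition $F_n(\lambda)=F_n(\lambda_{-i},0)+\Delta_i(\lambda_i;\lambda_{-i})$ with $\Delta_i(t)=\tfrac12\log\det(I+tM_i)-\tfrac t2 u_i^\top(I+tM_i)^{-1}u_i+\gamma t$, $M_i=G^{(i)\top}\Sigma_{y,-i}^{-1}G^{(i)}$, $u_i=G^{(i)\top}\Sigma_{y,-i}^{-1}y$: because $\btheta^{(i)}=0$, the residual identity $\Sigma_{y,-i}^{-1}y=\sigma^{-2}(y-\hat f_{-i})$ and part (a) (which keeps the other signal blocks, hence $\Sigma_{y,-i}$, nondegenerate in the signal directions) give $\|u_i\|^2=O_p(n)$, $\Trace(M_i)=O_p(n)$ and $M_i\succeq\mathrm{const}\cdot n\,I$; hence for $t\ge\delta$ the function $\Delta_i$ equals $\tfrac{k_i}{2}\log(\delta n)+O_p(1)$ and is strictly increasing, while $\Delta_i(0)=0$, so the minimizer must have $i$-th coordinate $<\delta$, and $\delta>0$ being arbitrary gives $\hat\lambda_i^n\longrightarrow_p0$.

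The crux — and the step I expect to be the main obstacle — is exactly the failure of $\tilde F_n$ to converge uniformly all the way to $\partial\R_+^p$: the term $\log\det(I+\sigma^{-2}n\Lambda^{1/2}H_n\Lambda^{1/2})$ behaves like $m\log n$ where $\Lambda$ is nonsingular but stays $O(1)$ where a coordinate is of order $1/n$, so one cannot simply invoke a single ``convergence of minimizers'' lemma. This is why the boundary statement (b) and the no-shrinkage-to-zero statement in (a) have to be extracted by the hands-on KKT/one-block estimates above — which, conveniently, also survive the other null blocks collapsing to zero, since the signal-block limits of $\Trace(G^{(j)\top}WG^{(j)})$ and $G^{(j)\top}Wy$ are unaffected by $\lambda_i\to0$ for $i$ with $\btheta^{(i)}=0$ — after which everything is stitched together through the uniform convergence of the recentered criterion on interior compacta.
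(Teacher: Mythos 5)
Your proposal is correct in substance but takes a genuinely different route from the paper. You recenter the objective and show, via the inversion formula \eqref{mif} and a Neumann expansion of the $\log\det$ term, that on compacta bounded away from the boundary of $\R_+^p$ the recentered criterion converges uniformly in probability to the separable limit $\Phi(\lambda)=\sum_i\bigl(\tfrac{k_i}{2}\log\lambda_i+\tfrac{\|\btheta^{(i)}\|^2}{2\lambda_i}+\gamma\lambda_i\bigr)$, whose per-block minimizers are exactly the limits in \eqref{conv1}--\eqref{conv3}; the degenerate regimes are then excluded by the KKT conditions of Proposition \ref{KKTofHGL} for the signal blocks and by an exact one-block decomposition of the objective for the null blocks. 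The paper never forms such a global limit criterion: it first proves, through the normalized-objective decomposition of Lemma \ref{StructureObjective} (a bias argument on the term $S_5$), that any minimizing sequence has $n\hat\lambda_j^n\rightarrow_p\infty$ on the nonzero blocks, then diagonalizes the problem one block at a time via the SVD of $\Sigma_{\bar v}^{-1/2}G^{(i)}/\sqrt n$ (Lemmas \ref{lemma_diagonal_model} and \ref{lemma_structure_epsilon}) and applies the implicit function theorem to the per-coordinate stationarity condition together with a subsequence argument (Remark \ref{SubConv}, Lemma \ref{MainConvergenceTheoremAlong1}). Your route gives a transparent identification of the limit and treats all blocks simultaneously; the paper's route pays the price of the SVD machinery but gets the needed uniformity for free, since the one-dimensional diagonalized problem isolates exactly the quantities ($d_{k,n}$, $\epsilon_{k,n}$, $w_{k,n}$) that must be controlled at the random minimizer.

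Two caveats. First, several of your order-of-magnitude claims, namely $M_i\succeq c\,n\,I$ uniformly over the random $\hat\lambda_{-i}\in\mathcal C\cap\R_+^{p-1}$, the bound $\|u_i\|^2=O_p(n)$ at the random $\hat\lambda_{-i}$, and the claim that $\|G^{(j)\top}Wy\|^2$ is of exact order $n^2$ when $n\hat\lambda_j^n=O(1)$, all rest on the minimal angle between the column span of one block and that of the remaining blocks being bounded away from zero; this is not implied by part (a) of your argument but by the design assumptions, and proving it is exactly the content of the paper's Lemma \ref{lemma_min_angle}, so it must be supplied. Second, the final ``stitching through uniform convergence on interior compacta'' cannot be invoked literally, because the minimizer leaves every such compactum in its null coordinates (and the derivative of the objective in a null coordinate is $O_p(n)$, so small boundary coordinates are not a negligible perturbation of the criterion); what actually closes the argument for \eqref{conv1}--\eqref{conv2} is your KKT analysis in the regime $n\hat\lambda_j^n\to\infty$, which in turn requires first excluding the bounded regime for \emph{all} signal blocks simultaneously before the consistency $\theta_{HGL}^{(j)}\rightarrow_p\btheta^{(j)}$ can be used. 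With those steps made explicit, your proof goes through.
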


We now show that, when $\gamma=0$, the above result 
relates to the problem of minimizing
the MSE of the $i$-th block with respect to $\lambda_i$,
with all the other components of
$\lambda$ coming from $\hat{\lambda}^n$.
If $\hat \theta^{(i)}_n(\lambda)$ denotes the $i$-th component of 
the HGLasso estimate of $\theta$ defined in (\ref{AlternativeGteta}),
our aim is to optimize the objective
$$
MSE_n(\lambda_i) := \Trace\left[\vE\left[(\hat \theta_n^{(i)}(\lambda)-\theta^{(i)})
 (\hat \theta_n^{(i)}(\lambda)-\theta^{(i)})^\top\right]\right] \quad \mbox{with} \quad  \lambda_j=\bar{\lambda}_j^n \quad \mbox{for} \quad j \neq i
$$
where is $\bar\lambda_j^n$ is any sequence satisfying condition 
\begin{equation}\label{Mlambda-first}
\begin{array}{c} \displaystyle{\mathop{\rm lim}_{n\rightarrow\infty}}\; f_n =+\infty \quad {\rm where} \quad f_n : = \displaystyle{\mathop{\rm min}_{j\in I_1}} \;n\lambda_j^n,
\end{array}
\end{equation}
where $I_1:=\{j\,:\, j\ne i\mbox{ and }\bar{\theta}^{(j)}\ne 0\}$
(condition \eqref{Mlambda-first} appears again in the Appendix as \eqref{Mlambda}).  Note that, in particular, $\bar\lambda_j^n = \hat \lambda_j^n $ in \eqref{lambda^n} satisfy  \eqref{Mlambda-first} in probability. 

In the following lemma, whose proof is in the Appendix, 
we introduce a change of variables that is key for our understanding
of the asymptotic properties of these more general regressors.

\begin{lemma}\label{lemma_diagonal_model}
Fix $i \in \{1, \dots, p\}$ and consider the decomposition
\begin{equation}\label{GroupMeasmod_i}
\begin{array}{rcl}
y &=& G^{(i)} \theta^{(i)} +\sum_{j=1, j\neq i}^{p}  G^{(j)} \theta^{(j)} +v \\
& = &  G^{(i)} \theta^{(i)} + \bar v
\end{array}
\end{equation}
of the
linear measurement model  \eqref{GroupMeasmod}  and assume~\eqref{BoundedG} holds. 
Define
$$\Sigma_{\bar v}:= \sum_{j=1, j\neq i}^{p}  G^{(j)} \left(G^{(j)}\right)^\top \lambda_j + \sigma^2I $$
and assume that $\lambda_j$ finite $\forall j\neq i$. Consider now the singular value decomposition
\begin{equation}\label{SVD}
\frac{\Sigma_{\bar v}^{-1/2} G^{(i)}}{\sqrt{n}}   =U_n^{(i)} D_n^{(i)} \left(V_n^{(i)}\right)^\top
\end{equation} where each $D_n^{(i)}=\mbox{diag}(d_{k,n}^{(i)})$ is $k_i\times k_i$
diagonal matrix. 
Then \eqref{GroupMeasmod_i} can be transformed into the equivalent linear model
\begin{equation}\label{Diagonal_model}
\begin{array}{rcl}
z_n^{(i)} &=& D_n^{(i)} \beta_n^{(i)} +\epsilon_n^{(i)},
\end{array}
\end{equation}
where
\begin{equation}\label{Diagonal_model_definition}
\begin{array}{ccc}
z_n^{(i)} := \left(U_n^{(i)}\right)^\top \frac{\Sigma_{\bar v}^{-1/2} y}{\sqrt{n}}=(z_{k,n}^{(i)}),&
\beta_n^{(i)} := \left(V_n^{(i)}\right)^\top \theta^{(i)}=(\beta_{k,n}^{(i)}),&
\epsilon_n^{(i)} := \left(U_n^{(i)}\right)^\top \frac{\Sigma_{\bar v}^{-1/2} \bar v}{\sqrt{n}}
=(\epsilon_{k,n}^{(i)}),
\end{array}
\end{equation}
and  $D_n^{(i)}$ is uniformly (in $n$) bounded and bounded away from zero.
\end{lemma}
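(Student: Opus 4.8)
The plan is to separate the statement into two independent parts: (i) verifying that \eqref{Diagonal_model}--\eqref{Diagonal_model_definition} is an \emph{exact} algebraic rewriting of \eqref{GroupMeasmod_i}, and (ii) proving the uniform two--sided spectral bounds on $D_n^{(i)}$ using \eqref{BoundedG}. Part (i) is purely mechanical. Since $\Sigma_{\bar v}=\sigma^2 I+\sum_{j\neq i}\lambda_j G^{(j)}(G^{(j)})^\top\succeq\sigma^2 I$ and the $\lambda_j$ ($j\neq i$) are finite, $\Sigma_{\bar v}$ is a well-defined symmetric positive definite matrix and $\Sigma_{\bar v}^{-1/2}$ exists. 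Multiplying \eqref{GroupMeasmod_i} on the left by $\Sigma_{\bar v}^{-1/2}/\sqrt n$, substituting the thin SVD $\Sigma_{\bar v}^{-1/2}G^{(i)}/\sqrt n=U_n^{(i)}D_n^{(i)}(V_n^{(i)})^\top$ from \eqref{SVD}, and then multiplying on the left by $(U_n^{(i)})^\top$ (using $(U_n^{(i)})^\top U_n^{(i)}=I_{k_i}$) yields $z_n^{(i)}=D_n^{(i)}\beta_n^{(i)}+\epsilon_n^{(i)}$ with $z_n^{(i)},\beta_n^{(i)},\epsilon_n^{(i)}$ exactly as in \eqref{Diagonal_model_definition}; no hypothesis beyond positive definiteness of $\Sigma_{\bar v}$ is used here.

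For part (ii), observe that the squared diagonal entries $\{(d_{k,n}^{(i)})^2\}_{k=1}^{k_i}$ are precisely the eigenvalues of the $k_i\times k_i$ matrix $M_n:=\tfrac1n (G^{(i)})^\top\Sigma_{\bar v}^{-1}G^{(i)}$, so it suffices to produce $n$-independent constants $0<c'\le C'$ with $c' I_{k_i}\preceq M_n\preceq C' I_{k_i}$. The upper bound is immediate: $\Sigma_{\bar v}\succeq\sigma^2 I$ gives $\Sigma_{\bar v}^{-1}\preceq\sigma^{-2}I$, hence $M_n\preceq\sigma^{-2}\tfrac1n(G^{(i)})^\top G^{(i)}$, a principal submatrix of $\sigma^{-2}G^\top G/n$, which is bounded above by $\sigma^{-2}\overline\psi\, I_{k_i}$ where $\overline\psi$ is the uniform upper bound on the spectrum of $G^\top G/n$ furnished by \eqref{BoundedG}. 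For the lower bound, let $\tilde G$ collect the columns of the blocks $G^{(j)}$, $j\neq i$, let $\tilde\Lambda:=\mathrm{blockdiag}(\lambda_j I_{k_j})_{j\neq i}\succeq 0$, and let $P^\perp$ be the orthogonal projection onto $\mathrm{col}(\tilde G)^\perp$. Because $\Sigma_{\bar v}=\sigma^2 I+\tilde G\tilde\Lambda\tilde G^\top$ with $\mathrm{ran}(\tilde G\tilde\Lambda\tilde G^\top)\subseteq\mathrm{col}(\tilde G)$, the orthogonal subspaces $\mathrm{col}(\tilde G)$ and $\mathrm{col}(\tilde G)^\perp$ are $\Sigma_{\bar v}$-invariant, $\Sigma_{\bar v}$ restricts to $\sigma^2 I$ on $\mathrm{col}(\tilde G)^\perp$, and $\Sigma_{\bar v}^{-1}\succ0$, so decomposing any $u$ along the two subspaces gives $u^\top\Sigma_{\bar v}^{-1}u\ge\sigma^{-2}\|P^\perp u\|^2$, i.e. $\Sigma_{\bar v}^{-1}\succeq\sigma^{-2}P^\perp$. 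Hence $M_n\succeq\tfrac1{n\sigma^2}(G^{(i)})^\top P^\perp G^{(i)}$, and for a unit vector $w\in\R^{k_i}$, writing $z\in\R^m$ for a vector with $i$-th block equal to $w$ and free remaining blocks,
\[
w^\top(G^{(i)})^\top P^\perp G^{(i)}w=\mathrm{dist}\big(G^{(i)}w,\ \mathrm{col}(\tilde G)\big)^2=\min_{z:\,z^{(i)}=w}\|Gz\|^2\ \ge\ n\,\underline\psi\min_{z:\,z^{(i)}=w}\|z\|^2= n\,\underline\psi ,
\]
using $\|Gz\|^2=z^\top G^\top G z\ge n\underline\psi\|z\|^2$ and $\|z\|\ge\|z^{(i)}\|=1$, with $\underline\psi$ the uniform lower bound from \eqref{BoundedG}. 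Thus $M_n\succeq(\underline\psi/\sigma^2)I_{k_i}$, and one may take $c'=\sqrt{\underline\psi}/\sigma$ and $C'=\sqrt{\overline\psi}/\sigma$.

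The only delicate point is the lower bound on $M_n$: the crude estimate $\Sigma_{\bar v}^{-1}\succeq\|\Sigma_{\bar v}\|^{-1}I$ is useless, since $\|\Sigma_{\bar v}\|$ may grow linearly in $n$ (as the $\lambda_j$, $j\neq i$, are fixed and $\|G^{(j)}(G^{(j)})^\top\|$ scales like $n$), so the argument must exploit the structural fact that $\Sigma_{\bar v}$ stays \emph{exactly} $\sigma^2 I$ on $\mathrm{col}(\tilde G)^\perp$, combined with the fact — guaranteed by \eqref{BoundedG}, i.e. by uniform positive definiteness of $G^\top G/n$ (asymptotic non-collinearity of the blocks) — that $G^{(i)}$ retains a component of Euclidean norm at least $\sqrt{n\underline\psi}$ orthogonal to $\mathrm{col}(\tilde G)$. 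A minor nuisance, which the projection argument above circumvents, is that a Woodbury-type formula for $\Sigma_{\bar v}^{-1}$ would require $\tilde\Lambda\succ0$; since we use only $\tilde\Lambda\succeq0$ this never arises, and in any case the blocks with $\lambda_j=0$ can simply be dropped from $\tilde G$ as they do not enter $\Sigma_{\bar v}$.
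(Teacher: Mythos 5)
Your proof is correct, and it reaches the same structural conclusion as the paper's argument (the bounds on $D_n^{(i)}$ hinge on the fact that $\Sigma_{\bar v}$ acts as $\sigma^2 I$ on the orthogonal complement of the column span of the other blocks, while \eqref{BoundedG} guarantees $G^{(i)}$ retains an order-$\sqrt{n}$ component in that complement), but the technical route differs in a way worth noting. The paper first proves an auxiliary minimal-angle result (Lemma \ref{lemma_min_angle}), writes $\Sigma_{\bar v}^{-1}$ through the SVD $PSP^\top$ of $\tfrac1n\sum_{j\neq i}\lambda_j G^{(j)}(G^{(j)})^\top$, and then lower-bounds $D_n^{(i)}$ by discarding the $P$-part and invoking the angle bound on $P_\perp^\top G^{(i)}/\sqrt n$ --- a step stated as an inequality between non-symmetric matrix products that is somewhat informal as written. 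You instead pass to the symmetric matrix $M_n=\tfrac1n (G^{(i)})^\top\Sigma_{\bar v}^{-1}G^{(i)}$, whose eigenvalues are exactly the $(d^{(i)}_{k,n})^2$, obtain the upper bound from $\Sigma_{\bar v}^{-1}\preceq\sigma^{-2}I$ plus the principal-submatrix bound, and obtain the lower bound from $\Sigma_{\bar v}^{-1}\succeq\sigma^{-2}P^\perp$ together with the variational identity $w^\top (G^{(i)})^\top P^\perp G^{(i)}w=\min_{z:\,z^{(i)}=w}\|Gz\|^2\ge n c_{min}$, which replaces the paper's angle computation and dispenses with Lemma \ref{lemma_min_angle} altogether for this purpose. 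What your version buys is a fully rigorous two-sided bound phrased in terms of symmetric positive semidefinite orderings, explicit constants $\sqrt{c_{min}}/\sigma\le d^{(i)}_{k,n}\le\sqrt{c_{max}}/\sigma$ (the paper's displayed upper bound $\sigma^{-2}\sqrt{c_{max}}$ in fact carries a small exponent slip, as it should involve $\|\Sigma_{\bar v}^{-1/2}\|=\sigma^{-1}$), and correct handling of blocks with $\lambda_j=0$ without any Woodbury-type assumption; what the paper's route buys is the minimal-angle machinery itself, which it reuses later in the proof of Lemma \ref{lemma_structure_epsilon}, so its auxiliary lemma is not wasted effort in the larger argument. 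The purely algebraic reduction to \eqref{Diagonal_model} is handled identically in both (the paper dismisses it as simple calculations), and your remark that the crude bound $\Sigma_{\bar v}^{-1}\succeq\|\Sigma_{\bar v}\|^{-1}I$ would be useless is exactly the right diagnosis of why the projection structure must be exploited.
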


Lemma \ref{lemma_diagonal_model} shows that
we can consider the transformed linear model associated with the $i$-th block,
i.e.
\begin{equation}\label{Diagonal_model_int}
\begin{array}{rcl}
z_{k,n}^{(i)} &=& d_{k,n}^{(i)} \beta_{k,n}^{(i)} +\epsilon_{k,n}^{(i)}, \quad k=1,\ldots,k_i,
\end{array}
\end{equation}
where all the three variables on the RHS depend on $\bar{\lambda}_j^n$ for $j \neq i$.
In particular, the vector $\beta_{n}^{(i)}$ consists of an orthonormal transformation of $\theta^{(i)}$ 
while the $d_{k,n}^{(i)}$ are all bounded below in probability. 
In addition, 
by letting
\begin{equation}\label{epsilon}
\vE\left[ \epsilon_{k,n}^{(i)} \right] = m_{k,n}, \quad  \vE\left[ (\epsilon_{k,n}^{(i)}-m_{k,n})^2  \right] = \sigma^2_{k,n},
\end{equation}
we also know from Lemma \ref{lemma_structure_epsilon} (see equations \eqref{m_sigma_epsilon} and \eqref{convergence_variance_conditioned})
that, provided $\bar\lambda_j^n$ $(j\neq i)$ satisfy condition \eqref{Mlambda-first}, 
both $m_{k,n}$ and $\sigma^2_{k,n}$ tend to zero (in probability) as $n$ goes to $\infty$. 
Then, after simple computations, one finds that the $MSE$ relative to
$\beta_{n}^{(i)}$ is the following random variable whose statistics depend on $n$: 
\begin{eqnarray}\label{MSE2} \nonumber
MSE_n(\lambda_i) =
\sum_{k=1}^{k_i} \frac{ \beta_{k,n}^{2} +n \lambda_i^2 d_{k,n}^{2} (m_{k,n}^2 + \sigma^2_{k,n}) -2\lambda_i  d_{k,n} m_{k,n}\beta_{k,n} }
{ (1 + n \lambda_i  d_{k,n}^{2})^2 }  \quad \mbox{with} \quad  \lambda_j=\bar{\lambda}_j^n \quad \mbox{for} \quad j \neq i .
\end{eqnarray}
Above, except for $\lambda_i$, the dependence on the block number $i$ was omitted
to improve readability.\\
Now, let $\breve{\lambda}_i^{n}$ denote the minimizer of the following weighted
version of the $MSE_n(\lambda_i)$:
$$
\breve{\lambda}_i^{n} =\arg \min_{\lambda  \in \R_+}
\sum_{k=1}^{k_i} d_{k,n}^{4}\frac{ \beta_{k,n}^{2} +n \lambda_i^2 d_{k,n}^{2} (m_{k,n}^2 + \sigma^2_{k,n}) -2\lambda_i  d_{k,n} m_{k,n}\beta_{k,n} }
{ (1 + n \lambda_i  d_{k,n}^{2})^2 } .
$$
Then, the following result holds.

\begin{proposition}
\label{CommonConvergence}
For $\gamma=0$ and conditional on $\theta=\btheta$, 
the following convergences in probability hold 
\begin{equation}\label{brevelambda} 
\lim_{n \mapsto \infty }  \breve{\lambda}_i^{n} =\frac{\|\btheta^{(i)}\|^2}{k_i}
= \lim_{n \mapsto \infty }  \hat{\lambda}_i^{n}\ , \quad i=1,2,\ldots,p.
\end{equation}
\end{proposition}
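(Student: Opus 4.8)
The plan is to handle the two equalities in \eqref{brevelambda} separately, the second one being essentially free. The estimator $\hat\lambda^n$ of \eqref{lambda^n} with $\gamma=0$ is exactly the one treated in Theorem~\ref{MainConvergenceTheorem}, so $\lim_n\hat\lambda_i^n=\|\btheta^{(i)}\|^2/k_i$ is \eqref{conv2} when $\|\btheta^{(i)}\|>0$ and \eqref{conv3} (whose right-hand side is then $0$) when $\|\btheta^{(i)}\|=0$; the same theorem also shows $\bar\lambda_j^n=\hat\lambda_j^n$ satisfies \eqref{Mlambda-first} in probability, which is the standing hypothesis under everything below. Thus the substance of the proof is $\breve\lambda_i^n\longrightarrow_p\|\btheta^{(i)}\|^2/k_i$.

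I would work inside the diagonal model \eqref{Diagonal_model_int}, using throughout: $\beta_n^{(i)}$ is an orthonormal image of $\theta^{(i)}$, so $\sum_k\beta_{k,n}^2=\|\btheta^{(i)}\|^2$ for every $n$; the $d_{k,n}$ stay in a fixed compact subinterval of $(0,\infty)$ (Lemma~\ref{lemma_diagonal_model}); and $m_{k,n},\sigma^2_{k,n}\longrightarrow_p 0$ with the rates provided by Lemma~\ref{lemma_structure_epsilon} (equations \eqref{m_sigma_epsilon}, \eqref{convergence_variance_conditioned}), in particular $n\sigma^2_{k,n}\to1$. The case $\btheta^{(i)}=0$ is exact and immediate: then $\beta_{k,n}\equiv0$, the weighted objective becomes a sum of terms of the form $c_{k,n}\,\lambda^{2}/(1+n\lambda d_{k,n}^{2})^{2}$ with $c_{k,n}\propto m_{k,n}^2+\sigma^2_{k,n}\ge0$, each nondecreasing in $\lambda\ge0$, so $\breve\lambda_i^n=0=\|\btheta^{(i)}\|^2/k_i$. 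Assume henceforth $\|\btheta^{(i)}\|>0$.

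For $\|\btheta^{(i)}\|>0$ I would first show the minimizer is interior and bounded away from $0$ for all large $n$: the derivative of the objective at $\lambda=0$ is $-2n\sum_k d_{k,n}^{6}\beta_{k,n}^{2}+o_p(n)$, which is $\Theta(n)$ and negative because $\sum_k\beta_{k,n}^2=\|\btheta^{(i)}\|^2>0$, while the objective has a strictly positive limit as $\lambda\to\infty$. Hence $\breve\lambda_i^n$ solves the first-order condition; dividing each summand by the common positive factor and abbreviating $u_{k}=n\breve\lambda_i^n d_{k,n}^{2}$ (which $\to\infty$ on this branch), that condition can be rearranged as $n\,A_n\,\breve\lambda_i^n = B_n$, where $B_n$ is a positive combination of the $\beta_{k,n}^{2}$ (plus terms built from $m_{k,n}$) and $nA_n$ a positive combination of the $\sigma^2_{k,n}$ (again plus $m_{k,n}$–terms). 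The weighting by $d_{k,n}^{4}$ in the definition of $\breve\lambda_i^n$ is precisely what forces the $d_{k,n}$–dependent factors coming from $(1+u_k)^{-3}$ to cancel between $A_n$ and $B_n$, so that, letting $n\to\infty$ and using $\sum_k\beta_{k,n}^2=\|\btheta^{(i)}\|^2$ together with the rates of Lemma~\ref{lemma_structure_epsilon} (the $m_{k,n}$–terms becoming negligible and $n\sigma^2_{k,n}\to1$), one obtains $B_n\longrightarrow_p\|\btheta^{(i)}\|^2$ and $nA_n\longrightarrow_p k_i$, whence $\breve\lambda_i^n\longrightarrow_p\|\btheta^{(i)}\|^2/k_i$. (A consistency check: when $G_n^\top G_n=nI$ all $d_{k,n}$ coincide, the weighted objective is a positive multiple of $MSE_n$, and $nA_n\equiv k_i$, $B_n\equiv\|\btheta^{(i)}\|^2$, so $\breve\lambda_i^n$ is exactly the minimizer of Corollary~\ref{prop_min_MSE}; the weighting transports this conclusion to general regressors.)

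The delicate point, which I expect to be the main obstacle, is that the objective is asymptotically \emph{flat} in $\lambda$ at leading order: for every fixed $\lambda>0$ both the squared-bias part $\beta_{k,n}^{2}/(1+u_k)^2$ and the variance part (of size $\sigma^2_{k,n}$) tend to $0$ as $n\to\infty$ — the Bayes estimator is consistent whatever $\lambda$ is — so the minimizer is pinned only at the next order. Making this rigorous requires expanding $(1+u_k)^{-2}$ one term further, controlling the $m_{k,n}$ and $\sigma^2_{k,n}$ remainders \emph{uniformly} over a compact $\lambda$–interval known a priori to contain $\breve\lambda_i^n$, and then upgrading ``the stationary point converges'' to ``the $\arg\min$ converges'' via the interiority bound and uniqueness of the positive root. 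With the convergences of Lemmas~\ref{lemma_diagonal_model} and \ref{lemma_structure_epsilon} already in hand, each of these is a careful but routine estimate.
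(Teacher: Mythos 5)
Your route for the second equality is exactly what the paper intends: it is just \eqref{conv2}--\eqref{conv3} of Theorem \ref{MainConvergenceTheorem} specialized to $\gamma=0$. For the first equality the paper gives no proof at all (it is declared to follow ``arguments similar to those used in the last part of the proof of Theorem \ref{MainConvergenceTheorem}''), and your reconstruction follows precisely that template: the diagonal model of Lemma \ref{lemma_diagonal_model}, the stationarity condition of the $d_{k,n}^{4}$--weighted objective, the cancellation of the $d_{k,n}$ factors that this weighting produces (this is the mechanism spelled out heuristically in Section \ref{sec:MSE3}), an interiority argument for $\btheta^{(i)}\neq 0$, the exact computation for $\btheta^{(i)}=0$, and an upgrade from stationary points to the $\arg\min$ (for which you could simply reuse the implicit-function-theorem plus subsequence machinery of Lemma \ref{MainConvergenceTheoremAlong1} and Remark \ref{SubConv} rather than ad hoc uniform estimates).

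There are, however, two places where you assert more than the cited lemmas deliver, and they are exactly where the constant $\|\btheta^{(i)}\|^2/k_i$ comes from. First, you invoke $n\sigma^2_{k,n}\rightarrow_p 1$ as if it were part of Lemma \ref{lemma_structure_epsilon}; equation \eqref{convergence_variance_conditioned} only gives $\Var_v\{\epsilon_n^{(i)}\}=O_P(1/n)$. The sharp limit requires an additional argument showing $\left(U_n^{(i)}\right)^\top\Sigma_{\bar v}^{-1}U_n^{(i)}\rightarrow_p\sigma^{-2}I$, i.e.\ that the columns of $U_n^{(i)}$ asymptotically align with the orthogonal complement of the span of the other active blocks (this is latent in the proof of Lemma \ref{lemma_structure_epsilon} but must be extracted); without it your ``$nA_n\rightarrow_p k_i$'' is unproved and the limit could be $\|\btheta^{(i)}\|^2/c$ for some other constant $c$. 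Second, ``the $m_{k,n}$--terms become negligible'' is not automatic at the scale at which they enter: after clearing the denominators $(1+n\lambda d_{k,n}^2)^3\sim n^3\lambda^3 d_{k,n}^6$, the cross term contributes to the coefficient of $\lambda$ a quantity of order $n\sum_k m_{k,n}\beta_{k,n}$, and the available bounds \eqref{m_sigma_epsilon} and \eqref{m_epsilon_uncorrelated blocks} give, for $f_n\asymp n$ (the relevant regime when $\bar\lambda_j^n=\hat\lambda_j^n$), only $n\,m_{k,n}=O_P(\sqrt n)$ respectively $O_P(1)$, not $o_P(1)$. So this term needs a dedicated estimate or an extra hypothesis such as \eqref{block_asy_orth} together with a finer cancellation argument; note the paper itself sidesteps this by omitting the proof and by setting the bias $m_{k,n}$ to zero in the single-block perturbation analysis of Section \ref{sec:MSE3}. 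Your closing claim that the remaining steps are ``routine estimates'' therefore conceals the real work of the proposition.
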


The proof follows arguments similar to those used in last part of the proof of Theorem \ref{MainConvergenceTheorem}, see also proof of 
Theorem 6 in \cite{Sysid1}, and is therefore omitted.

We can summarize the two main findings reported in this subsection as follows.
As the number of measurements go to infinity:
\begin{enumerate}
\item  regardless of the value of $\gamma$, the proposed estimator will correctly set to zero only those $\lambda_i$ associated with null blocks;
\item when $\gamma=0$, (\ref{conv2}) and (\ref{conv3}) provide the asymptotic properties of 
ARD, showing that the estimate of $\lambda_i$ will converge to the energy of the $i$-th block (divided by its dimension). This same value also represents the asymptotic minimizer of a weighted version of the $MSE$ relative to the $i$-th block. In particular, the weights change over time, being defined by singular values $d_{k,n}^{(i)}$,
(raised at fourth power) that depend on the trajectories of the other components of $\lambda$.
\end{enumerate}

\subsubsection{Marginal likelihood and weighted MSE: perturbation analysis}\label{sec:MSE3}

We now provide some additional insights on point 2 above, investigating 
why the weights $d_{k,n}^{4}$ may lead to an effective strategy for 
hyperparameter estimation.\\ 
For our purposes, 
just to simplify the notation, let us consider the case of a single
$m$-dimensional block. In this way, $\lambda$ becomes a scalar and the noise $\epsilon_{k,n}$ 
in (\ref{Diagonal_model_int}) is zero-mean of variance $1/n$.\\ 
Under the stated assumptions, the $MSE$ weighted by $d_{k,n}^{\alpha}$, with $\alpha$
an integer, becomes
\begin{equation}\label{WeightedMSEsb}
\sum_{k=1}^{m} d_{k,n}^{\alpha}  \frac{ n^{-1} \beta_{k,n}^{2} +\lambda^2 d_{k,n}^{2} }
{ (n^{-1} + \lambda  d_{k,n}^{2})^2 } ,
\end{equation}
whose partial derivative with respect to $\lambda$, apart from the scale factor $2/n$,  is
\begin{equation}\label{DerWeightedMSEsb}
F_{\alpha}(\lambda)=  \sum_{k=1}^{m} d_{k,n}^{\alpha+2} \frac{ \lambda - \beta_{k,n}^{2}}
{ (n^{-1} + \lambda  d_{k,n}^{2})^3 } .
\end{equation}
Let $\beta_k = \lim_{n \mapsto \infty} \beta_{k,n}$ and  $d_k = \lim_{n \mapsto \infty} d_{k,n}$
\footnote{We are assuming that both of the limits exist. This holds 
under conditions ensuring that the SVD decomposition leading
to (\ref{Diagonal_model_int}) is unique, e.g. see the discussion in Section 4 of \citep{Bauer-05},
and combining the convergence of sample
covariances with a perturbation result for the
Singular Value Decomposition of
symmetric matrices (such as Theorem 1 in \citep{Bauer-05}, see also \citep{Chatelin})}.
When $n$ tends to infinity, arguments similar
to those introduced in the last part of the proof of Theorem \ref{MainConvergenceTheorem}
show that, in probability, the zero of $F_{\alpha}$ becomes 
\begin{equation}\label{Noiselesslambda}
\breve{\lambda}(\alpha) =   \frac{ \sum_{k=1}^{m} d_{k}^{\alpha-4}  \beta_{k}^{2}}
{  \sum_{k=1}^{m} d_{k}^{\alpha-4} } .
\end{equation}
Notice that the formula above is a generalization of the first equality in (\ref{brevelambda})
that was obtained by setting $\alpha=4$. 
However, for practical purposes, the above expressions
are not useful since the true values of $\beta_{k,n}$ and  $\beta_{k}$ 
depend on the unknown $\btheta$.
One can then consider a noisy version of $F_{\alpha}$ obtained by replacing
$\beta_{k,n}$ with its least squares estimate, i.e.
\begin{equation}\label{DerWeightedMSEsb}
\tilde{F}_{\alpha}(\lambda)= \sum_{k=1}^{m} d_{k,n}^{\alpha+2}  \frac{ \lambda - \left(\beta_{k,n} +
\frac{v_{k,n}}{\sqrt{n}d_{k,n}}\right)^{2}}
{ (n^{-1} + \lambda  d_{k,n}^{2})^3 } ,
\end{equation}
where the random variable $v_{k,n}$ is of unit variance.
For large $n$, considering 
small additive perturbations around the model $z_k=d_k\beta_k$,
it is easy to show that the
minimizer tends to the following perturbed version of $\breve{\lambda}$: 
\begin{equation}\label{Noisylambda}
\breve{\lambda}(\alpha)  +   2\frac{ \sum_{k=1}^{m}  d_{k}^{\alpha-5} \beta_k v_{k,n} }
{ \sqrt{n} \sum_{k=1}^{m}   d_{k}^{\alpha-4} } .
\end{equation}

It remains to choose the value of $\alpha$ that should enter the above formula.
This is far from trivial
since the optimal value (minimizing MSE) depends on the unknown $\beta_k$.
On one hand, it would seem advantageous to have 
$\alpha$ close to zero. In fact, $\alpha=0$ relates $\breve{\lambda}$ 
to the minimization of the $MSE$ on $\theta$ while $\alpha=2$ minimizes 
the $MSE$ on the output prediction, see the discussion in Section 4 of \cite{Sysid1}.
On the other hand,
a larger value for $\alpha$ could help in controlling the additive perturbation term
in (\ref{Noisylambda}) possibly reducing its sensitivity to small values of $d_k$. 
For instance, the choice $\alpha=0$ introduces
in the numerator of (\ref{Noisylambda}) the term $\beta_k/d_k^{5}$. 
This can make numerically unstable the convergence towards  $\breve{\lambda}$,
leading to poor estimates of the regularization parameters, 
as e.g. described via simulation studies in Section 5 of \cite{Sysid1}.
In this regard, the choice
$\alpha=4$ appears interesting: it sets $\breve{\lambda}$ 
to the energy of the block divided by $m$, removing the dependence 
of the denominator in (\ref{Noisylambda}) on $d_k$. In particular, it reduces 
(\ref{Noisylambda}) to
\begin{equation}\label{NoisylambdaW4}
\frac{\| \beta \|^2}{m}  +  \frac{2}{m}  \sum_{k=1}^{m} \frac{ \beta_k v_{k,n} } 
{ \sqrt{n} d_{k} }  =  \sum_{k=1}^{m}  \frac{\beta_k^2}{m} \left(1 + 2\frac{ v_{k,n} } 
{\beta_k \sqrt{n} d_{k} }\right).
\end{equation}

It is thus apparent that $\alpha=4$ makes the perturbation
on $\frac{\beta_k^2}{m}$ dependent on $\frac{ v_{k,n} } 
{ \beta_k \sqrt{n} d_{k} }$ that is the relative reconstruction error on $\beta_k$.
This appears a reasonable choice to account for the ill-conditioning possibly affecting least-squares. 


Interestingly, for large $n$, this same philosophy is followed by the marginal likelihood procedure
for hyperparameter estimation up to first-order approximations.
In fact, under the stated assumptions, apart from constants, the minus two log of  the marginal likelihood is 
\begin{equation}\label{MLsb}
\sum_{k=1}^{m}  \log(n^{-1}  + \lambda  d_{k,n}^{2}) + \frac{ z_{k,n}^{2}}
{ n^{-1} + \lambda  d_{k,n}^{2} } ,
\end{equation}
whose partial derivative w.r.t. $\lambda$ is
\begin{equation}\label{derMLsb}
\sum_{k=1}^{m} \frac{ d_{k,n}^{4}+n^{-1}d_{k,n}^{2}- z_{k,n}^{2}d_{k,n}^{2}}
{ (n^{-1} + \lambda  d_{k,n}^{2})^2 } .
\end{equation}
As before, we consider small perturbations around $z_k=d_k\beta_k$ to find that a
critical point occurs at
\begin{equation}\label{NoisylambdaML}
\sum_{k=1}^{m}  \frac{\beta_k^2}{m} \left(1 + 2\frac{ v_{k,n} } 
{\beta_k \sqrt{n} d_{k} }\right),
\end{equation}
which is exactly the same minimizer reported in (\ref{NoisylambdaW4}).



\bigskip

\section{Three variants of HGLasso and their implementation}\label{Impl}

In this section we discuss the implementation of our HGLasso
approach. In particular, the results introduced in the previous section point out
some distinctive features of HGLasso with respect
to GLasso (MKL). In fact, we have shown that HGLasso relies upon an estimator
for the hyperparameter $\lambda$ having some favorable properties in terms of MSE minimization.
In addition, sparsity can be induced using a smaller value for $\gamma$
than that needed by GLasso. This is an important point since we have seen that
nice MSE properties are obtained optimizing the marginal posterior of $\lambda$
with $\gamma$ close to zero.

On the other hand, a drawback of the HGLasso is that it requires the solution to a
non-convex optimization problem in a possibly high-dimensional space.
We show that this problem can be faced by introducing
a variant of HGLasso where only one scalar variable
is involved in the optimization process. In addition, this procedure is able to promote
greater sparsity than the full version of HGLasso while continuing to accurately reconstruct
the nonzero blocks.
This new computational scheme, which we call {\bf HGLa},
relies on the combination of marginal likelihood optimization and Bayesian forward selection
equipped with cross validation to select $\gamma$. It is introduced in the following subsections
together with two other versions that will be called {\bf HGLb} and {\bf HGLc}.
The following two subsections are instrumental to the introduction of the
three algorithms.

\subsection{Bayesian Forward Selection}\label{FB}

In this section we introduce a forward-selection procedure which will be useful to define
the computationally efficient version of the HGLasso estimator.
Hereafter, we use $y_{tr}$ and $y_{val}$ to indicate the output data contained
in the training and validation data set, respectively. This also induces
a natural partition of $G$ into $G_{tr}$ and $G_{val}$.
In order to obtain an estimator of $\lambda$ we consider
the constraint $\kappa=\lambda_1=\lambda_2=\ldots=\lambda_p$ and
treat $\kappa$ as a deterministic hyperparameter whose knowledge
makes the covariance $\Sigma_{y_{tr}}$ of the data $y_{tr}$
completely known.
Therefore we set:
 \begin{equation}\label{AlternativeGlambda_constrained}
 \begin{array}{rcl}
\hat\kappa&:=&\displaystyle{\arg \min_{\kappa \in \R_+}  \frac{1}{2}
\log \det(\Sigma_{y_{tr}}) + \frac{1}{2} y_{tr}^\top \Sigma_{y_{tr}}^{-1} y_{tr}}
\end{array}
\end{equation}
Now, we consider again the Bayesian model in Fig. \ref{BN}(b),
where all the components of $\lambda$ are fixed to $\hat{\kappa}$
while $\gamma$ may vary on a grid $C$ built around $\hat{\kappa}^{-1}$.
The forward-selection procedure is then designed as follows;
for each value of $\gamma$ in the grid $C$ let
$I\subseteq  \{1,2,..,p\}$ be the subset of currently selected
groups and, using the Bayesian model in Fig. \ref{BN}(b) (see also (\ref{joint pdf 2})-(\ref{HLtheta})),
define the marginal log posterior
\begin{equation}\label{LogPosterior}
L(I,\kappa,\gamma):={\rm log}\left[p_\gamma(\tilde\lambda_I|y_{tr})\right]
\end{equation}
$\tilde\lambda_I:=[\tilde\lambda_{I,1},...,\tilde\lambda_{I,p}]$ and $\tilde\lambda_{I,i}=\hat{\kappa}$ if $i\in I$ and $\tilde\lambda_{I,i}=0$ otherwise.
Then, for each value of $\gamma$ in the grid $C$, perform the following procedure:
\begin{itemize}
\item
initialize $I(\gam):=\emptyset$
\item repeat the following procedure:
    \begin{itemize}
    \item[(a)] for $j \in \{1,..,p\}\setminus I(\gam)$, define $I_j^{'}(\gam):=I(\gam)\cup {j}$
    and compute $L(I_j^{'}(\gam);\hat{\kappa},\gamma)$.
    \item[(b)] select $$\bar j:=
    \mathop{arg\;max}_{j \in \{1,..,p\}\setminus I(\gam)}
    L(I_j^{'}(\gam);\hat{\kappa},\gamma)-L(I(\gam);\hat{\kappa},\gamma)$$
        \item[(c)] \emph{if} $L(I_{\bar j}^{'}(\gam);\hat{\kappa},\gamma)-L(I(\gam);\hat{\kappa},\gamma)>0$\\
        set $I(\gam):=I^{'}_{\bar j}(\gam)$ and go back to (a)\\
        \emph{else}\\
         finish.
    \end{itemize}
\end{itemize}
Note that, for each $\gam$ in the grid, the set $I(\gam)$
contains the indexes of selected variables different from zero.
Let $\hat\gamma$ denote the value of $\gam$ from the grid that
yields the best prediction on the validation data set $y_{val}$, i.e.
$$
\hat\gamma = \arg \min_{\gamma \in C} \| y_{val} -G_{val} {\theta}_{HGL}(\tilde\lambda_{I(\gamma)})     \|
$$
 and set
$I_{FS}=I(\hat\gam)$.

\subsection{Projected Quasi-Newton Method}

The objective in (\ref{AlternativeGlambda}) is a differentiable function
of $\lambda$. The computation of its derivative requires a one time
evaluation of the matrices $G^{(i)}G^{(i)^\top},\ i=1,\dots,p$. However,
for each new value of $\lam$, the inverse of the matrix $\Sigma_y(\lambda)$ also needs to be computed.
Hence, the evaluation
of the objective and its derivative
may be costly since it requires computing the
inverse of a possibly large matrix as well as large matrix products.
On the other hand, the dimension of the parameter vector $\lambda$
can be small, and projection onto the feasible set is trivial.

We experimented with several methods available in the Matlab package \verb{minConf{ to optimize (\ref{AlternativeGlambda}).
In these experiments, the fastest method was
the limited memory projected quasi-Newton algorithm
detailed in \citep{Schmidt09optimizingcostly}.
It uses L-BFGS updates to build a diagonal plus low-rank quadratic approximation to the function,
and then uses the Projected Quasi-Newton Method to minimize a quadratic approximation subject to the original constraints to obtain a search direction.
A backtracking line search is applied to this direction terminating at a step-size satisfying a
Armijo-like sufficient decrease condition.
The efficiency of the method derives in part from the simplicity of the projections onto the
feasible region. We have also implemented 
 the re-weighted method described in \citep{Wipf_ARD_NIPS_2007}.
 In all the numerical experiments described below, we have assessed that
 it returns results virtually identical to those achieved by our method,
with a similar computational effort.
It is worth recalling that both the projected quasi-Newton method and
the re-weighted approach guarantee only converge to a 
stationary point of the objective.

\subsection{The three variants of HGLasso}
\label{3variants}

We consider the three version of HGLasso.

\begin{itemize}
\item {\bf HGLa}: Output data $y$ are split in a training and validation data set.
The optimization problem (\ref{AlternativeGlambda_constrained})
is solved using only the training data obtaining $\hat{\kappa}$. The regularization parameter
$\gamma$ is estimated using the
forward-selection procedure described in the previous subsection equipped
with cross-validation. This procedure also returns the set $I_{FS}$
containing the indexes of the selected variables different from zeros.
This index set gives an estimate
$\hat{\lambda}_{FS}$ of the hyperparameter vector, whose components
are equal to $\hat{\kappa}$ for $i\in I_{FS}$, and zero otherwise . Finally,
$\hat{\theta}_{HGL}$ is estimated by the formula given in (\ref{AlternativeGteta}),
$\E(\theta\,|\,y,\,\hat\lam_{FS})
=\mbox{blockdiag}((\hat\lam_{FS})_i I_{k_i}) G^\top \Sig_y(\hat\lam_{FS})^{-1}y$,
using all the available data, i.e. the union of the training and validation
data sets.

\item {\bf HGLb}: 
The optimization problem (\ref{AlternativeGlambda})
is solved using the Projected Quasi-Newton method  with starting point
defined by the $\hat{\lambda}_{FS}$ returned by  {\bf HGLa}.
The regularization parameter $\gamma$ is set to the estimate
obtained by {\bf HGLa}, $\hat\gam$.
Once the new estimate of $\lambda$ is obtained,
$\hat{\theta}_{HGL}$ is computed using (\ref{AlternativeGteta}).

\item {\bf HGLc}: This estimator performs
the same operations as {\bf HGLb} except that
the components of $\lambda$ set to zero by  {\bf HGLa}
are kept at zero, i.e. $\lam_i=0,\ i\notin I_{FS}$.
In addition, the regularization parameter $\gamma$
is set to zero in order to obtain the 
MSE properties in the reconstruction of the
blocks different from zero established in
Proposition \ref{CommonConvergence}. Hence, the problem
(\ref{AlternativeGlambda}) is optimized with $\gam=0$ and
only over those $\lam_i$ for $i\in I_{FS}$.
\end{itemize}


\section{Numerical experiments}\label{sec:sim}

\subsection{Simulated data}\label{sec:sim1}

\begin{figure*}
  \begin{center}
    \includegraphics[width=0.9\linewidth,angle=0]{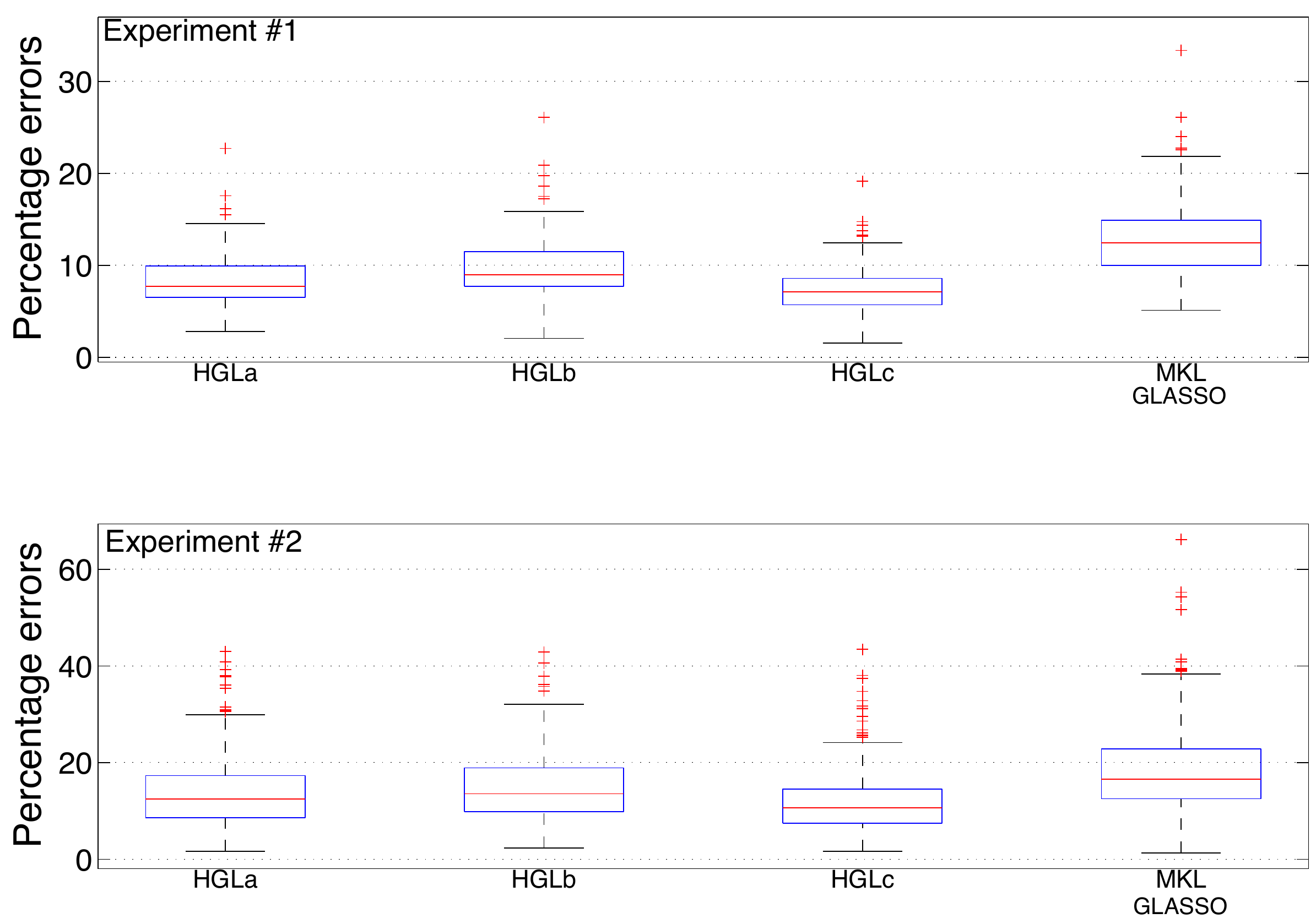}
\caption{Comparison with MKL/GLasso (section \ref{sec:sim1}). Boxplot of the percentage errors in the reconstruction of $\theta$ (top)
obtained by the 4 estimators after the 300 Monte Carlo runs in Experiment $\#1$ (top panel)
and $\#2$ (bottom panel).}
    \label{Fig1}
  \end{center}
\end{figure*}

We consider two Monte Carlo studies of $300$ runs each 
on the linear model \eqref{GroupMeasmod} 
with $p=10$ groups, each composed of $k_i=4$ parameters, and $n=100$.
For each run,  $5$ of the $\theta^{(i)}$ groups are set to zero, one is always taken different from zero
while each of the remaining $4$ $\theta^{(i)}$ groups are set to zero with probability $0.5$.
The components
of every $\theta^{(i)}$ block not set to zero are independent realizations from a uniform distribution on
$[-a_i,a_i]$ where $a_i$
is an independent realization (one for each block) from a uniform distribution on $[0,100]$.
The value of $\sigma^2$ is equal to the variance of the noiseless output divided by 25.
The noise variance is assumed unknown and its estimate is determined at each run
as the sum of the residuals coming from the least squares estimate
divided by $n-m$.
The two experiments differ in the way the columns of $G$ are
generated at each run. In the first experiment, the entries of
$G$ are independent realizations of zero mean unit variance Gaussian noise.
In the second experiment the columns of $G$ are correlated, being defined at every run by
\begin{eqnarray*}
&& G_{i,j} = G_{i,j-1} + 0.2 v_{i,j-1}, \quad i=1,..,n, \quad j=2,..,m\\
&& v_{i,j}\sim {\cal N}(0,1)
\end{eqnarray*}
where $v_{i,j}$ are i.i.d. (as $i$ and $j$ vary) zero mean unit variance Gaussian and $G_{i,1}$ are
i.i.d. zero mean unit variance Gaussian random variables. Note that correlated inputs  renders the estimation problem more challenging.\\

We compare the performance of the following $4$
estimators.
\begin{itemize}
\item {\bf HGLa,HGLb,HGLc}: These are the three variants of our HGLasso procedure defined at the end of Section \ref{Impl}. The data is split into training and validation data sets of equal size and the grid $C$
used by the cross validation procedure to select $\gamma$ contains 30 elements
 logarithmically distributed between $10^{-2} \times \hat{\kappa}^{-1}$ and $10^{4} \times \hat{\kappa}^{-1}$

    \item {\bf MKL (GLasso)}: The regularization parameter is determined via cross validation, splitting the data set in two segments of the same size and testing a finite number of parameters from a grid
    with $30$ elements logarithmically distributed between $10^{-2} \times \hat{\gamma}$ and $10^{4} \times \hat{\gamma}$
    where $\hat{\gamma}$ is the regularization parameter adopted by the three HLasso procedures.
    Finally, MKL (GLasso) is reapplied to the full data set fixing the regularization parameter
    to its estimate.
\end{itemize}
The $4$ estimators are compared using the two performance indexes listed below:
\begin{enumerate}
\item Percentage estimation error: this is computed at each run as
\begin{equation}\label{Err1}
100\times \frac{\|\theta-\hat{\theta}\|}{\|\theta\|}\,\%
\end{equation}
where $\hat{\theta}$ is the estimate of $\theta$.
\item Percentage of the blocks equal to zero correctly set to zero by the estimator
after the $300$ runs.
\end{enumerate}
The top and bottom panel of Fig. \ref{Fig1} displays the boxplots of the 300 percentage errors
obtained by the 4 estimators in the first and second experiment, respectively.
It is apparent that all of the three versions of HGLasso outperform
GLasso.

In Table \ref{table1} we report
the sparsity index. One can see that in the first and second experiment
the first and third version of HGLasso obtain the remarkable performance of
around $99\%$ of blocks correctly set to zero, while the second version obtains
a value close to $76\%$.
Instead, in the two experiments GLasso (MKL) correctly set to zero no more than $40\%$  of the blocks.
This result, which can appear surprising, is explained by the arguments in Sections \ref{SparsvsSh}
and \ref{sec:MSE};
in a nutshell,  GLasso trades sparsity for shrinkage.
The value of the regularization parameter $\gamma$ needed  to avoid oversmoothing is not sufficiently large to induce ``enough''  sparsity.
This drawback does not affect our new nonconvex estimators. These estimators have the additional
advantage of selecting the regularization parameters leading to more favorable MSE properties
for the reconstruction of the non zero blocks,
as discussed in Section \ref{sec:MSE} and illustrated in Section \ref{SparsvsSh} in a simplified scenario.
\begin{table}
\begin{center}
\begin{tabular}{ccccc}\hline
& HGLa & HGLb & HGLc  & MKL (GLasso)
\\ \hline
\mbox{Experiment $\#1$} & $99.2\%$  &  $76.1\%$  &  $99.2\% $ & $36.1 \%$  \\
\mbox{Experiment $\#2$} & $99.0\%$  &  $76.5\%$  &  $99.0\% $ & $39.5 \%$ \\
\hline \phantom{|}
\end{tabular}
\end{center}
\caption{Comparison with MKL/GLasso (section \ref{sec:sim1}). Percentage of the $\theta^{(i)}$ equal to zero correctly set to zero by the four estimators.} \label{table1}
\end{table}

\subsubsection{Testing a variant of HGLa}\label{subsec:sim1}

To better point out the role played by $\gamma$ in our numerical schemes, we have also 
considered a variant of HGLa where $\gamma$ is always set to 0 and the parameter $\sigma^2$ is used to induce sparsity. More precisely, the only difference with respect to HGLa is that, after obtaining $\sigma^2$ from least squares and determining $\hat{\kappa}$, $\sigma^2$ is re-estimated
using the forward-selection procedure equipped with cross-validation.\\
For the sake of comparison, we have considered 3 Monte Carlo studies 
of 300 runs. Data are generated as in the second experiment described above
except that the 3 cases exploit different values of $\sigma^2$ equal to the noiseless output
variance divided by $5$ (case a), $2$ (case b) or $1$ (case c). Table \ref{table2} reports the mean of the 300 percentage
errors while Table \ref{table3} reports the sparsity index.\\
It is apparent that the variant of HGLa performs quite well, but HGLa outperforms it in
all the experiments.
These results can be given the following interpretation. When one adopts HGLa, 
the "high level" of the Bayesian network depicted in Fig. \ref{BN} (b) (represented by $\gamma$) 
is used to to induce sparsity with the parameters entering  
the lower level of the Bayesian network that need not to be changed.
Thus, $\theta$ is eventually reconstructed adopting the $\sigma^2$
estimated from data and the $\lambda$ determined by marginal likelihood optimization, thus possibly  
exploiting the MSE properties reported in Proposition \ref{CommonConvergence}.  
When $\gamma$ is instead set to $0$, the estimator has to trade sparsity and shrinkage 
using a less flexible structure. In particular, the lower level of the Bayesian network is now also in charge 
of enforcing sparsity and this can be done only increasing $\sigma^2$, possibly loosing performance in terms of MSE. 

\begin{table}
\begin{center}
\begin{tabular}{ccc}\hline
& HGLa & Variant of HGLa 
\\ \hline
\mbox{Experiment $\#2$ (case a)} & $12.2\%$  &  $15.6\%$  \\
\mbox{Experiment $\#2$ (case b)} & $30.1\%$  &  $39.2\%$  \\
\mbox{Experiment $\#2$ (case c)} & $61.5\%$  &  $73.8\%$  \\
\hline \phantom{|}
\end{tabular}
\end{center}
\caption{Comparison with the variant of HGLa (section \ref{subsec:sim1}). Mean of the percentage errors in the reconstruction of $\theta$ obtained
by HGLa and by the variant of HGLa where sparsity is induced by $\sigma^2$.} \label{table2}
\end{table}
\begin{table}
\begin{center}
\begin{tabular}{ccc}\hline
& HGLa & Variant of HGLa 
\\ \hline
\mbox{Experiment $\#2$ (case a)} & $99.2\%$  &  $93.1\%$  \\
\mbox{Experiment $\#2$ (case b)} & $96.2\%$  &  $86.5\%$  \\
\mbox{Experiment $\#2$ (case c)} & $88.1\%$  &  $71.4\%$  \\
\hline \phantom{|}
\end{tabular}
\end{center}
\caption{Comparison with the variant of HGLa (section \ref{subsec:sim1}). Percentage of the $\theta^{(i)}$ equal to zero correctly set to zero
by HGLa and by the variant of HGLa where sparsity is induced by $\sigma^2$.} \label{table3}
\end{table}

\subsection{Comparison with Adaptive Lasso}\label{sec:ADA}

In this section we compare the performance of HGLa 
with that obtainable by the Adaptive Lasso (AdaLasso) procedure introduced in \citep{AdaptiveLasso}.
In particular, we consider an example taken from \citep{AdaptiveLasso} where the components of 
$\theta$ are $\{3,1.5,0,0,2,0,0,0\}$ and each component represents a group (block size  is equal to 1). The rows of the design matrix $G$ are independent realizations from a zero mean Gaussian vector, with $(i,j)$-entry of its covariance
equal to $\beta^{|i-j|}$. To be more specific, we consider 6 Monte Carlo studies, each of 200 runs, 
where at each run $\beta$ is drawn uniformly from the open interval $(0.5,1)$. The 6 experiments
then differ in the number of  data used to reconstruct $\theta$ (20 or 60) and in the variance of the Gaussian measurement noise ($\sigma^2$=1, 9 or 16). We implemented HGLa and Lasso as 
described in the previous subsection. For what regards AdaLasso, as in \citep{AdaptiveLasso} we exploited two-dimensional cross validation to estimate the regularization parameter
and the variable $\eta$ defining the weights. In particular, the latter were set to the 
inverse of the absolute value of the least squares estimates 
raised at $\eta$, where $\eta$ may vary on the grid $[0.5,1,\ldots,4]$.\\
Results are summarized in Table \ref{table4}, that reports the mean of the 200 percentage errors, and in Table \ref{table5}, where the sparsity index is displayed.
One can notice that HGLa outperforms Lasso and AdaLasso\footnote{In this experiment AdaLasso
enforces more sparsity than Lasso but leads to larger reconstruction errors on $\theta$
since it tends more frequently to set to
zero also components of $\theta$ that are not null.}, 
achieving both a smaller reconstruction error and a better sparsity index. 
To further illustrate this fact, at each Monte Carlo run
we have also computed the Euclidean norm of the estimates of the null components of $\theta$
returned by the three estimators, divided by the norm of the true $\theta$. Since the number 
of null components of $\theta$ is 5 and the overall number of Monte Carlo runs is 1200, 
6000 values were stored. Fig. \ref{FigAda} plots them (as a function of the Monte Carlo run) 
as points when the estimated value is different from zero (no point is displayed if the
corresponding value is zero). It is apparent that  
in this example HGLa correctly detects the null components
of $\theta$ more frequently than Lasso and AdaLasso, also providing a smaller 
reconstruction error when a component is not set to zero.

\begin{table}
\begin{center}
\begin{tabular}{cccc}\hline
& HGLa & Lasso & AdaLasso 
\\ \hline
\mbox{Exp. $\#1$ (n=20, $\sigma^2=1$)} & $30.2\%$  &  $34.5\%$  &  $38.2\%$  \\
\mbox{Exp. $\#2$  (n=60, $\sigma^2=1$)} & $12.1\%$  &  $15.3\%$  &  $17.1\%$ \\
\mbox{Exp. $\#3$  (n=20, $\sigma^2=9$)} & $68.5\%$  &  $81.2\%$  &  $100.1\%$ \\
\mbox{Exp. $\#4$ (n=60, $\sigma^2=9$)} & $43.1\%$  &  $46.3\%$  &  $56.6\%$ \\
\mbox{Exp. $\#5$  (n=20, $\sigma^2=16$)} & $78.7\%$  &  $110.1\%$  &  $141.2\%$ \\
\mbox{Exp. $\#6$  (n=60, $\sigma^2=16$)} & $53.7\%$  &  $60.0\%$  &  $73.6\%$ \\
\hline \phantom{|}
\end{tabular}
\end{center}
\caption{Comparison with AdaLasso (section \ref{sec:ADA}). Mean of the percentage errors in the reconstruction of $\theta$ obtained
by the five estimators.} \label{table4}
\end{table}





\begin{table}
\begin{center}
\begin{tabular}{cccc}\hline
& HGLa & Lasso & AdaLasso 
\\ \hline
\mbox{Exp. $\#1$ (n=20, $\sigma^2=1$)} & $87.9\%$  &  $38.5\%$  &  $69.1\%$  \\
\mbox{Exp. $\#2$  (n=60, $\sigma^2=1$)} & $96.5\%$  &  $45.6\%$  &  $75.2\%$ \\
\mbox{Exp. $\#3$  (n=20, $\sigma^2=9$)} & $80.9\%$  &  $49.6\%$  &  $61.0\%$ \\
\mbox{Exp. $\#4$ (n=60, $\sigma^2=9$)} & $87.4\%$  &  $46.8\%$  &  $69.4\%$ \\
\mbox{Exp. $\#5$  (n=20, $\sigma^2=16$)} & $78.4\%$  &  $51.4\%$  &  $59.3\%$ \\
\mbox{Exp. $\#6$  (n=60, $\sigma^2=16$)} & $85.7\%$  &  $41.1\%$  &  $65.5\%$ \\
\hline \phantom{|}
\end{tabular}
\end{center}
\caption{Comparison with AdaLasso (section \ref{sec:ADA}). Percentage of the components of $\theta$ equal to zero correctly set to zero by the five estimators.} \label{table5}
\end{table}

\begin{figure*}
  \begin{center}
    \includegraphics[width=0.9\linewidth,angle=0]{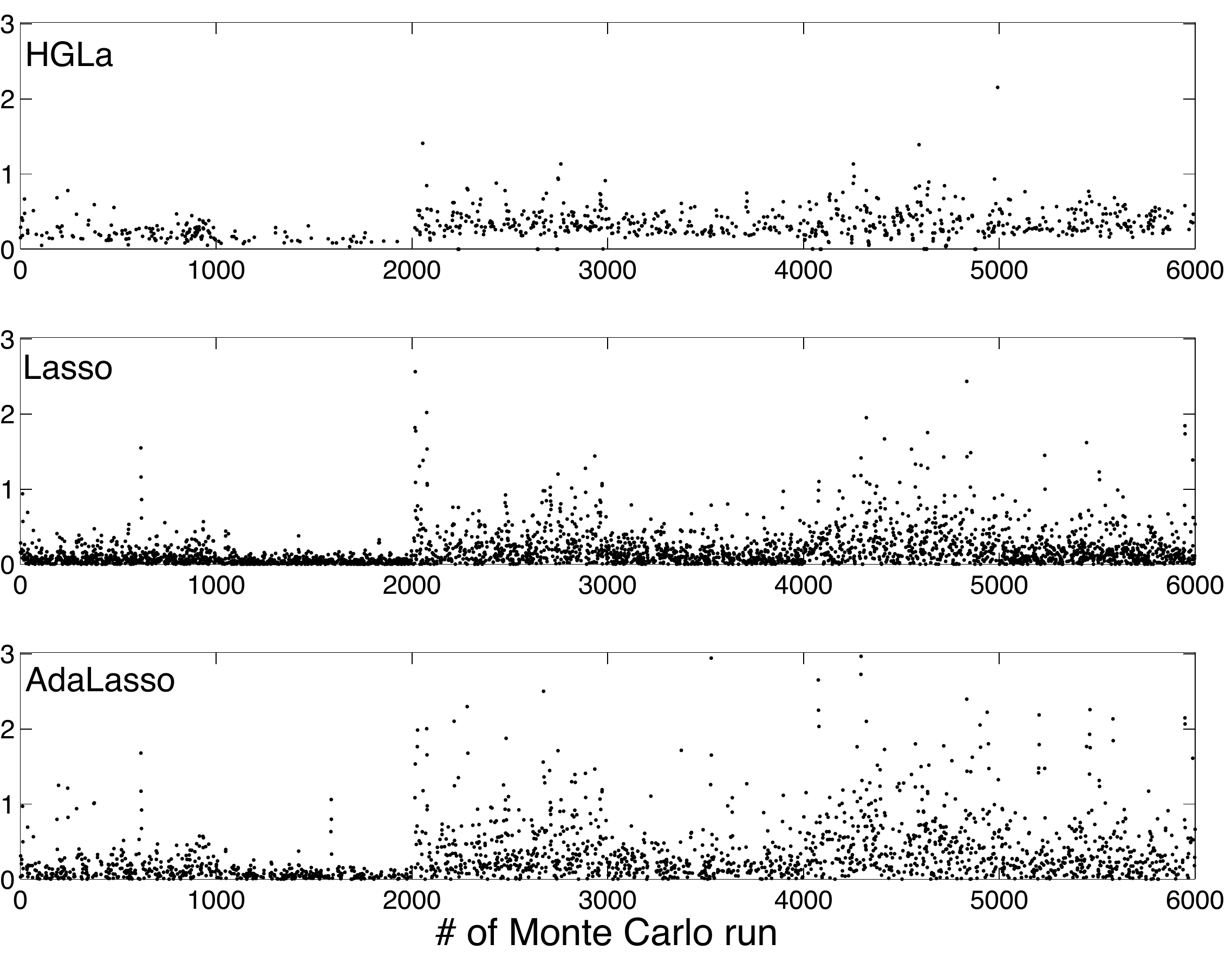}
\caption{Comparison with AdaLasso (section \ref{sec:ADA}). Euclidean norm of the estimates of the null components of $\theta$ returned by the three estimators (divided by the norm of the true $\theta$) as a function of the Monte Carlo runs performed in the 6 experiments. The values different from zero are displayed as points while no point is displayed if the obtained estimate is zero.}
    \label{FigAda}
  \end{center}
\end{figure*}

\subsection{Real data}\label{sec:sim2}

In order to test the algorithms on real data we have considered thermodynamic modeling of a small residential
building. We placed sensors in two rooms of a small
two-floor residential building of about 80 $\textrm{m}^2$ and 200
$\textrm{m}^3$; the sensors have been placed only on one floor (approximately 40 $\textrm{m}^2$ ) and their location is approximately shown in  Figure
\ref{figPianteFoto}. The larger room is the living room while the smaller is the kitchen.
\begin{figure*}
\begin{center}
\begin{tabular}{c}
\includegraphics[width=0.55\columnwidth,angle=-90]{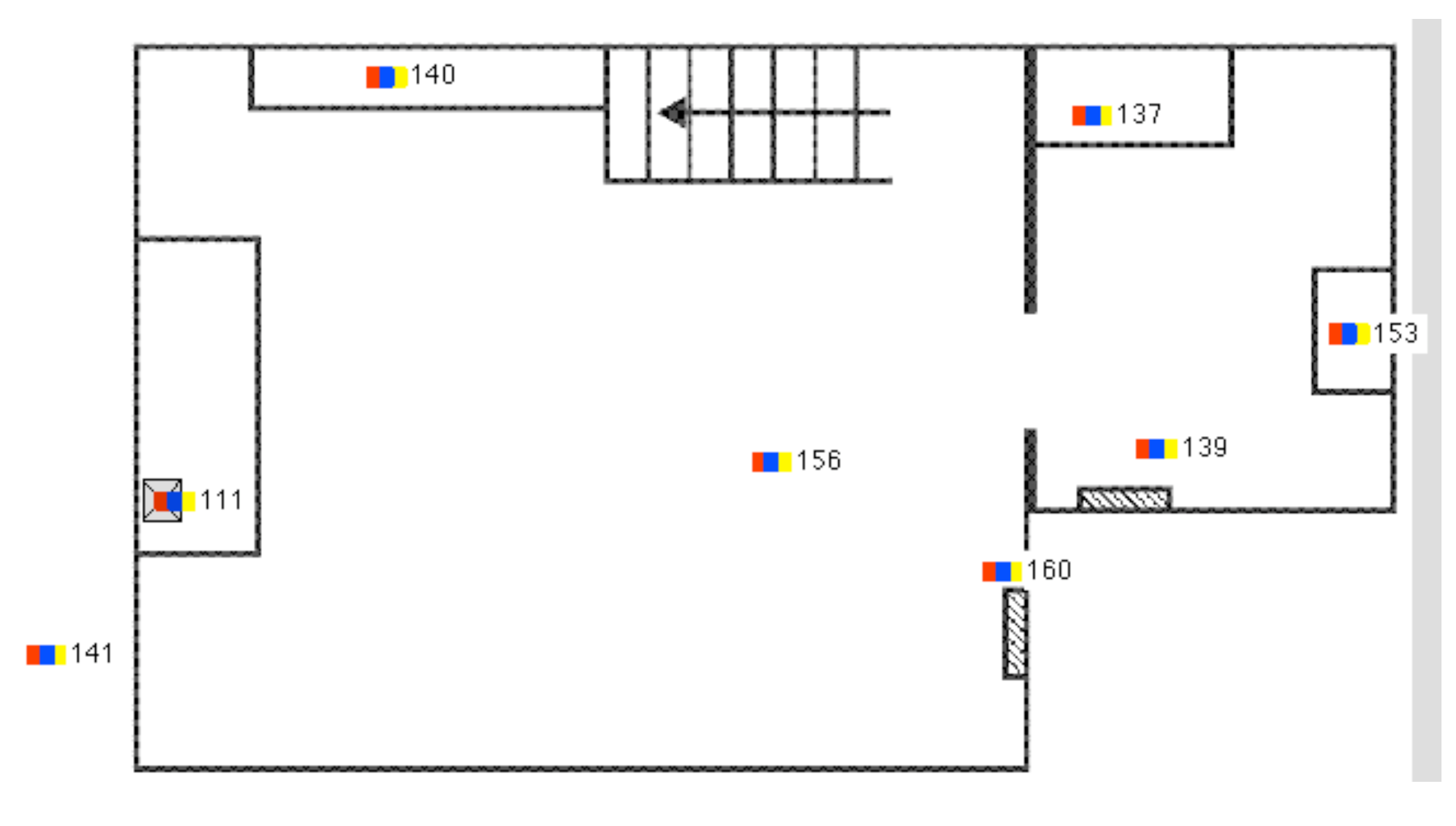}
\end{tabular}
\caption{Nodes location: $8$ nodes each equipped with $3$ sensors: temperature, humidity and total radiation.} \label{figPianteFoto}
\end{center}
\end{figure*}
The experimental
data was collected through a WSN made of 8 \emph{Tmote-Sky} nodes
produced by Moteiv Inc. Each Tmote-Sky is provided with a
temperature sensor, a humidity sensor, and a total solar radiation
photoreceptor (visible + infrared). 
 The building was inhabited
during the measurement period, which lasted for 8 days starting from February 24th, 2011; samples
were taken every 5 minutes.
The heating systems was controlled by a thermostat; the reference temperature
was manually set every day depending upon occupancy and other needs.

 The location of the sensors was as follows:

 \begin{itemize}
 \item
 Node $\#$1 (label 111 in Figure \ref{figPianteFoto}) was above a sideboard, about $1.8$ meters high, located close to thermoconvector.
 \item Node $\#$2 (label 137 in Figure \ref{figPianteFoto}) was above a cabinet (2.5 meters high).
 \item Node $\#$3 (label 139 in Figure \ref{figPianteFoto}) was above a cabinet (2.5 meters high).
  \item  Node $\#$4 (label 140 in Figure \ref{figPianteFoto}) was placed on a bookshelf (1.5 meters high).
 \item Node $\#$5 (label 141 in Figure \ref{figPianteFoto}) was placed outside.
 \item Node $\#$6 (label 153 in Figure \ref{figPianteFoto}) was placed above the stove  (2 meters high).
 \item Node $\#$7 (label 156 in Figure \ref{figPianteFoto}) was placed in the middle of the room, hanging from the ceiling (about 2 meters high).
 \item Node $\#$8 (label 160 in Figure \ref{figPianteFoto}) was placed above one radiator and was meant to provide
a proxy of water temperature in the heating systems.
\end{itemize}

This gives a total of $24$ sensors (8 temperature + 8 humidity + 8 radiation signals). A preliminary inspection of the measured signals (see Figure \ref{signals}) reveals the high level of collinearity which is well-known to complicate the estimation process 
in System Identification \citep{Soderstrom,Ljung,BoxJenkins}.

\begin{figure*}
\begin{center}
\begin{tabular}{cc}
\includegraphics[width=0.45\columnwidth,angle=0]{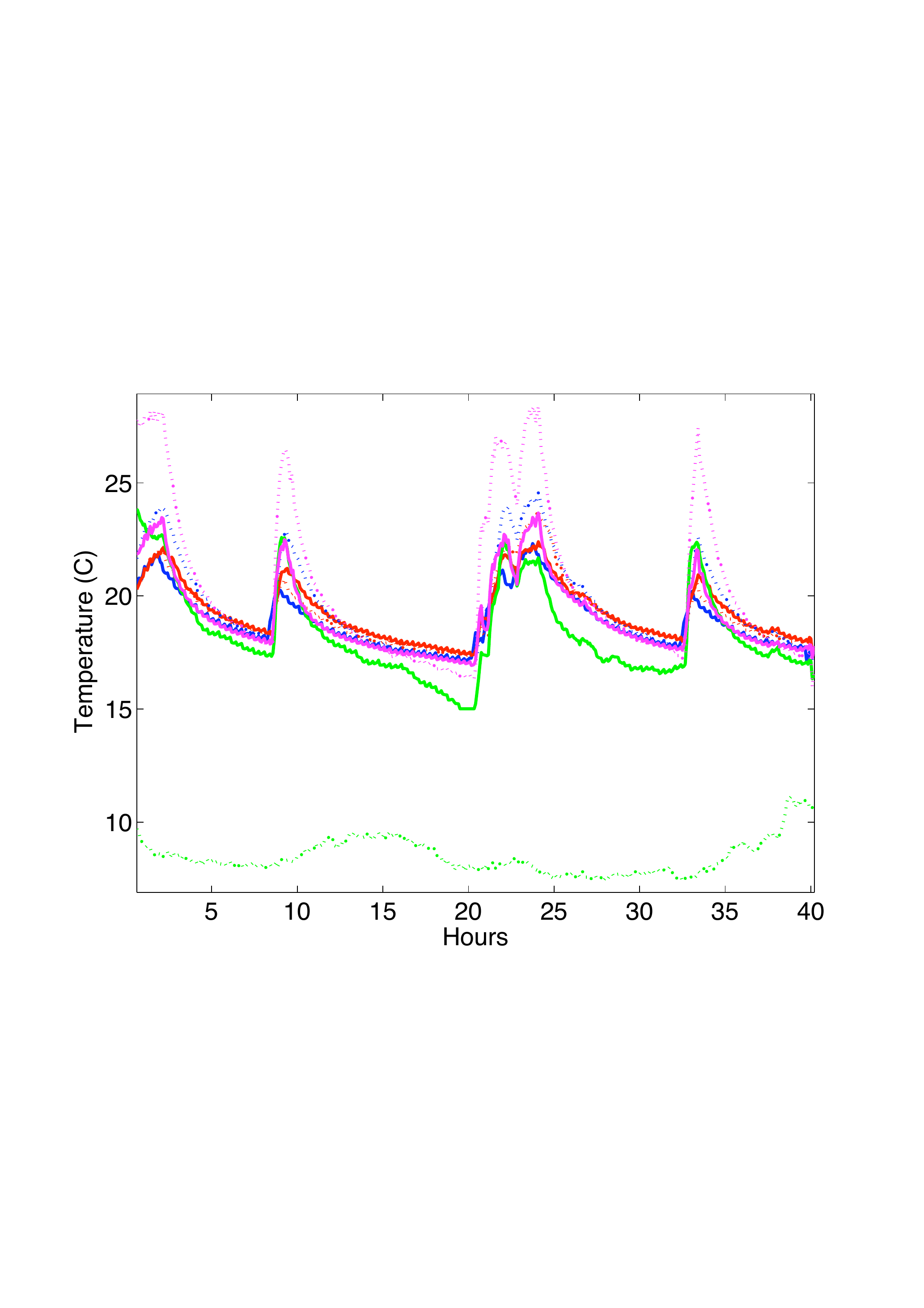} & \includegraphics[width=0.45\columnwidth,angle=0]{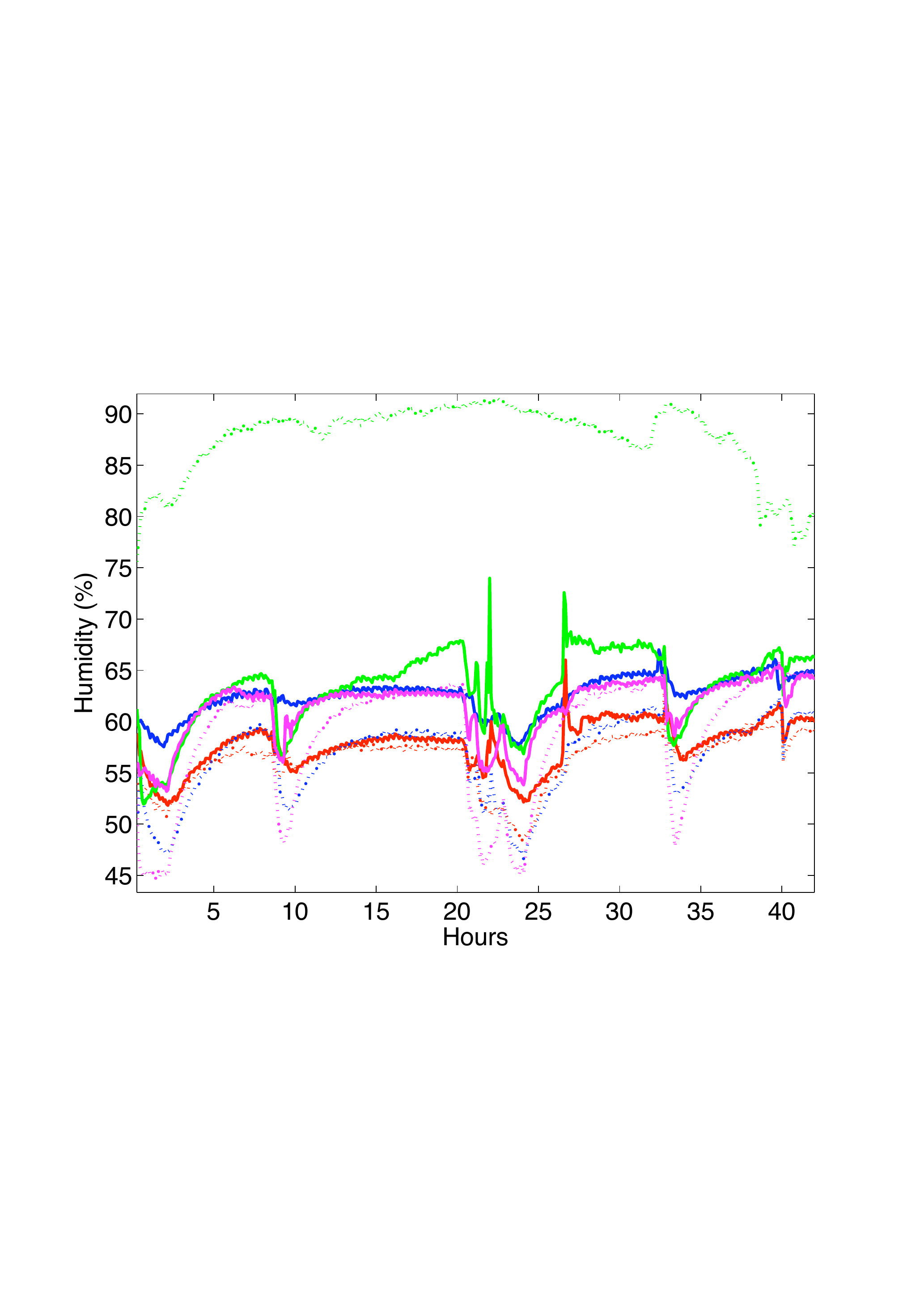}
\end{tabular}
\caption{Measured temperatures (left) and humidity (right), first 40 hours.} \label{signals}
\end{center}
\end{figure*}

We only consider
Multiple Input-Single Output (MISO) models,
with the temperature from each of the nodes as output ($y_t$) and all the other signals
(7 temperatures, 8 humidities, 8 radiations) as inputs ($u^i_t$, $i=1,..,23$
\footnote{Even though one might argue that inside radiation does not play a role,
we prefer not to embed this knowledge in order to make identification more challenging.
After all, even though our experimental setup has a small number of sensors,
a full scale monitoring system for a large building may have hundreds of sensors; in this scenario input selection is in our opinion a major issue.}).
We leave identification of a full Multiple Input-Multiple Output (MIMO) model for future investigation.  We split the available data into $2$ parts; the first, composed of $N_{id}=700$ data points, is used for
learning and validation and the second, composed of $N_{test}=1500$ data points,
is used for test purposes.
The notation $y^{id}$ identifies the
training and validation data while $y^{test}$ identifies the test data.
Note that $N_{id}=700$, with $5$ minute sampling times,
corresponds to $\simeq 58\;hours$;
this is a rather small time interval and, as such,
models based on these data cannot capture seasonal variations.
Consequently, in our experiments we assume a ``stationary'' environment and normalize
the data so as to have zero mean and unit variance before identification is performed.

Our two main goals are as follows.
\begin{enumerate}
\item Provide meaningful models with as small data set as possible. This has clear advantages if identification is being performed for, e.g., certification purposes or as a preliminary step for deciding, having monitoring/control objectives in mind, how many sensors are needed and where these should be installed.

\item  Provide sensor selection rules in order to reduce the number of sensors needed to
effectively monitor the environment.
A setup we have in mind is the following: one first deploys a large number of nodes,
collects data and performs identification experiments. As an outcome, in addition to the models,
we identify a subset of sensors that are sufficient to effectively monitor the environment;
based on the measurements from this subset of sensors one can then reliably ``predict''
the evolution of temperature (and possibly humidity) across the building.
\end{enumerate}

We envision that model predictive based methodologies, (see \citep{ModelPredictiveBookCamacho} and the recent papers \citep{Borrelli2010}, \citep{Privara2011}, \citep{Dong2008}), may be effective for these applications and, as such, we evaluate our models based on their ability to predict future data.
The predictive power of the model is measured for $k$-step-ahead prediction on \emph{validation} data, as:
\begin{equation}\label{FIT}
COD_k:=1 -{\frac{\sum_{t=k}^{N_{test}}(y^{test}_t-\hat y_{t|t-k})^2}{\sum_{t=k}^{N_{test}}(y^{test}_t-\bar y^{test})^2}}
\end{equation}
where $\bar y^{test}:=\frac{1}{N_{test}}\sum_{t=1}^{N_{test}}y^{test}_t$.

 We consider
 AutoRegressive models with eXogenous inputs (ARX) \citep{Ljung,Soderstrom,BoxJenkins} of the form
 $$
 y_t =\sum_{k=1}^{q} h_{k,1} y_{t-k}+ \sum_{i=1}^{23} \sum_{k=1}^q h_{k,i+1} u_{t-k}^i + e_t\ .
 $$
This model is \emph{linear in the parameters}
($h_{k,i}$, $k=1,..,q$, $i=1,..,24$) and as such falls within the general structure \eqref{Measmod}.
Experiments with different lengths $q$ were investigated.
We report only the results for  $q=20$ which seemed to be the most reasonable choice for all methods. Note that, with reference to \eqref{GroupMeasmod}, here we have $p=24$ groups of $k_i=q=20$ parameters each, for a total of $480$ parameters.\\
We compare the performance of the following $2$
estimators\footnote{We do not report the performance of GLasso which was similar (if not worse) than MKL, in line with the synthetic experiments in Section \ref{sec:sim1}.}:
\begin{itemize}
\item {\bf HGLc}: the variant of  HGLasso procedure defined at the end of Section \ref{Impl}. Identification data are split in a training and validation data set of equal size and the grid $C$
built around ${\hat\kappa}^{-1}$ used by the cross validation procedure to select $\gamma$
turns out to be $[ 25:25:1000]$.

        \item {\bf MKL (GLasso)}: the regularization parameter is estimated by
        cross validation using the same grid $C$ adopted for HGLc.
\end{itemize}
The $2$ estimators are compared using as performance indexes the $COD_k$ defined in equation \eqref{FIT}.
Sample trajectories of one-step-ahead prediction on both identification and test data are displayed in Figures \ref{Fig2} and \ref{Fig3} while $5$-hours ($=60$ steps) ahead prediction is shown in Figure \ref{Fig35hours}.

The $COD_k$ up to $16$ hours ahead for ARX models are plotted in Figures  \ref{Fig5} while Figure \ref{Fig6} shows the norm of the estimated impulse responses $h_{k,i}$, $i=1,..,24$.

It is clear that HGLc performs better than MKL in terms of prediction while
achieving a higher level of sparsity. Note also that the higher sparsity achieved by HGLc results in a much more stable behavior in terms of multi-step prediction (see Figures \ref{Fig35hours} and \ref{Fig5}). These results are in line with the theoretical findings in the paper as well as with the simulation results on synthetic data on Section \ref{sec:sim1}.

\begin{figure*}
  \begin{center}
    \includegraphics[width=0.7\columnwidth,angle=-90]{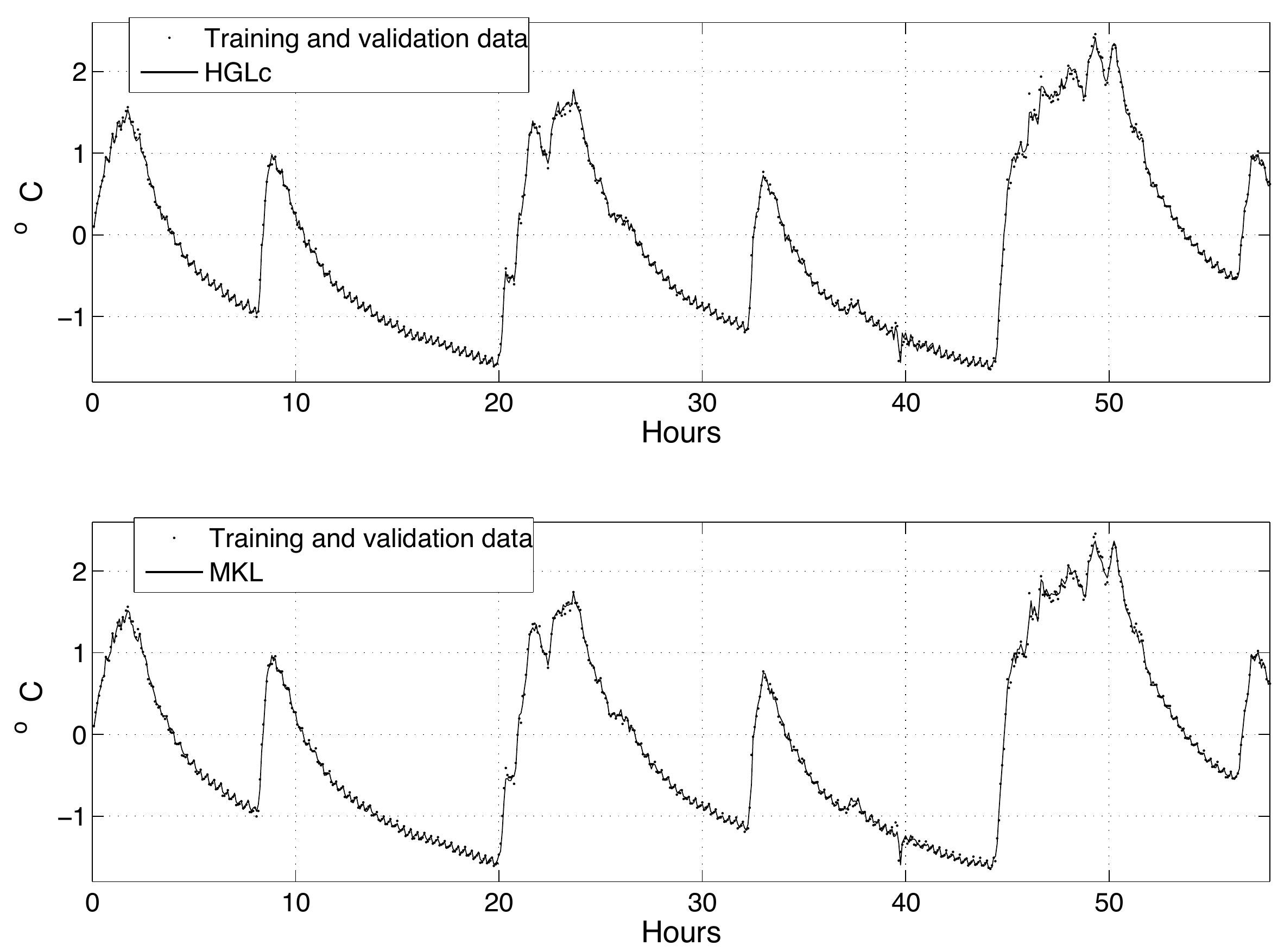}
\caption{ARX model: training and validation data ($\cdot$) with output (solid line) estimated by HGLc (top) and MKL (bottom).}
    \label{Fig2}
  \end{center}
\end{figure*}

\begin{figure*}
  \begin{center}
    \includegraphics[width=0.7\columnwidth,angle=-90]{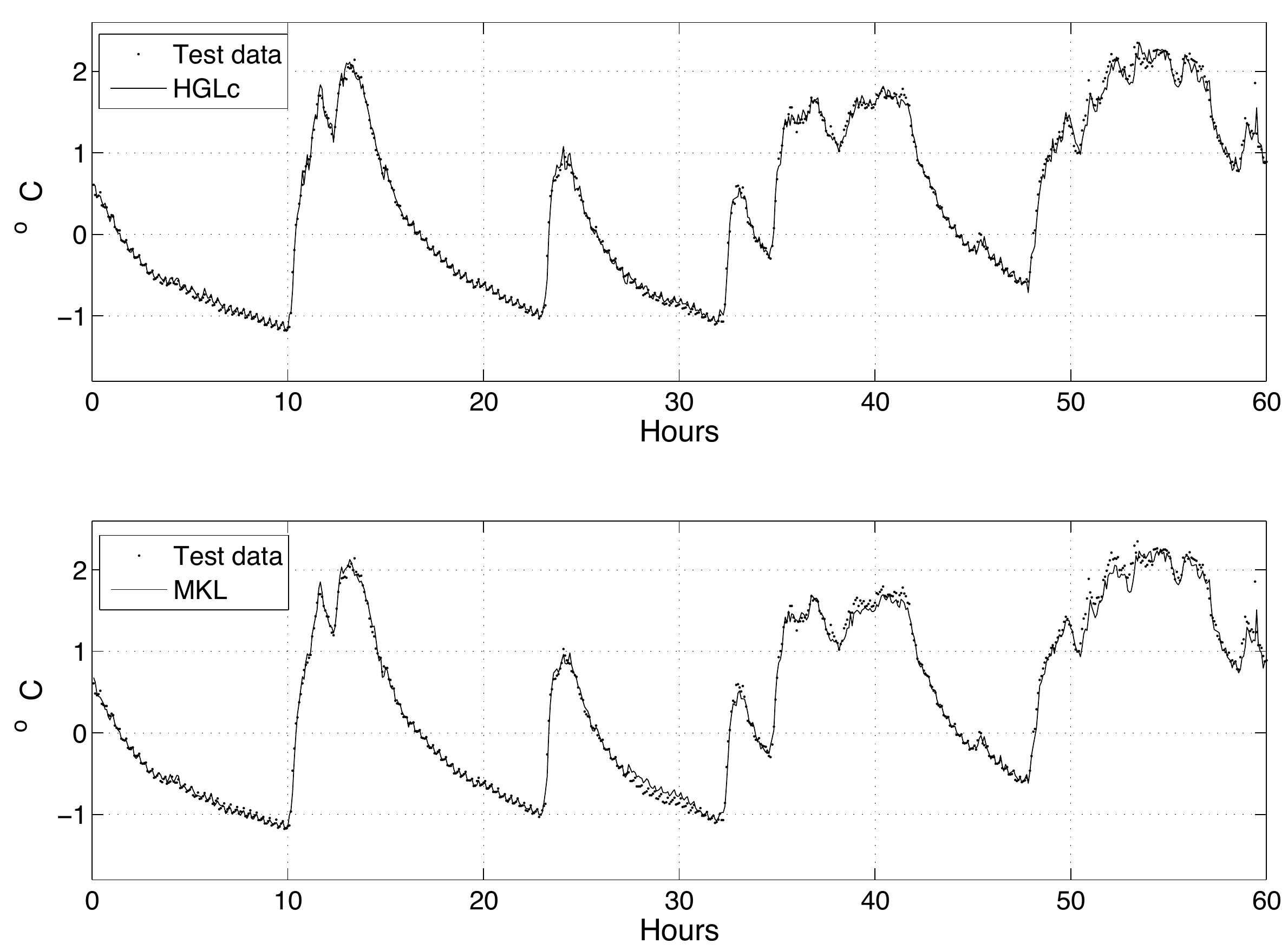}
\caption{ARX model: test data ($\cdot$) and prediction (solid line) obtained by HGLc (top) and MKL (bottom).}
    \label{Fig3}
  \end{center}
\end{figure*}
\begin{figure*}
  \begin{center}
    \includegraphics[width=0.7\columnwidth,angle=-90]{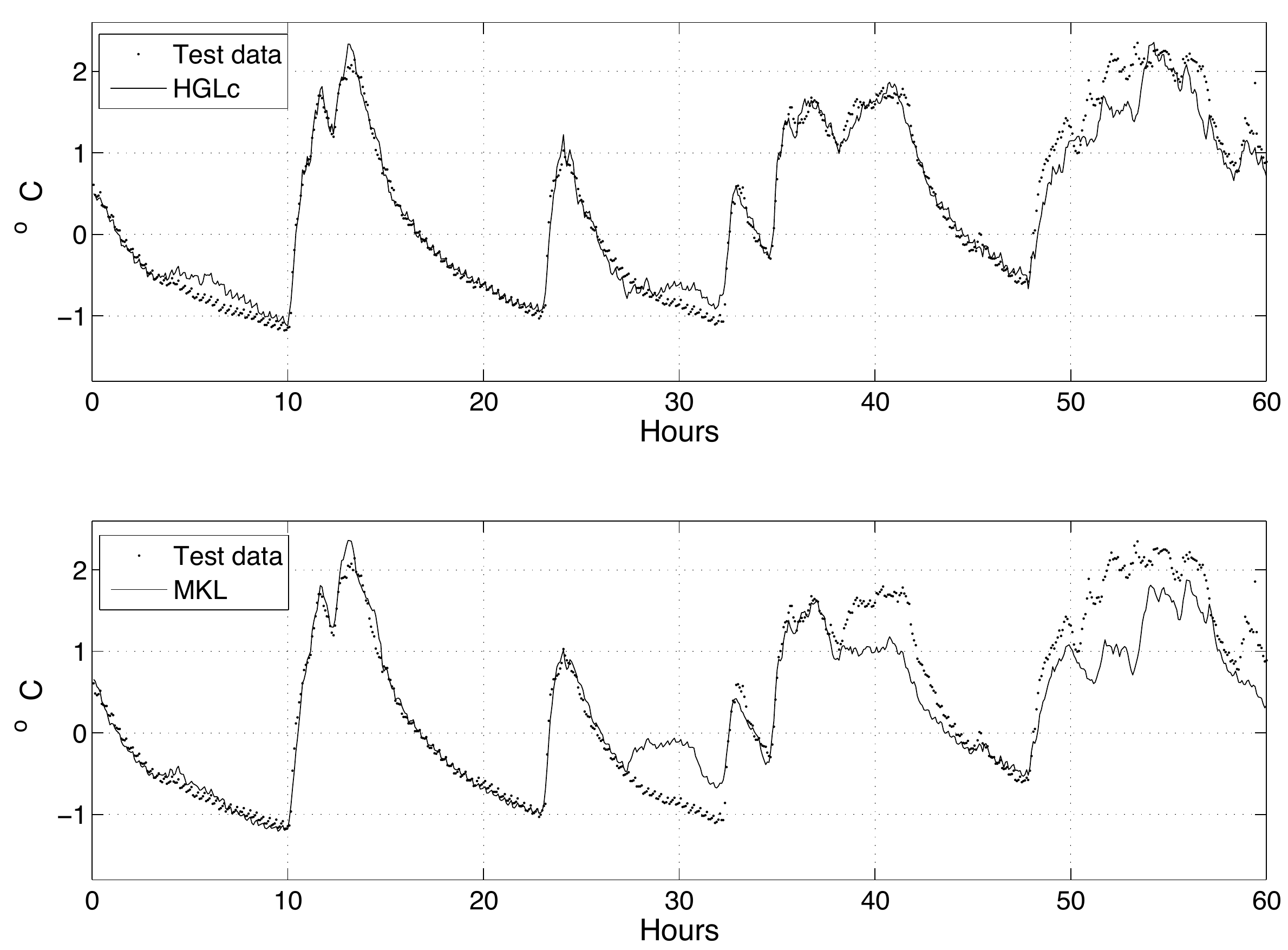}
\caption{ARX model: test data ($\cdot$) and 5-hours ahead prediction (solid line) obtained by HGLc (top) and MKL (bottom).}
    \label{Fig35hours}
  \end{center}
\end{figure*}


\begin{figure*}
  \begin{center}
    \includegraphics[width=0.5\linewidth,angle=-90]{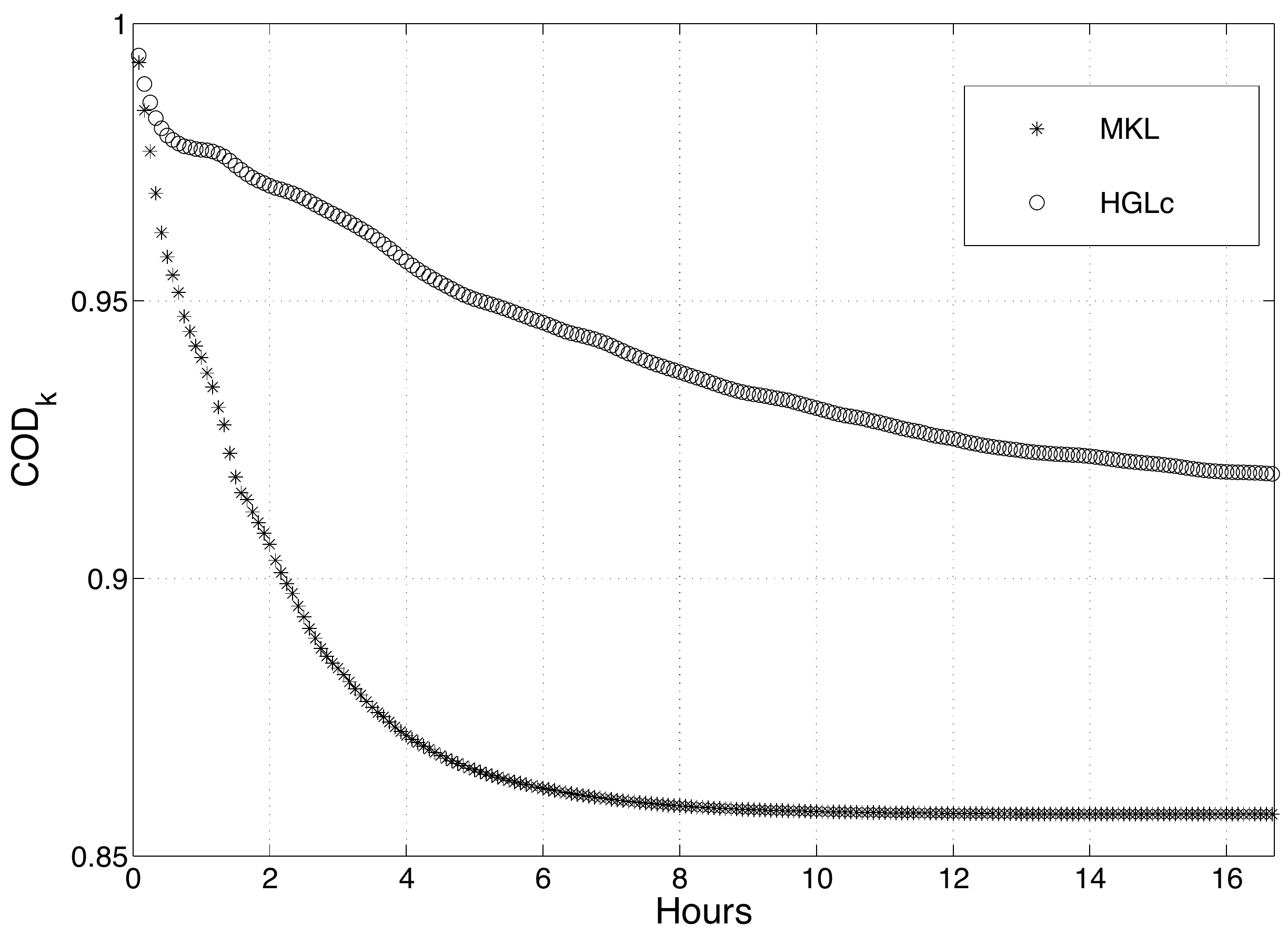}
\caption{ARX model: coefficient of determination as a function of the prediction horizon (one step = 5 minutes) using HGLc (o) and MKL ($*$).}
    \label{Fig5}
  \end{center}
\end{figure*}

\begin{figure*}
  \begin{center}
    \includegraphics[width=0.5\linewidth,angle=-90]{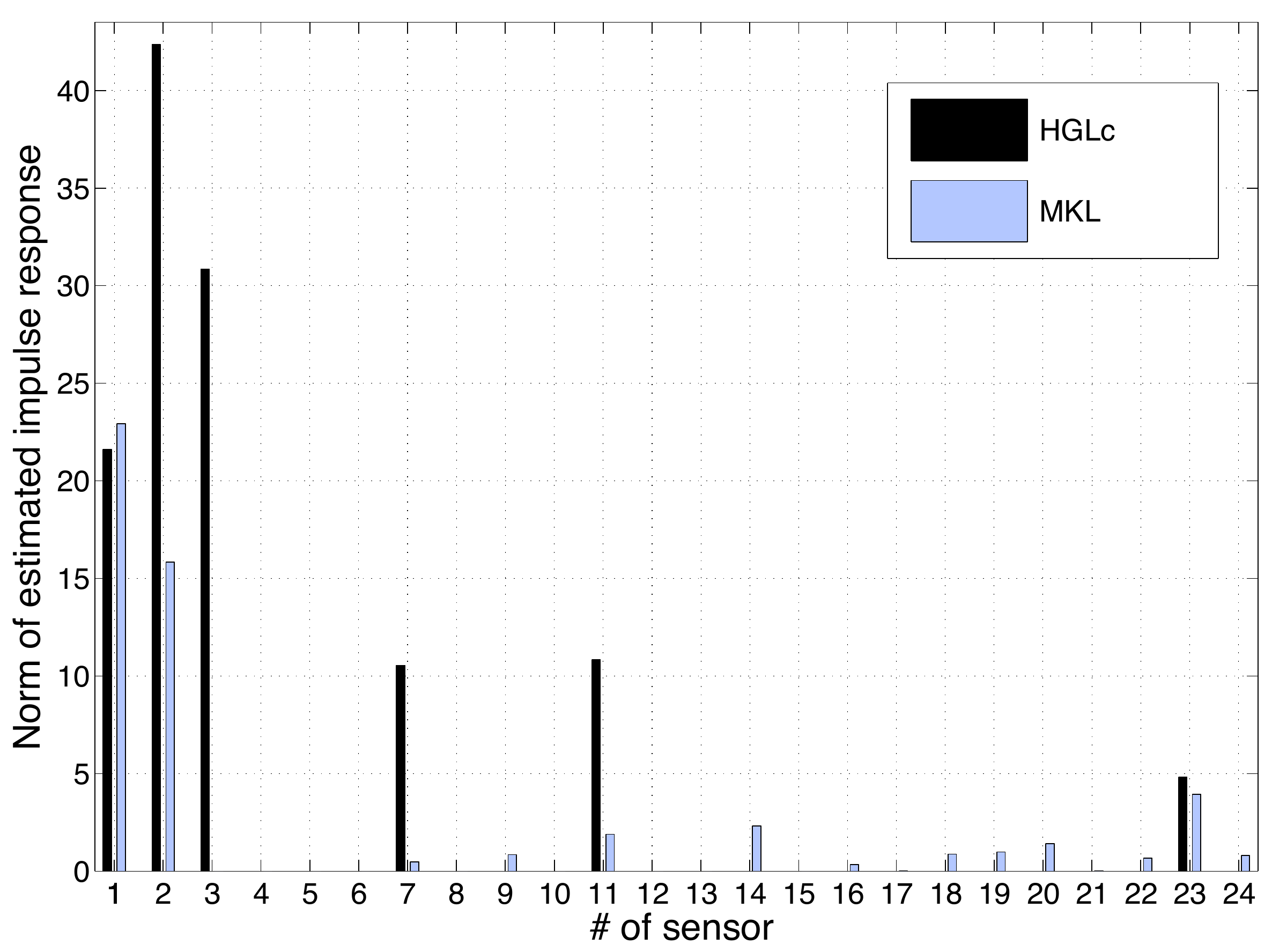}
\caption{ARX model: norm of estimated impulse responses using HGLc and MKL.}
    \label{Fig6}
  \end{center}
\end{figure*}

\section{Conclusions}
We have presented a comparative study of two methods for sparse estimation: GLasso (equivalently, MKL) and the new HGLasso. They derive from the same Bayesian model, yet in a different way.
The peculiarities of HGLasso can be summarized as follows:
\begin{itemize}
\item in comparison with GLasso, HGLasso derives from a marginalized joint density
with the resulting estimator involving optimization of a non-convex objective;
\item the non-convex nature allows HGLasso to achieve higher levels of sparsity than
GLasso without introducing too much regularization in the estimation process;

\item the MSE analysis reported in this paper reveals the superior performance
of HGLasso also in the reconstruction of the parameter
groups different from zero.
Remarkably, our analysis elucidates this issue showing the robustness
of the empirical Bayes procedure, based on marginal likelihood optimization,
independently of the correctness of the priors entering the stochastic model
underlying HGLasso. It also clarifies the asymptotic properties of ARD;
\item the non-convex nature
of HGLasso is not a limitation for its practical application.
Indeed, the Bayesian Forward Selection used in HGLa provides a highly successful initialization
procedure for the regularization parameter $\gam$ ($\hat\gam$), an initial estimate for $\lam$
(using $\hat\kappa$), and an
initial estimate of the non-zero groups ($I_{FS}$).
This procedure requires only the solution of a one dimensional version of the basic problem
(\ref{AlternativeGlambda}).
\end{itemize}
Notice also that, being included in the framework of the Type II Bayesian estimators,
many variations of HGLasso could be considered, adopting different prior models for $\lambda$. 
In this paper, the exponential prior has been used since
the goal was the comparison of different estimators that can be derived from the same Bayesian model
underlying GLasso.  In this way, it has been also shown how, starting from 
the same stochastic framework, an estimator derived from a suitable posterior marginalization can 
have signiÞcant advantages over another one derived from posterior optimization.\\
All theoretical findings have been confirmed by experiments involving real and simulated
data, also comparing the performance of the new approach with adaptive lasso. 
The aforementioned version of HGLasso has been able to promote sparsity
correctly detecting a high percentage (in some experiments also equal to $99\%$) of the null blocks of the parameter vector and to provide accurate estimates of the non-null blocks. \\

\section{Appendix}

\subsection{Proof of Proposition \ref{MKLBayes}}

Given $\gam\ge 0$, 
the maximum a posteriori estimate for $(\phi,\lam)$ given $y$ is obtained by
solving the problem
\begin{equation}\label{joint}
\min_{\phi\in\R^m,\lambda \in \R_+^{p}}
\frac{(y-G\Lam^{1/2}\phi)^{\top}(y-G\Lam^{1/2}\phi)}{2\sigma^2} +
\frac{\phi^{\top}  \phi}{2}
+  \gamma \one^\top \lambda\ .
\end{equation}
Minimizing first in $\phi$ allows us to write $\phi$ as the following function of $\lam$:
\begin{equation}\label{phimarg}
\phi(\lam):=(\sig^2I+\Lam^{1/2}G^\top G\Lam^{1/2})^{-1}\Lam^{1/2}G^\top y
=\Lam^{1/2}G^\top(\sig^2I+K(\lam))^{-1}y,
\end{equation}
where the second expression follows from the Matrix Inversion Lemma.
Substituting this back into (\ref{joint}) yields the optimization problem
(\ref{redMarg}). Hence, if $\hat\lam$ is as defined in (\ref{redMarg}), then (\ref{optPhi})
follows from (\ref{phimarg}).

Now, we show that the pair $(\hat c,\hat\lam)$ described by
(\ref{redMarg}) and (\ref{optPhi}) solves
(\ref{MKL2}) for some value of $\gam\ge 0$. For this, we need only show that the pair
$(\hat c,\hat\lam)$ is a local solution to (\ref{MKL2}) for some $\gam\ge 0$.
To this end, observe that (\ref{MKL}) is coercive in $f$ (the objective goes to $\infty$ 
as the norm of $f$ goes to $\infty$) and $\lam$ is constrained to stay in a compact set, so that a solution exists. Consequently a solution to (\ref{MKL2}) exists. Let $\gam\ge 0$ be the Lagrange multiplier associated with a local solution $(c^*,\lam^*)$ to (\ref{MKL2}) ($\gam$ exists since the constraint
is linear). We show that $(c^*,\lam^*)=(\hat c,\hat\lam)$. Since $\gam$ is the Lagrange multiplier,
$(c^*,\lam^*)$ is a local solution to the problem
\begin{equation}\label{MKL2b}
 \min_{c \in \R^{n},\lambda \in \R_p^+}
\frac{(y-K(\lambda) c)^\top(y-K (\lambda) c)}{\sigma^2}
+ c^{\top} K(\lambda) c +\gam\one^\top\lam\ .
\end{equation}
As above, first optimize (\ref{MKL2b}) in $c$ to obtain
\[
c(\lam)=(\sig^2I+K(\lam))^{-1}y.
\]
Plugging this back into the objective in (\ref{MKL2b}) gives the objective
\[
y^\top(\sig^2I+K(\lam))^{-1}y+\gam\one^\top\lam
\]
which establishes (\ref{redMarg}) and (\ref{optim_c}) for $(c^*,\lam^*)$, and hence
the pair $(c^*,\lam^*)$ satisfies (\ref{redMarg}) and (\ref{optPhi}) by the first part of the proof
and solves (\ref{MKL2}) by definition. Finally, (\ref{MKL=GLasso}) can be obtained
reformulating the objective (\ref{joint}) in terms of $\theta=\Lam^{1/2}\phi$ and $\lam$
(in place of $\phi$ and $\lam$), and then minimizing it first in $\lam$.

\subsection{Proof of Proposition \ref{Prop_lambda_HGLasso}}

Under the simplifying assumption $G^\top G = n I$, one can use (\ref{mif}) to simplify the
 necessary conditions for optimality in \eqref{KKTHGL}.
By (\ref{mif}), we have
$$
{G^{(i)}}^\top\Sig_y(\lam)^{-1}=\frac{1}{n\lam_i+\sig^2}{G^{(i)}}^\top,
$$
and so
$$
\Trace\left(G^{(i)\top}\Sigma_y^{-1}G^{(i)}\right) = \frac{n k_i}{n \lambda_i + \sigma^2}
\quad
\mbox{and}
\quad
\|G^{(i)\top}\Sigma_y^{-1}y\|_2^2 =
\left(\frac{n}{n\lambda_i + \sigma^2}\right)^2 \|\hat\theta_{LS}^{(i)}\|^2\ .
$$
Inserting these expressions into \eqref{KKTHGL} with $\mu_i=0$ yields a quadratic equation in
$\lambda_i$ which always has two real solutions.
One is always negative while the other, given by
$$
\frac{1}{4\gam}\left[\sqrt{k_i^2+8\gam \norm{\hat\theta^{(i)}_{LS}}^2}-
\left(k_i+\frac{4\sig^2\gam}{n}\right)\right]
$$
is non-negative provided
\begin{equation}\label{non_zero}
\frac{\|\hat\theta_{LS}^{(i)}\|^2}{k_i}\geq
\frac{\sigma^2}{n}\left[1+\frac{2\gam\sig^2}{nk_i}\right]\ .
\end{equation}
This concludes the proof of \eqref{LambdaEB_explicit_gamma}. The limiting behavior for $\gamma\rightarrow 0$ can be easily verified, yielding
$$
\hat \lambda_i(0) = {\rm max}\left(0,\frac{\|\hat\theta_{LS}^{(i)}\|^2}{k_i} - \frac{\sigma^2}{n}\right) \quad i=1,..,p.
$$
Also note that $\hat\theta_{LS}^{(i)} =  \frac{1}{n} \left(G^{(i)}\right)^\top y$ and $\left(G^{(i)}\right)^\top G^{(i)} = n I_{k_i}$ while $\left(G^{(i)}\right)^\top G^{(j)} = 0$, $\forall j\neq i$. This implies that $\hat\theta_{LS}^{(i)} \sim {\cal N}(\btheta^{(i)}, \frac{\sigma^2}{n} I_{k_i})$. Therefore
$$
\|\hat\theta_{LS}^{(i)}\|^2 \frac{n}{\sigma^2} \sim \chi^2(d,\mu) \quad d=k_i, \quad \mu = \|\btheta^{(i)}\|^2 \frac{n}{\sigma^2}
$$
This, together with \eqref{non_zero}, proves also \eqref{eqP0}.

\subsection{Proof of Proposition \ref{Prop_unbiased}}

In the proof of Proposition \ref{Prop_lambda_HGLasso}
it was shown that  $\|\hat\theta_{LS}^{(i)}\|^2 \frac{n}{\sigma^2}$
follows a noncentral $\chi^2$ distribution with $k_i$ degrees of freedom and noncentrality parameter $\|\theta_{t}^{(i)}\|^2 \frac{n}{\sigma^2}$.
Hence, it is a simple calculation to show that
\begin{equation}\label{EB_mean_var}
\E[\hat \lambda^*_i\,|\,\theta=\btheta ] = \frac{\|\btheta^{(i)}\|^2}{k_i}
\quad
\Var[\hat \lambda^*_i\,|\,\theta=\btheta ]
= \frac{2\sigma^4}{k_i n^2} + \frac{4 \|\btheta^{(i)}\|^2\sigma^2}{k_i^2 n}\ .
\end{equation}
By Corollary \ref{prop_min_MSE},
the first of these equations shows that $\E[\hat\lambda_i^*\,|\,\theta=\btheta ] = \lambda_i^{opt}$.
In addition, since  $\Var\{\hat \lambda^*_i\}$ goes to zero as $n\rightarrow\infty$, $\hat\lambda_i^*$ converges in mean square (and hence in probability) to $\lambda_i^{opt}$.

As for the analysis of  $\hat\lambda_i(0)$, observe that
$$
\E[\hat \lambda_i(0)\,|\,\theta=\btheta ] =
\E[\hat \lambda^*_i\,|\,\theta=\btheta ] - \int_{0}^{k_i\frac{\sigma^2}{n}} \left(\frac{\|\hat\theta^{(i)}_{LS}\|^2}{k_i} - \frac{\sigma^2}{n}\right) dP({\|\hat\theta^{(i)}_{LS}\|^2}\,|\,\theta=\btheta )
$$
where $dP({\|\hat\theta^{(i)}_{LS}\|^2}\,|\,\theta=\btheta )$
is the measure induced by $\|\hat\theta^{(i)}_{LS}\|^2$.
The second term in this expression can be bounded by
$$
- \int_{0}^{k_i\frac{\sigma^2}{n}} \left(\frac{\|\hat\theta^{(i)}_{LS}\|^2}{k_i} - \frac{\sigma^2}{n}\right) dP
({\|\hat\theta^{(i)}_{LS}\|^2}\,|\,\theta=\btheta ) \leq \frac{\sigma^2}{n} \int_{0}^{k_i\frac{\sigma^2}{n}}
dP({\|\hat\theta^{(i)}_{LS}\|^2}\,|\,\theta=\btheta ),
$$
where the last term on the right hand side goes to zero as $n\rightarrow\infty$.
This proves that $\hat\lambda_i(0)$ is asymptotically unbiased. As for consistency,
it is sufficient to observe that
$\Var[\hat\lambda_i(0)\,|\,\theta=\btheta ]\leq \Var[\hat\lambda^*_i\,|\,\theta=\btheta ]$
since ``saturation'' reduces variance. Consequently,
$\hat\lambda_i(0)$ converges in mean square to its mean,
which asymptotically is $\lambda^{opt}_i$ as shown above.
 This concludes the proof.

\subsection{Proof of Proposition \ref{Prop_lambda_MKL}}

Following the same arguments as in the proof of Proposition \ref{Prop_lambda_HGLasso}, under the assumption $G^\top G = nI$ we have that
$$
\|G^{(i)\top}\Sigma_y^{-1}y\|_2^2 = \left(\frac{n}{n\lambda_i + \sigma^2}\right)^2 \|\hat\theta_{LS}^{(i)}\|^2
$$
Inserting this expression into \eqref{FullKKT} with $\mu_i=0$,
one obtains a quadratic equation in $\lambda_i$ which has always two real solutions. One is always negative while the other, given by
$$
\frac{\|\hat\theta_{LS}^{(i)}\|}{\sqrt{2\gamma}} - \frac{\sigma^2}{n}.
$$
is non-negative provided
\begin{equation}\label{non-zero-MKL}
\|\hat\theta_{LS}^{(i)}\|^2\geq \frac{2\gamma\sigma^4}{n^2}\ .
\end{equation}
This concludes the proof of \eqref{LambdaEB_explicit_gammaMKL}.

The limiting behavior for $n\rightarrow \infty$
in equation \eqref{not_consistent_MKL} is easily verified with arguments similar to those in the proof of Proposition \ref{Prop_unbiased}.
As in the proof of Proposition \ref{Prop_lambda_HGLasso},
$\|\hat\theta_{LS}^{(i)}\|^2 \frac{n}{\sigma^2}$ follows a
noncentral $\chi^2(d,\mu)$ distribution with $d=k_i$ and $\mu=\|\btheta^{(i)}\|^2 \frac{n}{\sigma^2}$, so that
 from \eqref{non-zero-MKL} the probability of setting $\hat\lambda_i(\gamma)$ to zero is as given  in  \eqref{eqP0MKL}.

\subsection{Proof of Theorem \ref{MainConvergenceTheorem}}

Recalling model~\eqref{GroupMeasmod}, 
assume that $G^\top G/n$ is bounded and bounded away from zero in probability, so that  there exist constants 
$\infty> c_{max}\geq c_{min}>0$ with
\begin{equation}\label{BoundedGP}
\lim_{n\rightarrow \infty}P[c_{min}I\leq  G^\top G/n \leq c_{max}I ] = 1\;,
\end{equation}
so as $n$ increases, the probability that a particular realization $G$ satisfies 
\begin{equation}\label{BoundedG}
c_{min}I\leq  G^\top G/n \leq c_{max}I 
\end{equation}
increases to $1$. We now characterize the behavior of key matrices used in the analysis.

We first provide a technical lemma which will become useful in the sequel:
\begin{lemma}\label{lemma_min_angle}
Assume \eqref{BoundedG} holds; then the following conditions hold
\begin{enumerate}
\item[(i)]  Consider an arbitrary subset  $I = [I(1), \dots, I(p_I)]$ of size $p_I$ to be any subset of the indices $[1, \dots, p]$, so $p \leq p_I$
and define \begin{equation}\label{generalIndexG}
G^{(I)} = \begin{bmatrix} G^{({I(1)})} \dots G^{({I(p_I)})}\end{bmatrix}\;,
\end{equation}
obtained by taking the subset of {\it blocks of columns of $G$} indexed by $I$. Then
\begin{equation}
\label{GInterp}
c_{min}I \leq \frac{(G^{(I)})^T G^{(I)}}{n} \leq c_{max}I\;.
\end{equation}
%
\item[(ii)] Let $I^c$ be the complementary set of $I$ in $[1, \dots, p]$, so that $I^c\cap I = \emptyset$ and $I \cup I^c = [1, \dots, p]$. The minimal angle $\theta_{min}$
between the spaces
$${\cal G}^I:={col\,span}\{G^{(i)}/\sqrt{n}, \;\; i\in I\} \quad{\rm and}\quad {\cal G}^{I^c}:=
{col\,span}\{G^{(j)}/\sqrt{n}:j \in I^c\}$$ satisfies:
$$
\theta_{min} \geq {\rm acos}\left(\sqrt{1-\frac{c_{min}}{c_{max}}}\right)>0
$$
\end{enumerate}
\end{lemma}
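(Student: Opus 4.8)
The plan is to prove the two parts in order, using part (i) inside part (ii). For part (i), the observation is that $(G^{(I)})^\top G^{(I)}/n$ is, up to a permutation of coordinates, a principal submatrix of $G^\top G/n$, and hence inherits the two-sided eigenvalue bound. Concretely, I would take an arbitrary coefficient vector $a$ of length $\sum_{i\in I}k_i$, form $x\in\R^m$ by placing $a$ in the blocks indexed by $I$ and zeros elsewhere, note $G^{(I)}a=Gx$ and $\|x\|=\|a\|$, so that $a^\top(G^{(I)})^\top G^{(I)}a/n=x^\top G^\top Gx/n$, and then apply \eqref{BoundedG} to get $c_{min}\|a\|^2\le a^\top (G^{(I)})^\top G^{(I)}a/n\le c_{max}\|a\|^2$. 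Since $a$ is arbitrary, this is \eqref{GInterp}. In particular, because $c_{min}>0$, every $G^{(I)}$ — hence each single block $G^{(i)}$ and the complementary matrix $G^{(I^c)}$ — has full column rank, a fact I will use in part (ii).

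For part (ii), I would first recall that the minimal principal angle between two nonzero subspaces $U,V$ is $\theta_{min}=\arccos\big(\max\{\langle u,v\rangle: u\in U,\ v\in V,\ \|u\|=\|v\|=1\}\big)$, and that the maximizing pair may be chosen with $\langle u,v\rangle\ge0$ (replace $v$ by $-v$ if needed); so it suffices to show $\langle u,v\rangle\le 1-c_{min}/c_{max}$ for all unit $u\in\mathcal G^{I}$, $v\in\mathcal G^{I^c}$ with $\langle u,v\rangle\ge 0$ (if $I$ or $I^c$ is empty there is nothing to prove). Using full column rank from part (i), write $u=G^{(I)}a/\sqrt n$ and $v=G^{(I^c)}b/\sqrt n$. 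The upper bound in \eqref{GInterp} gives $1=\|u\|^2\le c_{max}\|a\|^2$, so $\|a\|^2\ge 1/c_{max}$, and likewise $\|b\|^2\ge 1/c_{max}$. Assembling $x\in\R^m$ with block $a$ on $I$ and $-b$ on $I^c$, we have $u-v=Gx/\sqrt n$ and $\|x\|^2=\|a\|^2+\|b\|^2$, so the lower bound in \eqref{BoundedG} yields
\[
\|u-v\|^2=\frac{\|Gx\|^2}{n}\ \ge\ c_{min}\big(\|a\|^2+\|b\|^2\big)\ \ge\ \frac{2c_{min}}{c_{max}}\,.
\]
Since $\|u-v\|^2=2-2\langle u,v\rangle$, this gives $\langle u,v\rangle\le 1-c_{min}/c_{max}\in[0,1)$. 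Finally, using $t\le\sqrt t$ on $[0,1]$ together with monotonicity of $\arccos$,
\[
\theta_{min}=\arccos(\cos\theta_{min})\ \ge\ \arccos\!\Big(1-\tfrac{c_{min}}{c_{max}}\Big)\ \ge\ \arccos\!\Big(\sqrt{1-\tfrac{c_{min}}{c_{max}}}\Big)\ >\ 0\,,
\]
which is the claim.

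\textbf{Main obstacle.} There is no serious obstacle here: on the event \eqref{BoundedG} each step is a one-line quadratic-form manipulation, and no compactness or probabilistic input is required (that enters only later, via \eqref{BoundedGP}). The two points to handle with care are (a) invoking part (i) to ensure the coefficient vectors $a,b$ in the representations $u=G^{(I)}a/\sqrt n$, $v=G^{(I^c)}b/\sqrt n$ are well defined — this is precisely where $c_{min}>0$, i.e. the full-column-rank conclusion, is needed — and (b) stating the definition of $\theta_{min}$ so that the sign of $\langle u,v\rangle$ is a non-issue. It is also worth remarking that the estimate actually produced, $\cos\theta_{min}\le 1-c_{min}/c_{max}$, is slightly sharper than the stated bound $\sqrt{1-c_{min}/c_{max}}$.
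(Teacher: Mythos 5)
Your proof is correct, and for part (ii) it follows a genuinely different route from the paper's. For part (i) the content is the same: the paper simply cites the principal-submatrix bound \cite[Corollary 3.1.3]{HornJohnson}, while you reprove it by zero-padding the coefficient vector $a$ into $x\in\R^m$ and applying \eqref{BoundedG}, which is exactly the argument behind that corollary. For part (ii) the paper introduces orthonormal bases $U_I$, $U_{I^c}$ of the two spaces, writes $G^{(I)}/\sqrt n=U_IT_I$ and $G^{(I^c)}/\sqrt n=U_{I^c}T_{I^c}$, uses $\cos\theta_{min}=\|U_I^\top U_{I^c}\|$, and runs a chain of singular-value inequalities, $c_{min}\le\sigma^2_{min}\bigl(U_I^\top G/\sqrt n\bigr)\le\sigma^2_{min}\bigl([I\;\;U_I^\top U_{I^c}]\bigr)\,\max\bigl(\sigma^2_{max}(T_I),\sigma^2_{max}(T_{I^c})\bigr)\le(1-\cos^2\theta_{min})\,c_{max}$, which delivers precisely the stated bound $\cos^2\theta_{min}\le 1-c_{min}/c_{max}$. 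You instead bound $\|u-v\|^2$ from below for arbitrary unit vectors $u\in\mathcal{G}^I$, $v\in\mathcal{G}^{I^c}$, using only \eqref{BoundedG} together with the norm lower bounds $\|a\|^2,\|b\|^2\ge 1/c_{max}$ from \eqref{GInterp}. This is more elementary — no SVD, no basis bookkeeping, and no need for the somewhat delicate step relating a smallest singular value of $[I\;\;U_I^\top U_{I^c}]$ to the principal angle — and it yields the sharper estimate $\cos\theta_{min}\le 1-c_{min}/c_{max}$, from which the lemma's bound follows via $t\le\sqrt t$ on $[0,1]$, as you observe. Both arguments work on the deterministic event \eqref{BoundedG}, so nothing is lost probabilistically by your route; the paper's approach buys nothing extra here beyond matching the exact constant stated in the lemma, while yours is shorter and strictly stronger whenever $c_{min}<c_{max}$.
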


\begin{proof}  Result~\eqref{GInterp} is a direct consequence of~\cite[Corollary 3.1.3]{HornJohnson}.  As far as condition (ii) is concerned we can proceed as follows:
let $U_{I}$ and $U_{I^c}$ be orthonormal matrices whose columns span ${\cal G}^I$ and ${\cal G}^{I^c}$, so that there exist matrices $T_{I}$ and $T_{I^c}$ so that
$$\begin{array}{c}
G^{(I)}/\sqrt{n} = U_I T_I\\
G^{(I^c)}/\sqrt{n} = U_{I^c} T_{I^c}\end{array}
$$
where $G^{(I^c)}$ is defined analogously to $G^{(I)}$. The minimal angle between  ${\cal G}^I$ and ${\cal G}^{I^c}$
satisfies 
$$
{\rm cos}(\theta_{min}) = \left\|U_{I}^\top U_{I^c}\right\|.$$
Now observe that, up to a permutation of the columns which is irrelevant, $G/\sqrt{n} = [U_I T_I \;\;\; U_{I^c} T_{I^c}]$, so that
$$
U_I^\top G /\sqrt{n} = [T_I\; \;\; U_{I}^\top U_{I^c}\ T_{I^c}] = [I \; \;\;U_{I}^\top U_{I^c}] 
\left[\begin{array}{cc} T_I & 0 \\ 0 & T_{I^c}\end{array}\right].
$$
Denoting with $\sigma_{min}(A)$ and $\sigma_{max}(A)$ the minimum and maximum singular values of a matrix $A$, it is a straightforward calculation to verify that the following chain of inequalities holds:
$$
\begin{array}{rcl}
c_{min} = \sigma_{min}(G^\top G/n) \leq \sigma^2_{min}\left(U_I^\top G /\sqrt{n} \right) &= &  \sigma^2_{min}\left([I \;\;\; U_{I}^\top U_{I^c}] \left[\begin{array}{cc} T_I & 0 \\ 0 & T_{I^c}\end{array}\right]\right) \\
& \leq &  \sigma^2_{min}\left([I \;\;\; U_{I}^\top U_{I^c}] \right) \sigma^2_{max}\left( \left[\begin{array}{cc} T_{I} & 0 \\ 0 & T_{I^c}\end{array}\right]\right) \\
& = &  \sigma^2_{min}\left([I \;\;\; U_{I}^\top U_{I^c}] \right) {\rm max}\left( \sigma^2_{max}(T_{I}), \sigma^2_{max}(T_{I^c}) \right)\\ 
& \leq & \sigma^2_{min}\left([I \;\;\; U_{I}^\top U_{I^c}] \right) c_{max}.
\end{array}
$$
Observe now that $\sigma^2_{min}\left([I \;\;\; U_{I}^\top U_{I^c}] \right)  = 1-{\rm cos}^2(\theta_{min})$ so that 
$$
c_{min} \leq (1-{\rm cos}^2(\theta_{min}))c_{max}
$$
and, therefore,
$$
{\rm cos}^2(\theta_{min}) \leq 1-\frac{c_{min}}{c_{max}}
$$
from which the thesis follow.
\end{proof}

\noindent
{\bf Proof of Lemma \ref{lemma_diagonal_model}:}
Let us consider the Singular Value Decomposition (SVD)
\begin{equation}\label{SVD_P}
\frac{\sum_{j=1, j\neq i}^{p}  G^{(j)} \left(G^{(j)}\right)^\top \lambda_j}{n} = PSP^\top;
\end{equation}
where, by the assumption \eqref{BoundedG}, using $\frac{\sum_{j=1, j\neq i}^{p}  G^{(j)} \left(G^{(j)}\right)^\top \lambda_j}{n}\geq \frac{\sum_{j=1, j\neq i, \lambda_j\neq 0}^{p}  G^{(j)} \left(G^{(j)}\right)^\top }{n}{\rm min}\{\lambda_j, j : \lambda_j\neq 0\}$ and lemma \ref{lemma_min_angle} the minimum singular value $\sigma_{min}(S)$ of $S$ in \eqref{SVD_P} satisfies
     \begin{equation}\label{min_S}
     \sigma_{min}(S)\geq c_{min}{\rm min}\{\lambda_j, j : \lambda_j\neq 0\}.
      \end{equation}
      Then the SVD of $\Sigma_{\bar v}=\sum_{j=1, j\neq i}^{p}  G^{(j)} \left(G^{(j)}\right)^\top \lambda_j +\sigma^2 I$
satisfies
$$
\Sigma_{\bar v}^{-1} = \left[\begin{array}{cc}P & P_\perp\end{array}\right]
\left[\begin{array}{cc}(nS+\sigma^{2})^{-1} & 0 \\ 0 & \sigma^{-2}I\end{array}\right]
\left[\begin{array}{c}P^\top \\ P_\perp^\top\end{array}\right]
$$
so that $\|\Sigma_{\bar v}^{-1} \| =\sigma^{-2}$.

Note now that 
$$
D_n^{(i)} =\left(U_n^{(i)}\right)^\top\frac{\Sigma_{\bar v}^{-1/2} G^{(i)}}{\sqrt{n}}V_n^{(i)}
$$
and therefore, using Lemma \ref{lemma_min_angle},
$$
\|D_n^{(i)}\|\leq \|\Sigma_{\bar v}^{-1} \| \sqrt{c_{max}} = \sigma^{-2}\sqrt{c_{max}}.
$$ proving that $D^{(i)}_n$ is bounded. In addition, again using  Lemma \ref{lemma_min_angle}, condition  \eqref{BoundedG}
 implies   that $\forall a,b$ (of suitable dimensions) s.t. $\|a\|=\|b\|=1$,
$a^\top\frac{P_\perp^\top G^{(i)}}{\sqrt{n}} b \geq k$, $k = \sqrt{1-{\rm cos}^2(\theta_{min})}\geq \frac{c_{min}}{c_{max}}>0$. This, using \eqref{SVD},  guarantees that
$$
\begin{array}{rcl}
D_n^{(i)} & =&\left(U_n^{(i)}\right)^\top\frac{\Sigma_{\bar v}^{-1/2} G^{(i)}}{\sqrt{n}}V_n^{(i)}=
\left(U_n^{(i)}\right)^\top \left(P (nS + \sigma^2)^{-1/2}P^\top+P_\perp \sigma^{-1} P_\perp^\top\right) \frac{G^{(i)}}{\sqrt{n}}\\
& \geq &
 \left(U_n^{(i)}\right)^\top\left(P_\perp \sigma^{-1} P_\perp^\top\right)\frac{G^{(i)}}{\sqrt{n}}\\
 &
 \geq& k\sigma^{-1} I
 \end{array}
$$ and therefore  $D^{(i)}_n$ is bounded away from zero. It is then a matter of simple calculations to show that with the definitions \eqref{Diagonal_model_definition} then \eqref{GroupMeasmod_i} can be rewritten in the equivalent form \eqref{Diagonal_model}.
${\square}$
\smallskip

\begin{lemma}\label{lemma_structure_epsilon}
Assume, w.l.o.g., that the blocks of $\theta$ have been reordered so that $\|\bar\theta^{(j)}\|\neq 0$, $j=1,..,k$
and $\|\bar\theta^{(j)}\| = 0$, $j=k+1,..,m$ and that the spectrum of $G^\top G/n$ is bounded and bounded away from zero in probability, so that
\begin{equation}\label{condition_G}
{\rm lim}_{n\rightarrow\infty}P[c_{max} I\geq G^\top G/n \geq c_{min}I]=1.
\end{equation} Denote
 $$
 \begin{array}{c}
 I_1:=\{j \in [1,k],\; j\neq i\}\\
 I_0:=\{j \in [k+1,p],\; j\neq i\}
 \end{array}
 $$
 and assume also that the numbers $\lambda_j^n$, which are here allowed to depend on $n$, are bounded and satisfy:
\begin{equation}\label{Mlambda}
\begin{array}{c} \displaystyle{\mathop{\rm lim}_{n\rightarrow\infty}}\; f_n =+\infty \quad {\rm where} \quad f_n : = \displaystyle{\mathop{\rm min}_{j\in I_1}} \;n\lambda_j^n
\end{array}\end{equation}
Then, conditioned on $\theta$,  $\epsilon_n^{(i)}$ in \eqref{Diagonal_model_definition} and
\eqref{Diagonal_model} can be decomposed as
\begin{equation}\label{epsilon_n}
\epsilon_n^{(i)} = m_{\epsilon_n}(\theta) + v_{\epsilon_n}.
\end{equation}
The following conditions hold: 
\begin{equation}\label{m_sigma_epsilon}
 \E_v \left[\epsilon_n^{(i)}   \right]= m_{\epsilon_n}(\theta)  = O_P\left(\frac{1}{\sqrt{f_n}}\right) \quad 
 \quad  v_{\epsilon_n}  = O_P\left(\frac{1}{\sqrt{n}}\right)
\end{equation} so that
$\epsilon_n^{(i)}|\theta$
converges to zero in probability (as $n\rightarrow \infty$).
In addition 
\begin{equation}\label{convergence_variance_conditioned}
 {Var}_v\{\epsilon_n^{(i)}\} = \E_{v}\left[v_{\epsilon_n} v_{\epsilon_n} ^\top\right] =  O_P\left(\frac{1}{{n}}\right).
\end{equation}
If in addition
\footnote{This is equivalent to say that the columns of $G^{(j)}$,  $j=1,..,k$, $j\neq i$ are asymptotically orthogonal to the columns of $G^{(i)}$.}
\begin{equation}\label{block_asy_orth}
n^{1/2}\frac{\left(G^{(i)}\right)^\top G^{(j)}}{n} = O_P(1)\;\; ;\;j=1,..,k\;\;j\neq i
\end{equation}
then
\begin{equation}\label{m_epsilon_uncorrelated blocks}
m_{\epsilon_n}(\theta)  = O_P\left(\frac{1}{\sqrt{nf_n}}\right)
\end{equation}
\end{lemma}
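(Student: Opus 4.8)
The plan is to reduce the claim to a single estimate on $\norm{(G^{(i)})^\top\Sigma_{\bar v}^{-1}\bar m}$, where $\Sigma_{\bar v}$ is the matrix of Lemma~\ref{lemma_diagonal_model}, $\bar G:=[\,G^{(1)}\ \cdots\ G^{(i-1)}\ G^{(i+1)}\ \cdots\ G^{(p)}\,]$, $\bar\Lambda:=\mathrm{blockdiag}(\lambda_j^n I_{k_j})_{j\ne i}$, $\bar\theta$ is the stacking of the $\btheta^{(j)}$ with $j\ne i$, and $\bar m:=\vE[\bar v]=\bar G\bar\theta=\sum_{j\in I_1}G^{(j)}\btheta^{(j)}$ (the $I_0$ blocks drop out since $\btheta^{(j)}=0$ there). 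Since $\epsilon_n^{(i)}=(U_n^{(i)})^\top\Sigma_{\bar v}^{-1/2}\bar v/\sqrt n$ by~\eqref{Diagonal_model_definition}, one has $m_{\epsilon_n}(\theta)=(U_n^{(i)})^\top\Sigma_{\bar v}^{-1/2}\bar m/\sqrt n$; transposing the SVD~\eqref{SVD} gives $(U_n^{(i)})^\top=(D_n^{(i)})^{-1}(V_n^{(i)})^\top(G^{(i)})^\top\Sigma_{\bar v}^{-1/2}/\sqrt n$, whence
\[
m_{\epsilon_n}(\theta)=(D_n^{(i)})^{-1}(V_n^{(i)})^\top\,\frac{(G^{(i)})^\top\Sigma_{\bar v}^{-1}\bar m}{n}\,.
\]
Because $D_n^{(i)}$ is bounded below (Lemma~\ref{lemma_diagonal_model}) and $V_n^{(i)}$ is orthonormal, it suffices to prove $\norm{(G^{(i)})^\top\Sigma_{\bar v}^{-1}\bar m}=O_P(\sqrt{n/f_n})$. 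The crude bound $\norm{(G^{(i)})^\top\Sigma_{\bar v}^{-1}\bar m}\le\norm{G^{(i)}}\,\norm{\Sigma_{\bar v}^{-1/2}}\,(\bar m^\top\Sigma_{\bar v}^{-1}\bar m)^{1/2}=O_P(\sqrt n)\cdot O_P(\sqrt{n/f_n})=O_P(n/\sqrt{f_n})$ — in which $\bar m^\top\Sigma_{\bar v}^{-1}\bar m=O_P(n/f_n)$ comes from $\Sigma_{\bar v}\succeq\tfrac{f_n}{n}\sum_{j\in I_1}G^{(j)}(G^{(j)})^\top+\sigma^2 I$ and the block-diagonality of the first summand with respect to $\mathrm{col\,span}\{G^{(j)}:j\in I_1\}$ — only recovers the weaker rate~\eqref{m_sigma_epsilon}; the missing factor $1/\sqrt n$ is exactly what the asymptotic orthogonality~\eqref{block_asy_orth} must supply.

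Next, by the matrix inversion lemma, $\Sigma_{\bar v}^{-1}\bar m=\bar G u$ with $u:=(\sigma^2\bar\Lambda^{-1}+\bar G^\top\bar G)^{-1}\bar\Lambda^{-1}\bar\theta$. The key observation is that $\bar\Lambda^{-1}\bar\theta$ is supported on the blocks $j\in I_1$, where its norm is $O_P(n/f_n)$ (since $\lambda_j^n\ge f_n/n$ and $\norm{\btheta^{(j)}}=O(1)$), while $\norm{(\sigma^2\bar\Lambda^{-1}+\bar G^\top\bar G)^{-1}}\le(nc_{min})^{-1}$ by~\eqref{BoundedG}, so $\norm u=O_P(1/f_n)$. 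Splitting
\[
(G^{(i)})^\top\Sigma_{\bar v}^{-1}\bar m=(G^{(i)})^\top\bar G\,u=\sum_{j\in I_1}(G^{(i)})^\top G^{(j)}u^{(j)}+\sum_{\ell\in I_0}(G^{(i)})^\top G^{(\ell)}u^{(\ell)},
\]
the first sum is controlled by~\eqref{block_asy_orth}: $\norm{(G^{(i)})^\top G^{(j)}}=O_P(\sqrt n)$ for each of the finitely many $j\in I_1$, and with $\norm{u^{(j)}}\le\norm u=O_P(1/f_n)$ this sum is $O_P(\sqrt n/f_n)=O_P(\sqrt{n/f_n})$ (using $f_n\ge\sqrt{f_n}$ eventually). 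Equivalently, this step is the clean fact $\norm{\Pi G^{(i)}}=O_P(1)$, with $\Pi$ the orthogonal projection onto $\mathrm{col\,span}\{G^{(j)}:j\in I_1\}$, which follows from~\eqref{block_asy_orth} and Lemma~\ref{lemma_min_angle}; then the $I_1$-part of $(G^{(i)})^\top\Sigma_{\bar v}^{-1}\bar m$ is at most $\norm{\Pi G^{(i)}}\,\norm{\Sigma_{\bar v}^{-1}\bar m}=O_P(1)\cdot O_P(\sqrt{n/f_n})$.

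The hard part is the second sum — the cross-talk between block $i$ and the null blocks $\ell\in I_0$ — for which~\eqref{block_asy_orth} gives no control and the naive estimate $\norm{(G^{(i)})^\top G^{(\ell)}}\,\norm{u^{(\ell)}}=O_P(n)\cdot O_P(1/f_n)=O_P(n/f_n)$ is far too large. Here I would use that the block-$\ell$ row of the normal equations defining $u$ has zero right-hand side (because $\btheta^{(\ell)}=0$), which forces $u^{(\ell)}=-\tfrac{\lambda_\ell^n}{\sigma^2}(G^{(\ell)})^\top\Sigma_{\bar v}^{-1}\bar m$; hence the second sum equals $-\sigma^{-2}(G^{(i)})^\top T_0\,\Sigma_{\bar v}^{-1}\bar m$ with $T_0:=\sum_{\ell\in I_0}\lambda_\ell^n G^{(\ell)}(G^{(\ell)})^\top$, and one shows this is $O_P(\sqrt{n/f_n})$ by combining $\norm{\Sigma_{\bar v}^{-1}\bar m}=O_P(\sqrt{n/f_n})$ with the fact that in the regime of interest the null-block hyperparameters $\lambda_\ell^n$ are themselves negligible (by Theorem~\ref{MainConvergenceTheorem} they go to zero in probability), so that $T_0$ only contributes a lower-order term. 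Quantifying this ``negligibility'' precisely enough to beat the $O_P(n)$ size of $(G^{(i)})^\top G^{(\ell)}$ is the one genuinely delicate step; everything else is routine bookkeeping with~\eqref{BoundedG}, Lemma~\ref{lemma_min_angle}, and the matrix inversion lemma.
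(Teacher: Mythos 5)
Your reduction of $m_{\epsilon_n}(\theta)$ to $(D_n^{(i)})^{-1}(V_n^{(i)})^\top (G^{(i)})^\top\Sigma_{\bar v}^{-1}\bar m/n$ is the same starting point as the paper's, but from there your proof of the bound $m_{\epsilon_n}(\theta)=O_P(1/\sqrt{f_n})$ in \eqref{m_sigma_epsilon} takes a genuinely different and cleaner route: where the paper expands $\Sigma_{\bar v}^{-1}$ in its eigenbasis ($P_1$, $P_0$, $P_\perp$) and controls the $P_0$ rows by the elementary estimate \eqref{bound-term-to-zero}, you bound the quadratic form $\bar m^\top\Sigma_{\bar v}^{-1}\bar m=O_P(n/f_n)$ using $\Sigma_{\bar v}\succeq\frac{f_n}{n}G^{(I_1)}(G^{(I_1)})^\top+\sigma^2 I$, the fact that $\bar m$ lies in the column span of $G^{(I_1)}$, and Lemma~\ref{lemma_min_angle}, and then finish with Cauchy--Schwarz; this is correct and, like the paper's argument, needs nothing about the null-block hyperparameters. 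Two smaller points: you never address the second part of \eqref{m_sigma_epsilon} nor \eqref{convergence_variance_conditioned}, which are part of the statement (they are routine, since $\Var_v\{\epsilon_n^{(i)}\}=\frac{\sigma^2}{n}(U_n^{(i)})^\top\Sigma_{\bar v}^{-1}U_n^{(i)}$ and $\norm{\Sigma_{\bar v}^{-1}}\le\sigma^{-2}$); and your formula $u=(\sigma^2\bar\Lambda^{-1}+\bar G^\top\bar G)^{-1}\bar\Lambda^{-1}\bar\theta$ presupposes $\bar\Lambda\succ 0$, whereas the lemma allows $\lambda_\ell^n=0$ for $\ell\in I_0$; use the push-through identity $u=(\sigma^2 I+\bar\Lambda\bar G^\top\bar G)^{-1}\bar\theta$ (those blocks then give $u^{(\ell)}=0$) and restrict your symmetric bound to the blocks with $\lambda_j^n>0$.

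The genuine gap is the final claim \eqref{m_epsilon_uncorrelated blocks}. Your handling of the $I_1$ cross-terms via \eqref{block_asy_orth} and $\norm{u}=O_P(1/f_n)$ is fine, but the $I_0$ cross-term is exactly where the proof stops, and the patch you sketch cannot work as stated. Invoking Theorem~\ref{MainConvergenceTheorem} to get $\lambda_\ell^n\rightarrow_p 0$ for $\ell\in I_0$ is circular: this lemma is an ingredient of that theorem's proof (through Lemma~\ref{StructureObjective} and Lemma~\ref{MainConvergenceTheoremAlong1}), and the lemma's hypotheses assume only that the $\lambda_j^n$ are bounded, so no smallness of the null-block hyperparameters is available. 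Moreover, even granting $\lambda_\ell^n\rightarrow_p 0$, your own estimates require a rate: since \eqref{block_asy_orth} says nothing about the null blocks, the best generic bounds are $\norm{(G^{(i)})^\top G^{(\ell)}}=O_P(n)$ and $\norm{(G^{(\ell)})^\top\Sigma_{\bar v}^{-1}\bar m}=O_P(n/\sqrt{f_n})$, so the $\ell$-th term of your second sum is $O_P(\lambda_\ell^n\, n^2/\sqrt{f_n})$, and reaching the target $O_P(\sqrt{n/f_n})$ would need $\lambda_\ell^n=O_P(n^{-3/2})$ --- a rate provided nowhere. The paper's mechanism avoids this dead end: in the eigenbasis of $\Sigma_{\bar v}$ the $I_0$ blocks enter only through the positive semidefinite perturbation $\Delta$ in \eqref{Preale}, which is simply discarded in the inequality $n[S_0]_{jj}\ge \sigma_{min}(n\bar S_1)\norm{P_{0,j}^\top\bar P_1}^2$, so the bound is uniform in the $\lambda_\ell^n$; the extra $n^{-1/2}$ under \eqref{block_asy_orth} is then extracted from the effect of the asymptotic orthogonality of $G^{(i)}$ to the span of the nonzero blocks on the coefficient matrices in that expansion, never from smallness of $\lambda_\ell^n$. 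To complete your route you need an estimate of the $I_0$ cross-talk that is likewise uniform in the null-block hyperparameters.
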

\begin{proof}
Consider the Singular Value Decomposition
\begin{equation}\label{Pideale}
\bar P_1 \bar  S_1 \bar  P_1^\top : = \frac{1}{n}\sum_{j \in I_1} G^{(j)} \left(G^{(j)}\right)^\top \lambda_j^n.
\end{equation}
Using \eqref{Mlambda}, there exist $\bar n$ so that, $\forall\; n>\bar n$ we have
 $0< \lambda_j^n\leq M<\infty$, $j\in I_1$. Otherwise, we could find a subsequence $n_k$ so that $\lambda_j^{n_k}=0$ and 
hence $n_k\lambda_j^{n_k} = 0$, contradicting \eqref{Mlambda}. 
Therefore, the matrix $ \bar P_1$ in \eqref{Pideale}  is an orthonormal basis
for the space
${\cal G}_1:={col\,span}\{G^{(j)}/\sqrt{n}:j\in I_1\}$.
Let also  $T_j$ be such that $G^{(j)}/\sqrt{n} =\bar  P_1T_j$, $j\in I_1$. Note that by assumption~\eqref{BoundedGP} and lemma \ref{lemma_min_angle}
%
\begin{equation}\label{OpT}
\|T_j\| = O_P(1)\quad \forall
\, j\in I_1.
\end{equation}

Consider now the Singular Value Decomposition
\begin{equation}\label{Preale}
\begin{array}{rcccc}
\left[\begin{array}{cc}P_1 &  P_0\end{array}\right]
\left[\begin{array}{cc} S_1 & 0 \\ 0 & S_0\end{array}\right]
\left[\begin{array}{c} P_1^\top \\  P_0^\top\end{array}\right]
      &:=& \underbrace{\frac{1}{n}\sum_{j \in I_1} G^{(j)} \left(G^{(j)}\right)^\top \lambda_j^n} &+& \underbrace{\frac{1}{n}\sum_{j \in I_0} G^{(j)} \left(G^{(j)}\right)^\top \lambda_j^n} \\
& = & \bar  P_1 \bar S_1 \bar P_1^\top & + & \Delta.
\end{array}
\end{equation}
For future reference note that $\exists T_{\bar P_1} \; : \; \bar P_1 = \left[\begin{array}{cc} P_1 &  P_0\end{array}\right]T_{\bar P_1}$.
Now, from \eqref{SVD} we have that
\begin{equation}\label{SVD_new}
\frac{\Sigma_{\bar v}^{-1} G^{(i)}}{\sqrt{n}}V_n^{(i)}\left(D_n^{(i)}\right)^{-1} =\Sigma_{\bar v}^{-1/2} U_n^{(i)}.
\end{equation}
Using \eqref{SVD_new} and defining
$$ P:= \left[\begin{array}{cc} P_1 &  P_0\end{array}\right] \quad  S:=\left[\begin{array}{cc} S_1 & 0 \\ 0 & S_0\end{array}\right],$$
equation \eqref{Diagonal_model_definition} can be rewritten as:
$$
\begin{array}{rcl}
\epsilon_n^{(i)} &=& \left(U_n^{(i)}\right)^\top \frac{\Sigma_{\bar v}^{-1/2} \bar v}{\sqrt{n}}\\
& = &\left(D_n^{(i)}\right)^{-1} \left(V_n^{(i)}\right)^{\top}\frac{\left(G^{(i)}\right)^\top}{\sqrt{n}}
 \Sigma_{\bar v}^{-1} \frac{\bar v}{\sqrt{n}}\\
 &  = & \left(D_n^{(i)}\right)^{-1} \left(V_n^{(i)}\right)^{\top} \frac{\left(G^{(i)}\right)^\top}{\sqrt{n}}
 \left[\begin{array}{cc} P &  P_\perp\end{array}\right] \left[\begin{array}{cc}(n S+\sigma^{2}I)^{-1} & 0 \\ 0 & \sigma^{-2}I\end{array}\right]\left[\begin{array}{c} P^\top \\  P_\perp^\top\end{array}\right]
 \frac{\bar v}{\sqrt{n}} \\
 & = & \left(D_n^{(i)}\right)^{-1} \left(V_n^{(i)}\right)^{\top} \frac{\left(G^{(i)}\right)^\top}{\sqrt{n}}
  \left[\begin{array}{cc} P & P_\perp\end{array}\right] \left[\begin{array}{cc}(n S+\sigma^{2}I)^{-1} & 0 \\ 0 & \sigma^{-2}I\end{array}\right]\left[\begin{array}{c} P^\top \\  P_\perp^\top\end{array}\right]
 \left[\sum_{j \in I_1}\frac{G^{(j)}}{\sqrt{n}}\theta^{(j)} + \frac{v}{\sqrt{n}}\right]\\
 & = & \underbrace{\left(D_n^{(i)}\right)^{-1} \left(V_n^{(i)}\right)^{\top} \frac{\left(G^{(i)}\right)^\top}{\sqrt{n}}
   P  (n S+\sigma^{2}I)^{-1}\left[\begin{array}{c} P_1^\top P_1 \\  P_0^\top P_1\end{array}\right]
 \sum_{j \in I_1}T^{(j)}\theta^{(j)} }_{m_{\epsilon_n}(\theta)}+\\
 & & +\underbrace{ \left(D_n^{(i)}\right)^{-1} \left(V_n^{(i)}\right)^{\top} \frac{\left(G^{(i)}\right)^\top}{\sqrt{n}}
  \left[\begin{array}{cc} P & P_\perp\end{array}\right] \left[\begin{array}{cc}(n S+\sigma^{2}I)^{-1} & 0 \\ 0 & \sigma^{-2}I\end{array}\right] \frac{v_{\bar p}}{\sqrt{n}}}_{v_{\epsilon_n}}
\end{array}
$$
where the last equation defines $m_{\epsilon_n}(\theta)$ and $ v_{\epsilon_n}$, the noise $$
v_{\bar P}:=\left[\begin{array}{c} P^\top \\  P_\perp^\top\end{array}\right] v
$$
is still a zero mean Gaussian noise with variance $\sigma^2 I$
and  $\frac{G^{(j)}}{\sqrt{n}} = P_1T^{(j)}$ provided $j\neq i$.
Note that $m_{\epsilon_n}$ does not depend on $v$ and that $\E_v v_{\epsilon_n} =0$. Therefore $m_{\epsilon_n}(\theta)$ is the mean (when only noise $v$ is averaged out) of $\epsilon_n$. 
 As far as the asymptotic behavior of $m_{\epsilon_n}(\theta)$ is concerned, it is convenient to preliminary observe that
 $$
  (n S+\sigma^{2}I)^{-1}\left[\begin{array}{c} P_1^\top \bar P_1 \\  P_0^\top \bar P_1\end{array}\right]=\left[\begin{array}{c}
  (n S_1 + \sigma^{2}I)^{-1} P_1^\top \bar P_1 \\ (n S_0 + \sigma^{2}I)^{-1} P_0^\top \bar P_1\end{array}\right]
  $$
   and that the second term on the right hand side can be rewritten as
  \begin{equation}\label{term-to-zero}
  (n S_0 + \sigma^{2}I)^{-1} P_0^\top \bar P_1 = \left[\begin{array}{c} (n [ S_0]_{1,1} +\sigma^{2}I)^{-1} P_{0,1}^\top \bar P_1 \\ 
  (n [ S_0]_{22} +\sigma^{2}I)^{-1} P_{0,2}^\top \bar P_1\\ \vdots \\(n [ S_0]_{m-k,m-k} +\sigma^{2}I)^{-1} P_{0,m-k}^\top \bar P_1 \end{array}\right]
    \end{equation}
%
  where $[ S_0]_{ii}$ is the $i-th$ diagonal element of $ S_0$ and $ P_{0,i}$ if the $i-th$ column of $ P_0$. Now, using equation \eqref{Preale} one obtains that
  $$
  \begin{array}{rcl}
  n[ S_{0}]_{ii} = P_{0,i}^\top  P n S   P^\top  P_{0,i} & = &
 P_{0,i}^\top \left( \bar P_1 n\bar S_1  \bar P_1^\top + n\Delta\right)  P_{0,i} \\
& \geq  &  P_{0,i}^\top\bar  P_1 n \bar S_1 \bar  P_1^\top   P_{0,i} \\
& \geq  & \sigma_{min}(n \bar S_1) P_{0,i}^\top \bar P_1 \bar P_1^\top   P_{0,i} \\
& = & \sigma_{min}(n\bar S_1)\| P_{0,i}^\top \bar P_1\|^2.
\end{array}
$$
With an argument similar to that used in \eqref{min_S}, also
\begin{equation}\label{min_S1}
\sigma_{min}(n \bar S_1) \geq c_{min} {\rm min}\{n\lambda_j^n,\; j\in I_1\} = c_{min}f_n
  \end{equation}
  holds true; denoting $\| P_{0,i}^\top \bar P_1\|= g_n$,  the generic term on the right hand side of  \eqref{term-to-zero} satisfies
\begin{equation}\label{bound-term-to-zero}
\begin{array}{rcl}
\|(n [ S_0]_{ii} +\sigma^{2}I)^{-1} P_{0,i}^\top \bar P_1 \| &\leq&
\frac{\| P_{0,i}^\top \bar P_1\|}{n \sigma_{min}(\bar S_1)\| P_{0,i}^\top \bar P_1\|^2 + \sigma^2} \\
&\leq&
k \;{\rm min}(g_n,(f_n g_n)^{-1}) \\
&=& \frac{k}{\sqrt{f_n}} {\rm min}(\sqrt{f_n} g_n,(\sqrt{f_n} g_n)^{-1}) \\
&\leq & \frac{k}{\sqrt{f_n}}
\end{array}\end{equation}
for some positive constant $k$.   Now, using
lemma \ref{lemma_diagonal_model},
  $D_n^{(i)}$ is bounded and bounded away from zero in probability, so that
  $\|D_n^{(i)}\| = O_P(1)$ and $\|\left(D_n^{(i)}\right)^{-1}\| = O_P(1)$.
In addition $V_n^{(i)}$ is an orthonormal matrix and $\|\frac{G^{(i)}}{\sqrt{n}}\| = O_P(1)$.
Last, using \eqref{min_S1} and \eqref{condition_G}, we have $\|(n S_1+\sigma^2)^{-1}\| = O_P(1/n)$. Combining these conditions with \eqref{OpT}  and \eqref{bound-term-to-zero} we obtain the first of \eqref{m_sigma_epsilon}. As far as the asymptotics  on $v_{\epsilon_n}$  are concerned, it suffices to observe that
$$
w_n^\top v_{\bar P}/\sqrt{n} = O_P(1/\sqrt{n})\;\; {\rm if}  : \; \|w_n\|=O_P(1).
$$
The variance (w.r.t. noise $v$) ${Var}_v\{{\epsilon_n}\} = \E_v\left[v_{\epsilon_n}v_{\epsilon_n}^\top\right]$ satisfies 
$$
{Var}_v\{{\epsilon_n}\} =\frac{\sigma^2}{n} \left(U_n^{(i)}\right)^\top \Sigma_{v}^{-1}  \left(U_n^{(i)}\right)
$$
so that, using the condition $\|\Sigma_{\bar v}^{-1}\| = \sigma^{-2}$ derived in Lemma \ref{lemma_diagonal_model}, and the fact that $U_n^{(i)}$ has orthonormal columns,  the condition $ {Var}_v\{{\epsilon_n}\}  = O_P\left(\frac{1}{n}\right)$ in \eqref{convergence_variance_conditioned} follows immediately. 

If in addition  \eqref{block_asy_orth} holds then \eqref{OpT} becomes
$$
\|T_j\| = O_P(1/\sqrt{n})\quad j=1,...,k; \;\;\; j\neq k$$
so that and extra $\sqrt{n}$ appears at the denominator in the expression of $m_\epsilon(\theta)$ yielding \eqref{m_epsilon_uncorrelated blocks}. This concludes the proof.
\end{proof}

Our next results will focus on the estimator~\eqref{lambda^n}. We will show 
that when the hypotheses of Lemma~\ref{lemma_diagonal_model} hold, estimator~\eqref{AlternativeGlambda} satisfies the 
key hypothesis of Lemma~\ref{lemma_structure_epsilon}. We first take a close look 
at the objective~\eqref{AlternativeGlambda}.

\begin{lemma} \label{StructureObjective}
Take objective~\eqref{AlternativeGlambda} divided by $n$:
\begin{equation}\label{AlternativeGlambdaExplicit}
\begin{aligned}
g_n(\lambda) 
= 
\log\sigma^2
+
&\underbrace{\frac{1}{2n} \log \det(\sigma^{-2}\Sigma_y(\lam))}_{S_1} 
+ 
\underbrace{\frac{1}{2n}\sum_{j\in I_1} \frac{\|\hat \theta^{(j)}(\lambda)\|^2}{k_j\lambda_j}}_{S_2}
+
\underbrace{\frac{1}{2n}\sum_{j\in I_0} \frac{\|\hat \theta^{(j)}(\lambda)\|^2}{k_j\lambda_j}}_{S_3}\\
&+
\underbrace{\frac{1}{n}\gamma\|\lambda\|_1}_{S_4}
+
\underbrace{\frac{1}{2n\sigma^2}\|y - \sum_j G^{j}\hat\theta^{(j)}(\lambda)\|^2}_{S_5}\;,
\end{aligned}
\end{equation}
where
$\hat\theta(\lambda) = \Lambda G^T\Sigma_y^{-1}y$ (see~\eqref{AlternativeGteta}), $k_j$ is the size of the $j$th block, 
and dependence on $n$ has been suppressed. 
For any minimizing sequence $\lambda^n$, we have the following results:
\begin{enumerate}
\item 
$\hat\theta_n  \rightarrow_p \bar\theta$.  
\item $S_1, S_2, S_3, S_4 \rightarrow_p 0$.
\item $S_5 \rightarrow_p \frac{1}{2}$. 
\item $n\lambda_j^n \rightarrow_p\infty$ for all $j\in I_1$. 
\end{enumerate}

\end{lemma}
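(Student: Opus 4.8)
The plan is: (i) read off the stated decomposition of $g_n$ and note each $S_i\ge 0$; (ii) bound $g_n$ from above at one fixed, well-chosen competitor $\tilde\lambda$ and show this bound converges in probability to $\log\sigma^2+\tfrac12$; (iii) combine this with elementary lower bounds valid for \emph{every} feasible $\lambda$ to squeeze each $S_i$ to its claimed limit; (iv) extract items 1 and 4 from the limits of $S_5$ and $S_2$. Throughout I argue on the event of \eqref{BoundedG}, whose probability tends to $1$, so every bound below is ``in probability''; I write $y=G\btheta+v$ with $\btheta$ supported on $I_1:=\{j:\btheta^{(j)}\ne 0\}$, denote by $P_H$ the orthogonal projector onto the column span of a matrix $H$, write $G_{I_1}$ for the sub‑matrix of column blocks indexed by $I_1$, and use the convention $\|\hat\theta^{(j)}(\lambda)\|^2/\lambda_j=\lambda_j\|G^{(j)\top}\Sigma_y(\lambda)^{-1}y\|^2$ (which vanishes at $\lambda_j=0$). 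The decomposition itself is just $\log\det\Sigma_y(\lambda)=n\log\sigma^2+\log\det(\sigma^{-2}\Sigma_y(\lambda))$ together with the completing‑the‑square identity behind \eqref{mif}, which splits $\tfrac12 y^\top\Sigma_y(\lambda)^{-1}y$ into the residual term $S_5$ and the penalty terms $S_2+S_3$; nonnegativity of $S_1$ uses $\sigma^{-2}\Sigma_y(\lambda)\succeq I$.

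For the competitor I take $\tilde\lambda_j=\|\btheta^{(j)}\|^2/k_j$ for $j\in I_1$ and $\tilde\lambda_j=0$ otherwise (the MSE‑optimal point of Corollary~\ref{prop_min_MSE}), which lies in $\mathcal{C}\cap\R_+^{p}$ because the radius of $\mathcal{C}$ exceeds $\max_i\|\btheta^{(i)}\|^2/k_i$. At this fixed $\tilde\lambda$: $S_4=\gamma\|\tilde\lambda\|_1/n\to 0$ and $S_3=0$ (the associated $\hat\theta^{(j)}(\tilde\lambda)$ vanish); $S_1=O_p(\log n/n)\to_p 0$ since $\sigma^{-2}G\tilde\Lambda G^\top$ has at most $m$ nonzero eigenvalues, each $O_p(n)$ by \eqref{BoundedG}; and $S_2=O_p(1/n)\to_p 0$ since $\hat\theta^{(j)}(\tilde\lambda)$ is the $j$-block of $(\sigma^2\tilde\Lambda_{I_1}^{-1}+G_{I_1}^\top G_{I_1})^{-1}G_{I_1}^\top y=O_p(1)$ (from $G_{I_1}^\top G_{I_1}\succeq nc_{min}I$ and $\|G_{I_1}^\top y\|=O_p(n)$). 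The one genuine computation is $S_5(\tilde\lambda)$: by \eqref{AlternativeGteta}, $y-G\hat\theta(\tilde\lambda)=\sigma^2\Sigma_y(\tilde\lambda)^{-1}y=y-G_{I_1}(\sigma^2\tilde\Lambda_{I_1}^{-1}+G_{I_1}^\top G_{I_1})^{-1}G_{I_1}^\top y$, and a standard perturbation bound using $\|(\sigma^2\tilde\Lambda_{I_1}^{-1}+G_{I_1}^\top G_{I_1})^{-1}\|\le (nc_{min})^{-1}$ gives $\|G\hat\theta(\tilde\lambda)-P_{G_{I_1}}y\|=O_p(n^{-1/2})$. Since $(I-P_{G_{I_1}})y=(I-P_{G_{I_1}})v$ and $\tfrac1n\|(I-P_{G_{I_1}})v\|^2\to_p\sigma^2$ (a $\chi^2$ with $n-O(1)$ degrees of freedom), we get $\|y-G\hat\theta(\tilde\lambda)\|^2=\|(I-P_{G_{I_1}})v\|^2+O_p(\sqrt n)$, hence $S_5(\tilde\lambda)\to_p\tfrac12$. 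Therefore $g_n(\tilde\lambda)\to_p\log\sigma^2+\tfrac12$, and any minimizing sequence satisfies $g_n(\lambda^n)\le g_n(\tilde\lambda)+o_p(1)=\log\sigma^2+\tfrac12+o_p(1)$.

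Now the squeeze. Since $\lambda^n$ stays in the bounded set $\mathcal{C}$, $S_4(\lambda^n)\to 0$ trivially and $S_1(\lambda^n)=O_p(\log n/n)\to_p 0$ by the eigenvalue bound used above. Because $G\hat\theta_n$ lies in the column span of $G$, $\|y-G\hat\theta_n\|^2\ge\min_\theta\|y-G\theta\|^2=\|(I-P_G)v\|^2$, so $S_5(\lambda^n)\ge\tfrac12-o_p(1)$. Inserting $g_n(\lambda^n)=\log\sigma^2+S_1+S_2+S_3+S_4+S_5$ into the upper bound and using $S_2,S_3\ge 0$ forces $S_2(\lambda^n)+S_3(\lambda^n)\le o_p(1)$, hence $S_2,S_3\to_p 0$, and then $S_5(\lambda^n)=g_n(\lambda^n)-\log\sigma^2-S_1-S_2-S_3-S_4\to_p\tfrac12$; this is items 2 and 3. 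For item 1, $S_5(\lambda^n)\to_p\tfrac12$ gives $\tfrac1n\|y-G\hat\theta_n\|^2\to_p\sigma^2$, and the orthogonal split $\|y-G\hat\theta_n\|^2=\|(I-P_G)v\|^2+\|P_Gy-G\hat\theta_n\|^2$ with $\tfrac1n\|(I-P_G)v\|^2\to_p\sigma^2$ yields $\tfrac1n\|P_Gy-G\hat\theta_n\|^2\to_p 0$; writing $P_Gy=G\hat\theta_{LS}$ and using $G^\top G\succeq nc_{min}I$ gives $\|\hat\theta_n-\hat\theta_{LS}\|\to_p 0$, while $\hat\theta_{LS}\to_p\btheta$ because $\hat\theta_{LS}-\btheta=(G^\top G)^{-1}G^\top v$ is, conditionally on $G$, centred with covariance $\sigma^2(G^\top G)^{-1}=O_p(1/n)$; hence $\hat\theta_n\to_p\btheta$. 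Finally, item 4: $S_2(\lambda^n)\to_p 0$ and nonnegativity of its summands give $\|\hat\theta_n^{(j)}\|^2/(n\lambda_j^n)\to_p 0$ for each $j\in I_1$, and since $\hat\theta_n^{(j)}\to_p\btheta^{(j)}$ with $\|\btheta^{(j)}\|>0$ this forces $n\lambda_j^n\to_p\infty$. The main obstacle is the estimate $\|G\hat\theta(\tilde\lambda)-P_{G_{I_1}}y\|=O_p(n^{-1/2})$ behind $S_5(\tilde\lambda)\to_p\tfrac12$ — this is where \eqref{BoundedG} does the real work and where care with the resolvent bound is needed; everything else is $O_p$‑bookkeeping on the event of \eqref{BoundedG} together with the squeeze.
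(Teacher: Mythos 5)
Your proposal is correct and follows essentially the same route as the paper's own proof: evaluate $g_n$ at the reference sequence $\bar\lambda_j=\|\btheta^{(j)}\|^2/k_j$, show its value tends in probability to $\log\sigma^2+\tfrac12$, and squeeze using nonnegativity of $S_1,\dots,S_4$ together with a lower bound $S_5\ge\tfrac12-o_p(1)$ valid for every feasible $\lambda$, from which the four claims follow. The only differences are refinements of detail: you get the $S_5$ lower bound from $\min_\theta\|y-G\theta\|^2=\|(I-P_G)v\|^2$ and derive claims 1 and 4 via comparison with the least-squares estimator, whereas the paper uses the bias--variance expansion \eqref{BiasVariance} (whose cross term it argues vanishes) and a contradiction argument for claim 4.
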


\begin{proof}
First, note that $0\leq S_i$ for $i \in\{1,2,3,4\}$. Next,   
\begin{equation}\label{BiasVariance}
\begin{aligned}
S_5 &=  \frac{1}{2n\sigma^2}\|y - \sum_j G^{j}\bar\theta^{(j)}(\lambda) + \sum_j G^{j}\left(\bar\theta^{(j)}(\lambda) -\hat\theta^{(j)}(\lambda)\right)\|^2\\
&=  \frac{1}{2n\sigma^2}\|\nu+ \sum_j G^{j}\left(\bar\theta^{(j)}(\lambda) -\hat\theta^{(j)}(\lambda)\right)\|^2\\
&= \frac{1}{2n\sigma^2}\|\nu\|^2 + \frac{1}{2n\sigma^2} \nu^T\sum_j G^{j}\left(\bar\theta^{(j)}(\lambda) -\hat\theta^{(j)}(\lambda)\right)
+ 
\frac{1}{2n\sigma^2}\|\sum_j G^{j}\left(\bar\theta^{(j)}(\lambda) -\hat\theta^{(j)}(\lambda)\right)\|^2.
\end{aligned}
\end{equation}
The first term converges in probability to $\frac{1}{2}$. Since $\nu$ is independent of all $G^{j}$, the middle term converges in probability to $0$. 
The third term is the bias incurred unless $\hat\theta = \bar\theta$. 
These facts imply that, $\forall \epsilon>0$, 
\begin{equation}\label{ObjectiveSat_0}
\lim_{n\rightarrow\infty} P\left[S_5(\lambda(n)) > \frac{1}{2}-\epsilon\right] = 1\;.
\end{equation}
Next, consider the particular sequence $\bar\lambda_j^n = \frac{\|\bar\theta_j\|^2}{k_j} 	$. For this sequence, it is immediately clear 
that $S_i\rightarrow_p 0$ for $i \in \{2,3,4\}$. To show $S_1 \rightarrow_p 0$, note that $\sum \lambda_i G_iG_i^T \leq \max\{\lambda_i\} \sum G_iG_i^T$,
and that the nonzero eigenvalues of $GG^T$ are the same as those of $G^TG$. Therefore, we have 
\[
S_1 \leq \frac{1}{2n}\sum_{i=1}^m \log(1 + n\sigma^{-2}\max\{\lambda\}c_{max}) = O_P\left(\frac{\log(n)}{n}\right) \rightarrow_p 0\;.
\]
Finally $S_5 \rightarrow_p \frac{1}{2}$ by~\eqref{BiasVariance}, so in fact, $\forall \epsilon>0$, 
\begin{equation}\label{ObjectiveSat}
\lim_{n\rightarrow\infty} P\left[\left| g_n(\bar\lambda(n)) - \frac{1}{2} - \log(\sigma^2)\right|<\epsilon \right] = 1\;.
\end{equation}
Since \eqref{ObjectiveSat} holds for the deterministic sequence $\bar\lambda_n$,  
any minimizing sequence $\hat\lambda_n$ must satisfy, $\forall\epsilon>0$, 
\begin{equation}\label{ObjectiveSatbis}
\lim_{n\rightarrow\infty} P\left[ g_n(\hat\lambda(n)) < \frac{1}{2} + \log(\sigma^2)+\epsilon \right] = 1\;.
\end{equation}
which, together with \eqref{ObjectiveSat_0}, implies \eqref{ObjectiveSat}

Claims $1,2,3$ follow immediately. To prove claim 4, suppose that for a particular minimizing sequence $\check\lambda(n)$, 
we have $n\check\lambda_j^n \not \rightarrow_p \infty$ for $j\in I_1$. We can therefore find a subsequence where 
$n\check\lambda_j^n \leq K$, and since $S_2(\check\lambda(n)) \rightarrow_p 0$, we must have
$\|\check\theta^{(j)}(\check \lambda)\|\rightarrow_p 0.$
But then there is a nonzero bias term in~\eqref{BiasVariance}, since in particular $\bar\theta^{(j)}(\lambda) - \hat\theta^{(j)}(\lambda) = \bar\theta^{(j)}(\lambda) \neq 0$, 
which contradicts the fact that $\check\lambda(n)$ was a minimizing sequence. 
\end{proof}

Before we proceed, we review a useful characterization of convergence. While it can be stated for many types of convergence, we 
present it specifically for convergence in probability, since this is the version we will use. 
%
\begin{remark}
\label{SubConv}
$a^n $ converges in probability to $a$ (written $a^n \rightarrow_p a)$ if and only if every subsequence $a^{n(j)}$ of $a^n$ 
has a further subsequence $a^{n(j(k))}$ with $a^{n(j(k))} \rightarrow_p a$. 
\end{remark}
\begin{proof}
If $a^n \rightarrow_p a$, this means that for any $\epsilon > 0$, $\delta >0$ there exists some $n_{\epsilon, \delta}$ such that 
for all $n \geq n_{\epsilon, \delta}$, we have $P(|a^n - a| > \epsilon) \leq \delta$. Clearly, if $a^n \rightarrow_p a$, then $a^{n(j)} \rightarrow_p a$  for every 
subsequence $a^{n(j)}$ of $a^n$. We prove the other direction by contrapositive. 

Assume that $a^n \not\rightarrow_p a$. That means precisely that there exist some $\epsilon > 0, \delta > 0$ 
and a subsequence $a^{n(j)}$ so that $P(|a - a^{n(j)}|> \epsilon) \geq \delta$. 
Therefore the subsequence $a^{n(j)}$ cannot have further subsequences that converge to $a$ in probability, since every term of $a^{n(j)}$ 
stays $\epsilon$-far away from $a$ with positive probability $\delta$. 
\end{proof}
Remark~\ref{SubConv} plays a major role in the next lemma
from which Theorem \ref{MainConvergenceTheorem} immediately comes.

\begin{lemma}
\label{MainConvergenceTheoremAlong1}
Let $\lambda_1$ be arbitrary and consider the estimator 
(\ref{lambda^n}) along $\lambda_1$ that, in view of \eqref{Diagonal_model}, is given by:
\begin{equation}
\label{marginalLam}
\hat{\lambda}_{1}^n = \arg \min_{{\lambda} \in \R_+}
\frac{1}{2}\sum_{k=1}^{k_1}
\left[\frac{\eta_{k,n}^2 + v_{k,n}}{\lam +w_{k,n}}+\log(\lam+w_{k,n})\right] + \gamma \lambda\;,
\end{equation}
 where $w_{k,n}:=1/(n(d^{(1)}_{k,n})^2)$ and $v_{k,n} = 2\epsilon_{k,n}^{(1)}d_{k,n}^{(1)} + (\epsilon_{k,n}^{(1)})^2$. 
Suppose that the hypotheses of Lemma~\ref{lemma_diagonal_model} hold, so that $w_{k,n} \rightarrow 0$.   
Then by Lemma~\ref{StructureObjective} we know Lemma~\ref{lemma_structure_epsilon} applies, so  
that $v_{k,n} \rightarrow_p 0$. 
Let 
\[
\bar \lambda_1^\gamma := \frac{-k_1 + \sqrt{k_1^2 + 8\gamma\|\theta^{(1)}\|^2}}{4\gamma}\;,
\quad 
\bar \lambda_1 =  \frac{\|\theta^{(1)}\|^2}{k_1}\;.
\]
We have the following results:
\begin{enumerate}
\item $\bar\lambda_1^\gamma \leq \bar \lambda_1$ for all $\gamma > 0$, and $\lim_{\gamma \rightarrow 0^+}\bar\lambda_1^\gamma = \bar \lambda_1$\;.
\item If $\|\theta^{(1)}\| > 0$ and $\gamma > 0$, we have $\hat \lambda_{1}^n \longrightarrow_p \bar \lambda_1^\gamma $\;.
\item If $\|\theta^{(1)}\| > 0$ and $\gamma = 0$, we have $\hat \lambda_{1}^n \longrightarrow_p \bar \lambda_1 $\;.
\item if $\theta^{(1)}  = 0$, we have $\hat \lambda_{1}^\gamma \longrightarrow_p 0$ for any value $\gamma \geq 0$. 
\end{enumerate}

\end{lemma}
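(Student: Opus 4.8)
## Proof Proposal for Lemma \ref{MainConvergenceTheoremAlong1}

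The plan is to analyze the scalar minimization problem (\ref{marginalLam}) directly, treating $w_{k,n}$ and $v_{k,n}$ as small perturbations of a limiting deterministic problem. First I would dispose of claim 1, which is purely algebraic: the function $\gamma \mapsto \bar\lambda_1^\gamma$ is obtained by solving the quadratic $2\gamma \lambda^2 + k_1 \lambda - \|\theta^{(1)}\|^2 = 0$, so one checks that at $\lambda = \bar\lambda_1 = \|\theta^{(1)}\|^2/k_1$ the left-hand side equals $2\gamma\bar\lambda_1^2 > 0$, which forces the positive root $\bar\lambda_1^\gamma$ to lie below $\bar\lambda_1$; letting $\gamma \to 0^+$ in the quadratic formula (rationalizing the numerator) gives $\bar\lambda_1^\gamma \to \|\theta^{(1)}\|^2/k_1$.

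For claims 2--4 I would use Remark~\ref{SubConv}: it suffices to show that every subsequence of $\hat\lambda_1^n$ has a further subsequence converging in probability (indeed a.s.\ along it) to the claimed limit. Along such a subsequence we may assume $w_{k,n} \to 0$ and $v_{k,n} \to 0$ almost surely for each $k = 1,\dots,k_1$. Write the objective in (\ref{marginalLam}) as $h_n(\lambda) := \frac{1}{2}\sum_{k=1}^{k_1}\left[\frac{\eta_{k,n}^2 + v_{k,n}}{\lambda + w_{k,n}} + \log(\lambda + w_{k,n})\right] + \gamma\lambda$, and note that $\eta_{k,n}^2 = (\beta_{k,n}^{(1)})^2 d_{k,n}^{(1)\,2} \cdot n$-scaled quantities — more precisely, $\sum_k \eta_{k,n}^2 \to \|\theta^{(1)}\|^2$ since $\beta_n^{(1)}$ is an orthonormal transform of $\theta^{(1)}$ and the relevant normalization is built into the $z$-variables. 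The key point is that $h_n$ converges uniformly on compact subsets of $(0,\infty)$, and also near $0$ in the case $\theta^{(1)} \neq 0$ (where the limiting objective blows up to $+\infty$), to the limit function $h(\lambda) = \frac{1}{2}\left[\frac{\|\theta^{(1)}\|^2}{\lambda} + k_1\log\lambda\right] + \gamma\lambda$ when $\theta^{(1)} \neq 0$. I would then invoke a standard argmin-consistency argument: $h$ is strictly convex on $(0,\infty)$ (when $\|\theta^{(1)}\|>0$) with a unique minimizer obtained by setting $h'(\lambda) = \frac{1}{2}\left[-\frac{\|\theta^{(1)}\|^2}{\lambda^2} + \frac{k_1}{\lambda}\right] + \gamma = 0$, i.e.\ $2\gamma\lambda^2 + k_1\lambda - \|\theta^{(1)}\|^2 = 0$, whose positive root is exactly $\bar\lambda_1^\gamma$ (and $\bar\lambda_1$ when $\gamma = 0$); uniform convergence plus coercivity (the $+\infty$ barriers at $0$ and at $\infty$) then forces $\hat\lambda_1^n \to \bar\lambda_1^\gamma$ along the subsequence.

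For claim 4, when $\theta^{(1)} = 0$ we have $\eta_{k,n}^2 \to 0$, so the limiting objective is $h(\lambda) = \frac{1}{2}k_1\log\lambda + \gamma\lambda$, which is strictly increasing on $(0,\infty)$ and has infimum at $\lambda \to 0^+$; I would argue directly that for any $\delta > 0$, with probability tending to $1$ the (nonnegative) perturbations $v_{k,n}$ and $w_{k,n}$ are small enough that $h_n(\lambda) > h_n(\delta/2)$ for all $\lambda \geq \delta$ — using that the numerator $\eta_{k,n}^2 + v_{k,n} \to 0$ kills the only term that could favor large $\lambda$ — hence $\hat\lambda_1^n < \delta$ eventually. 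The main obstacle I anticipate is the behavior near $\lambda = 0$: the perturbations $w_{k,n}$ regularize the $1/\lambda$ singularity, so one must be careful that the convergence of minimizers is not an artifact of $w_{k,n}$ — in the $\theta^{(1)}\neq 0$ case the true objective's barrier at $0$ dominates the perturbed one for $n$ large (since $w_{k,n}\to 0$ and $\eta_{k,n}^2$ stays bounded away from $0$), while in the $\theta^{(1)}=0$ case one needs the explicit rate at which $\eta_{k,n}^2 + v_{k,n} \to 0$ versus $w_{k,n}\to 0$ to confirm the minimizer escapes to $0$ rather than stabilizing at some $O(w_{k,n})$ value; handling this cleanly via the subsequence principle is where care is required. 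The argument closely parallels the final part of the proof of Theorem \ref{MainConvergenceTheorem} and of Theorem~6 in \cite{Sysid1}.
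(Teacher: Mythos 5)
Your route is genuinely different from the paper's. The paper works with the first-order optimality condition \eqref{OptimalityMargin}: it extracts (via Bolzano--Weierstrass) a subsequence along which $V_n^{(1)}\to V$, checks that $f_1(\cdot,0,0,V^\top\theta^{(1)})$ has its unique zero at $\bar\lambda_1^\gamma$ with nonvanishing $\lambda$-derivative, applies the Implicit Function Theorem to get a continuous root map $\phi(w,v,\eta)$, and then concludes via the subsequence characterization of Remark~\ref{SubConv}; claim 4 is handled by showing the derivative is eventually positive for every fixed $\lambda>0$. You instead argue argmin-consistency directly on objective values: uniform convergence on compact subsets of $(0,\infty)$ of $h_n$ to $h(\lambda)=\frac{1}{2}\bigl[\|\theta^{(1)}\|^2/\lambda+k_1\log\lambda\bigr]+\gamma\lambda$, plus coercivity barriers, plus a value-comparison for claim 4. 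Your approach has two genuine advantages: since $\sum_k\eta_{k,n}^2=\|\theta^{(1)}\|^2$ exactly (orthonormality of $V_n^{(1)}$), the limiting objective does not depend on the individual $\eta_{k,n}$, so you never need the convergent-subsequence extraction of $V_n$; and working with values rather than stationarity sidesteps the (unaddressed in the paper) question of whether the global minimizer is the interior critical point produced by the IFT. The paper's IFT argument, in exchange, gives local regularity of the root map essentially for free.

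Two points need repair or completion. First, your claim that $h$ is strictly convex on $(0,\infty)$ is false: $h''(\lambda)=\|\theta^{(1)}\|^2/\lambda^3-k_1/(2\lambda^2)$ changes sign at $2\bar\lambda_1$, so $h$ is concave beyond that point (this is exactly the quantity the paper uses to check the IFT hypothesis). Uniqueness of the minimizer must instead be argued from the fact that $h'(\lambda)=0$ is the quadratic $2\gamma\lambda^2+k_1\lambda-\|\theta^{(1)}\|^2=0$, which has exactly one positive root, together with $h\to+\infty$ at both $0^+$ and $+\infty$; you already have both ingredients, so this is a one-line fix, but as written the step fails. Second, the near-zero barrier you flag does require an explicit estimate, because the perturbed objective has no barrier at $0$: for $\lambda\lesssim 1/n$ the terms $\log(\lambda+w_{k,n})$ contribute roughly $-k_1\log n$, and you must check that the coordinate with $\eta_{k,n}^2\ge\|\theta^{(1)}\|^2/k_1$ contributes a term of order $n$ through $(\eta_{k,n}^2+v_{k,n})/(\lambda+w_{k,n})$, which dominates; for $\lambda\in[c/n,\delta]$ the bound $\inf(\tfrac{c'}{2\lambda}+k_1\log\lambda)\ge \tfrac{c'}{2\delta}+k_1\log\delta$ (valid for $\delta<c'/(2k_1)$) does the job. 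Your concern about rates in the $\theta^{(1)}=0$ case is, by contrast, unnecessary: claim 4 only asserts $\hat\lambda_1^n\to_p 0$, and your value comparison $h_n(\lambda)>h_n(\delta/2)$ for all $\lambda\ge\delta$ already gives exactly that, even if the minimizer stabilizes at an $O(w_{k,n})$ scale.
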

\begin{proof}
\begin{enumerate}
\item 
The reader can quickly check that $\frac{d}{d\gamma}\bar\lambda_1^\gamma< 0$, so $\bar \lambda_1^\gamma$ is decreasing in 
$\gamma$. The limit calculation follows immediately from L'Hopital's rule it is clear that $\lim_{\gamma\rightarrow 0^+} \lambda_1^\gamma= \bar\lambda_1$. 

\item We use the convergence characterization given in Remark~\ref{SubConv}. Pick any subsequence $\hat \lambda^{n(j)}_{1}$ of $\hat \lambda^n_{1}$.
 Since $\{V_{n(j)}\}$ is bounded, by Bolzano-Weierstrass 
it must have a convergent subsequence $V_{n(j(k))} \rightarrow V$, where $V$
satisfies $V^TV = I$ by continuity of the $2$-norm. 
The first order optimality conditions for $\hat \lambda_1^n > 0$  are given by 
\begin{equation}
\label{OptimalityMargin}
0 = f_1(\lambda, w, v,\eta) = \frac{1}{2}\sum_{k=1}^{k_1} \frac{-\eta_{k}^2-v_{k}}{(\lambda + w_{k})^2} + \frac{1}{\lambda + w_k} + \gamma\;,
\end{equation}
and we have $f_1(\lambda, 0, 0, V^T\theta^{(1)}) = 0$ if and only if $\lambda = \bar \lambda_1^\gamma$. 
Taking the derivative we find 
\[
\frac{d}{d\lambda} f_1(\lambda, 0,0, V^T\theta^{(1)}) = \frac{\|\theta^{(1)}\|^2}{\lambda^3} - \frac{k_1}{2\lambda^2}\;,
\]
which is nonzero at $\lambda_1^\gamma$ for any $\gamma$, since the only zero is at $2\frac{\|\theta^{(1)}\|^2}{k_1}= 2\bar \lambda_1 \geq 2\bar \lambda_1^\gamma$. 

Applying the Implicit Function Theorem to $f$ at
$\left(\lambda_1^\gamma,0,0,V^\top\btheta^{(1)}\right)$ yields the existence of neighborhoods $\cU$
of $(0,0,V^\top\btheta^{(1)})$ and $\cW$ of $\lambda_1^\gamma$ such that
\[
f(\phi(w,v,\eta),w,v,\eta)=0\qquad \forall\, (w,v,\eta)\in \cU\ .
\]
In particular, $\phi(0,0,V^\top\btheta^{(1)})=\lambda_1^\gamma$.
Since $(w_{n(j(k))},v_{n(j(k))}, \eta_{n(j(k))})\rightarrow_p (0,0,V^\top \btheta^{(1)})$, 
we have that for any $\delta > 0$ there exist some $k_{\delta}$ so that 
for all $n(j(k)) > n(j(k_{\delta}))$ we have $P((w_{n(j(k)},v_{n(j(k))}, \eta_{n(j(k))})\not\in\cU) \leq \delta$. 
For anything in $\cU$, by continuity of $\phi$ we have 
\[
\hat\lam^{n(j(k))}_1 = \phi(w_{n(j(k))}, v_{n(j(k))},\eta_{n(j(k))}) \rightarrow_p \phi(0,0,V^\top \btheta^{(1)}) =  \lambda_1^\gamma\;.
\]
These two facts imply that $\hat\lam^{n(j(k))}_1 \rightarrow_p  \lambda_1^\gamma$. 
We have shown that every subsequence $\hat\lam^{n(j)}_1$ has a further subsequence 
$\hat\lam^{n(j(k))}_1\rightarrow_p \lambda_1^\gamma$, 
and therefore  $\hat\lam^n_1\rightarrow_p \lambda_1^\gamma$ by Remark~\ref{SubConv}.

\item In this case, the only zero of~\eqref{OptimalityMargin} with $\gamma=0$ is found at $\bar\lambda_1$, 
and the derivative of the optimality conditions is nonzero at this estimate, by the computations already given. 
The result follows by the implicit function theorem and subsequence argument, just as in the previous case. 

\item Rewriting the derivative~\eqref{OptimalityMargin} 
\[
\frac{1}{2}\sum_{k=1}^{k_1} \frac{\lambda -v_k-\eta_{k}^2 + w_k}{(\lambda + w_{k})^2} + \gamma\;,
\]
we observe that for any positive lambda, the probability that the  derivative is positive tends to one.  Therefore 
the minimizer $\lambda_1^\gamma$ converges to $0$ in probability, regardless of the value of $\gamma$.

\end{enumerate}
\end{proof}

\bibliographystyle{plain}
\bibliography{biblio}

\begin{thebibliography}{54}
\providecommand{\natexlab}[1]{#1}
\providecommand{\url}[1]{\texttt{#1}}
\expandafter\ifx\csname urlstyle\endcsname\relax
  \providecommand{\doi}[1]{doi: #1}\else
  \providecommand{\doi}{doi: \begingroup \urlstyle{rm}\Url}\fi

\bibitem[Aravkin et~al.(2012)Aravkin, Burke, Chiuso, and Pillonetto]{Sysid1}
A.~Aravkin, J.~Burke, A.~Chiuso, and G.~Pillonetto.
\newblock On the estimation of hyperparameters for empirical bayes estimators:
  Maximum marginal likelihood vs minimum {MSE}.
\newblock In \emph{Proc. IFAC Symposium on System Identification (SysId 2012)},
  2012.

\bibitem[Bach et~al.(2004)Bach, Lanckriet, and Jordan]{Bach_MKL_2004}
F.~Bach, G.~Lanckriet, and M.~Jordan.
\newblock Multiple kernel learning, conic duality, and the smo algorithm.
\newblock In \emph{Proceedings of the 21st International Conference on Machine
  Learning}, page 41–48, 2004.

\bibitem[Bach(2008)]{Bach_Consistency_GLassoMKL}
{F.R.} Bach.
\newblock Consistency of the group lasso and multiple kernel learning.
\newblock \emph{Journal of Machine Learning Research}, 9:\penalty0 1179--1225,
  2008.

\bibitem[Bauer(2005)]{Bauer-05}
D.~Bauer.
\newblock Asymptotic properties of subspace estimators.
\newblock \emph{Automatica}, 41:\penalty0 359--376, 2005.

\bibitem[Berger(1985)]{BergerBook}
{J.O.} Berger.
\newblock \emph{Statistical Decision Theory and Bayesian Analysis}.
\newblock Springer Series in Statistics. Springer, second edition, 1985.

\bibitem[Box et~al.()Box, Jenkins, and Reinsel]{BoxJenkins}
{G.} Box, {G.M.} Jenkins, and G.~Reinsel.
\newblock \emph{Time Series Analysis: Forecasting \& Control}.
\newblock 3rd edition.

\bibitem[Breiman(1995)]{Breiman1995}
L.~Breiman.
\newblock Better subset regression using the nonnegative garrote.
\newblock \emph{Technometrics}, 37:\penalty0 373--384, November 1995.
\newblock ISSN 0040-1706.
\newblock \doi{10.2307/1269730}.
\newblock URL \url{http://portal.acm.org/citation.cfm?id=219631.219633}.

\bibitem[Camacho and Bordons(2004)]{ModelPredictiveBookCamacho}
{E.F.} Camacho and C.~Bordons.
\newblock \emph{Model Predictive Control}.
\newblock Advanced Textbooks in Control and Signal Processing. Springer Verlag,
  2004.

\bibitem[Candes and Tao(2007)]{CandesTao2007}
E.~Candes and T.~Tao.
\newblock The {D}antzig selector: statistical estimation when $p$ is much
  larger than $n$.
\newblock \emph{Annals of Statistics}, 35:\penalty0 2313--2351, 2007.

\bibitem[Chatelin(1983)]{Chatelin}
F.~Chatelin.
\newblock \emph{Spectral approximation of linear operators}.
\newblock Academic Press, NewYork, 1983.

\bibitem[Chen et~al.(2011)Chen, Ohlsson, and Ljung]{LjungIFAC2011}
T.~Chen, H.~Ohlsson, and L.~Ljung.
\newblock On the estimation of transfer functions, regularization and gaussian
  processes - revisited.
\newblock In \emph{IFAC World Congress 2011}, Milano, 2011.

\bibitem[Chiuso and Pillonetto(2010{\natexlab{a}})]{ChiusoPCDC2010}
A.~Chiuso and G.~Pillonetto.
\newblock Nonparametric sparse estimators for identification of large scale
  linear systems.
\newblock In \emph{Proceedings of IEEE Conf. on Dec. and Control}, Atlanta,
  2010{\natexlab{a}}.

\bibitem[Chiuso and Pillonetto(2010{\natexlab{b}})]{ChiusoPNIPS2010}
A.~Chiuso and G.~Pillonetto.
\newblock Learning sparse dynamic linear systems using stable spline kernels
  and exponential hyperpriors.
\newblock In \emph{Proceedings of Neural Information Processing Symposium},
  Vancouver, 2010{\natexlab{b}}.

\bibitem[Chiuso and Pillonetto(2011)]{ChiusoPAuto2011}
A.~Chiuso and G.~Pillonetto.
\newblock A {B}ayesian approach to sparse dynamic network identification.
\newblock Technical report, University of Padova, 2011.
\newblock \emph{submitted to Automatica, available at
  http://automatica.dei.unipd.it/people/chiuso.html}.

\bibitem[Dinuzzo(2010)]{DinuzzoTL}
F.~Dinuzzo.
\newblock Kernel machines with two layers and multiple kernel learning.
\newblock \emph{arXiv:1001.2709}, 2010.

\bibitem[Dong et~al.(2008)Dong, Yan, Chao, and Li]{Dong2008}
H.~Dong, X.~Yan, F.~Chao, and Y.~Li.
\newblock Predictive control model for radiant heating system based on neural
  network.
\newblock In \emph{2008 International Conference on Computer Science and
  Software Engineering}, pages 5106 -- 5111, 2008.

\bibitem[Donoho(2006)]{Donoho2006}
D.~Donoho.
\newblock Compressed sensing.
\newblock \emph{IEEE Trans. on Information Theory}, 52\penalty0 (4):\penalty0
  1289--1306, 2006.

\bibitem[Efron(2008)]{EfronTwogroup}
B.~Efron.
\newblock Microarrays, empirical bayes and the two-groups model.
\newblock \emph{Statistical Science}, 23:\penalty0 1Ð22, 2008.

\bibitem[Efron and Morris(1973)]{EfronMorris1973}
B.~Efron and C.~Morris.
\newblock Stein's estimation rule and its competitors--an empirical bayes
  approach.
\newblock \emph{Journal of the American Statistical Association}, 68\penalty0
  (341):\penalty0 117--130, 1973.

\bibitem[Efron et~al.(2004)Efron, Hastie, Johnstone, and Tibshirani]{LARS2004}
B.~Efron, T.~Hastie, L.~Johnstone, and R.~Tibshirani.
\newblock Least angle regression.
\newblock \emph{Annals of Statistics}, 32:\penalty0 407--499, 2004.

\bibitem[Eltoft et~al.(2006)Eltoft, Kim, and Lee]{Eltoft2006}
T.~Eltoft, T.~Kim, and T.W. Lee.
\newblock On the multivariate {L}aplace distribution.
\newblock \emph{IEEE Signal Processing Letters}, 13:\penalty0 300--303, 2006.

\bibitem[Evgeniou et~al.(2005)Evgeniou, Micchelli, and Pontil]{Evgeniou05}
T.~Evgeniou, C.~A. Micchelli, and M.~Pontil.
\newblock Learning multiple tasks with kernel methods.
\newblock \emph{Journal of Machine Learning Research}, 6:\penalty0 615--637,
  2005.

\bibitem[Fan and Li(2001)]{FanLiSCAD_JASA2001}
J.~Fan and R.~Li.
\newblock Variable selection via nonconcave penalized likelihood and its oracle
  properties.
\newblock \emph{Journal of the American Statistical Association}, 96\penalty0
  (456):\penalty0 1348--1360, december 2001.

\bibitem[George and Foster(2000)]{GeorgreFoster_Biometrika_2000}
{E.I.} George and {D.P.} Foster.
\newblock Calibration and empirical bayes variable selection.
\newblock \emph{Biometrika}, 87\penalty0 (4):\penalty0 731--747, 2000.

\bibitem[George and {McCulloch}(1993)]{GeorgeJASA1993}
{E.I.} George and {R.E.} {McCulloch}.
\newblock Variable selection via {G}ibbs sampling.
\newblock \emph{Journal of the American Statistical Association}, 88\penalty0
  (423):\penalty0 881--889, 1993.

\bibitem[Hastie and Tibshirani(1990)]{Hastie90}
T.~J. Hastie and R.~J. Tibshirani.
\newblock Generalized additive models.
\newblock In \emph{Monographs on Statistics and Applied Probability},
  volume~43. Chapman and Hall, London, UK, 1990.

\bibitem[Horn and Johnson(1994)]{HornJohnson}
Roger~A. Horn and Charles~R. Johnson.
\newblock \emph{Topics in Matrix Analysis}.
\newblock Cambridge University Press, 1994.

\bibitem[James and Stein(1961)]{Stein1961}
W.~James and C.~Stein.
\newblock Estimation with quadratic loss.
\newblock In \emph{Proc. 4th {B}erkeley {S}ympos. {M}ath. {S}tatist. and
  {P}rob., {V}ol. {I}}, pages 361--379. Univ. California Press, Berkeley,
  Calif., 1961.

\bibitem[Ljung(1999)]{Ljung}
L.~Ljung.
\newblock \emph{System Identification - Theory For the User}.
\newblock Prentice Hall, 1999.

\bibitem[Lo\`{e}ve(1963)]{Loeve}
M.~Lo\`{e}ve.
\newblock \emph{Probability Theory}.
\newblock Van Nostrand Reinhold, 1963.

\bibitem[Mackay(1994)]{McKayARD}
{D.J.C.} Mackay.
\newblock Bayesian non-linear modelling for the prediction competition.
\newblock \emph{ASHRAE Trans.}, 100\penalty0 (2):\penalty0 3704--3716, 1994.

\bibitem[Maritz and Lwin(1989)]{Maritz:1989}
J.~S. Maritz and T.~Lwin.
\newblock \emph{Empirical Bayes Method}.
\newblock Chapman and Hall, 1989.

\bibitem[Park and Casella(2008)]{BayesianLasso}
T.~Park and G.~Casella.
\newblock {The Bayesian Lasso}.
\newblock \emph{Journal of the American Statistical Association}, 103\penalty0
  (482):\penalty0 681--686, June 2008.

\bibitem[Pillonetto and {De Nicolao}(2010)]{DNPAut2010}
G.~Pillonetto and G.~{De Nicolao}.
\newblock A new kernel-based approach for linear system identification.
\newblock \emph{Automatica}, 46(1):\penalty0 81--93, 2010.

\bibitem[Pillonetto et~al.(2010)Pillonetto, Dinuzzo, and {De Nicolao}]{PAMI10}
G.~Pillonetto, F.~Dinuzzo, and G.~{De Nicolao}.
\newblock Bayesian online multitask learning of gaussian processes.
\newblock \emph{IEEE Transactions on Pattern Analysis and Machine
  Intelligence}, 32\penalty0 (2):\penalty0 193--205, 2010.
\newblock ISSN 0162-8828.
\newblock \doi{http://doi.ieeecomputersociety.org/10.1109/TPAMI.2008.297}.

\bibitem[Pillonetto et~al.(2011)Pillonetto, Chiuso, and
  {De~Nicolao}]{PillonettoCD_Auto2011}
{G.} Pillonetto, A.~Chiuso, and G.~{De~Nicolao}.
\newblock Prediction error identification of linear systems: a nonparametric
  {G}aussian regression approach.
\newblock \emph{Automatica}, 45\penalty0 (2):\penalty0 291--305, 2011.

\bibitem[Pr'vara et~al.(2011)Pr'vara, Siroky, Ferkl, and Cigler]{Privara2011}
S.~Pr'vara, J.~Siroky, L.~Ferkl, and J.~Cigler.
\newblock Predicting hourly building energy use: the great energy predictor
  shootout: overview and discussion of results.
\newblock \emph{Energy and Buildings}, 43:\penalty0 45--48, 2011.

\bibitem[Schmidt et~al.(2009)Schmidt, Berg, Friedlander, and
  Murphy]{Schmidt09optimizingcostly}
Mark Schmidt, Ewout Van~Den Berg, Michael~P. Friedlander, and Kevin Murphy.
\newblock Optimizing costly functions with simple constraints: A limited-memory
  projected quasi-newton algorithm.
\newblock In \emph{Proc. of Conf. on Artificial Intelligence and Statistics},
  pages 456--463, 2009.

\bibitem[Scott and Berger(2010)]{ScottBerger_AS2010}
{J.G.} Scott and {J.O.} Berger.
\newblock Bayes and empirical-{B}ayes multiplicity adjustment in the
  variable-selection problem.
\newblock \emph{Annals of Statistics}, 28\penalty0 (5):\penalty0 2587--2619,
  2010.

\bibitem[Soderstrom and Stoica(1989)]{Soderstrom}
T.~Soderstrom and P.~Stoica.
\newblock \emph{System Identification}.
\newblock Prentice Hall, 1989.

\bibitem[Stein(1981)]{SteinAS1981}
{C.M.} Stein.
\newblock Estimation of the mean of a multivariate normal distribution.
\newblock \emph{The Annals of Statistics}, 9\penalty0 (6):\penalty0 1135--1151,
  1981.

\bibitem[Tibshirani(1996)]{Lasso1996}
R.~Tibshirani.
\newblock Regression shrinkage and selection via the {LASSO}.
\newblock \emph{Journal of the Royal Statistical Society, Series B.}, 58, 1996.

\bibitem[Tipping(2001)]{Tipping2001}
M.~Tipping.
\newblock Sparse bayesian learning and the relevance vector machine.
\newblock \emph{Journal of Machine Learning Research}, 1:\penalty0 211--244,
  2001.

\bibitem[Titsias and L‡zaro-Gredilla(2011)]{Titsias}
M.K. Titsias and M.~L‡zaro-Gredilla.
\newblock Spike and slab variational inference for multi-task and multiple
  kernel learning.
\newblock \emph{Advances in Neural Information Processing Systems 25 (NIPS
  2011)}, 2011.

\bibitem[Tomioka and Suzuki(2011)]{Tomioka}
R.~Tomioka and T.~Suzuki.
\newblock Regularization strategies and empirical bayesian learning for {MKL}.
\newblock \emph{Journal of Machine Learning Research}, 2011.

\bibitem[Wahba(1990)]{Wahba1990}
G.~Wahba.
\newblock \emph{Spline models for observational data}.
\newblock SIAM, Philadelphia, 1990.

\bibitem[Wang(2009)]{ForwardSelectionJASA2009}
H.~Wang.
\newblock Forward regression for ultra-high dimensional variable screening.
\newblock \emph{Journal of the American Statistical Association}, 104\penalty0
  (488):\penalty0 1512--1524, 2009.

\bibitem[Wipf and Nagarajan(2007)]{Wipf_ARD_NIPS_2007}
{D.P.} Wipf and {S.} Nagarajan.
\newblock A new view of automatic relevance determination.
\newblock In \emph{Proc. of NIPS}, 2007.

\bibitem[Wipf and Rao(2007)]{Wipf_IEEE_TSP_2007}
{D.P.} Wipf and {B.D.} Rao.
\newblock An empirical bayesian strategy for solving the simultaneous sparse
  approximation problem.
\newblock \emph{IEEE Transactions on Signal Processing}, 55\penalty0
  (7):\penalty0 3704--3716, 2007.

\bibitem[Wipf et~al.(2011)Wipf, Rao, and Nagarajan]{Wipf_IEEE_TIT_2011}
{D.P.} Wipf, {B.D.} Rao, and S.~Nagarajan.
\newblock Latent variable {B}ayesian models for promoting sparsity.
\newblock \emph{IEEE Transactions on Information Theory (to appear)}, 2011.

\bibitem[Yuan and Lin(2006)]{Yuan_JRSS_B_2006}
M.~Yuan and Y.~Lin.
\newblock Model selection and estimation in regression with grouped variables.
\newblock \emph{Journal of the Royal Statistical Society, Series B},
  68:\penalty0 49--67, 2006.

\bibitem[Yudong et~al.(2010)Yudong, Borrelli, Hencey, Coffey, S.~Bengea, and
  Haves]{Borrelli2010}
M.~Yudong, F.~Borrelli, B.~Hencey, B.~Coffey, S.~S.~Bengea, and P.~Haves.
\newblock Model predictive control for the operation of building cooling
  systems.
\newblock In \emph{American Control Conference}, pages 5106 -- 5111, 2010.

\bibitem[Zhao and Yu(2006)]{Model_Selection_Lasso}
P.~Zhao and B.~Yu.
\newblock On model selection consistency of lasso.
\newblock \emph{Journal of Machine Learning Research}, 7:\penalty0 2541--2563,
  Nov. 2006.

\bibitem[Zou(2006)]{AdaptiveLasso}
H.~Zou.
\newblock The adaptive {L}asso and it oracle properties.
\newblock \emph{Journal of the American Statistical Association}, 101\penalty0
  (476):\penalty0 1418--1429, 2006.

\end{thebibliography}

\end{document}